\documentclass[twoside,11pt]{article}

\PassOptionsToPackage{numbers,square,comma,sort&compress}{natbib}

\usepackage[abbrvbib, preprint]{jmlr2e}

\usepackage{amsmath,amsfonts,bm}

\def\eqref#1{equation~\ref{#1}}
\def\Eqref#1{Equation~\ref{#1}}

\def\1{\bm{1}}

\DeclareMathAlphabet{\mathsfit}{\encodingdefault}{\sfdefault}{m}{sl}
\SetMathAlphabet{\mathsfit}{bold}{\encodingdefault}{\sfdefault}{bx}{n}

\usepackage[capitalise,noabbrev]{cleveref}
\crefname{assumption}{Assumption}{Assumptions}
\usepackage{url}
\usepackage{booktabs}

\crefalias{lemmaapp}{lemma}

\usepackage{enumitem}
\usepackage{float}

\usepackage{thmtools}
\usepackage{thm-restate}

\newtheorem{assumption}{Assumption}

\usepackage{algorithm}
\usepackage{algpseudocode}
\usepackage{xcolor}
\algrenewcommand\algorithmicrequire{\textbf{Input:}}
\algrenewcommand\algorithmicensure{\textbf{Output:}}
\algrenewcommand\algorithmiccomment[1]{\hfill{\color{gray}$\triangleright$~#1}}

\usepackage{adjustbox}
\newcommand{\res}[2]{$#1\,{\scriptstyle \pm #2}$}
\newcommand{\resb}[2]{$\mathbf{#1}\,{\scriptstyle \pm #2}$}
\newcommand{\resu}[2]{$\underline{#1}\,{\scriptstyle \pm #2}$}
\newcommand{\resbu}[2]{$\underline{\mathbf{#1}}\,{\scriptstyle \pm #2}$}
\newcommand{\gain}[2]{$#1$\,{\scriptsize$(#2\%)$}}
\newcommand{\gainb}[2]{$\mathbf{#1}$\,{\scriptsize$(\mathbf{#2\%})$}}

\ShortHeadings{Two-Sample Testing via Path-based Inference}{Two-Sample Testing via Path-based Inference}
\firstpageno{1}

\begin{document}

\title{Two-Sample Testing via Path-based Inference}

\author{\name{Eshant English\textsuperscript{1,3,*}\thanks{Equal contribution, sorted in alphabetical order. Contact email: eshant.english@outlook.co.nz, wei-cheng.lai@hpi.de}}, \name{Wei-Cheng Lai\textsuperscript{3,*}}, \name{Yanfeng Yang\textsuperscript{2,4}}, \name{Kenji Fukumizu\textsuperscript{2}}, \name{Taiji Suzuki}\textsuperscript{1,5}, \name{Christoph Lippert\textsuperscript{3,6}} \\
  \\
  \addr{\textsuperscript{1}The University of Tokyo, Tokyo, Japan} \\
  \addr{\textsuperscript{2}Institute of Statistical Mathematics, Japan} \\
  \addr{\textsuperscript{3}Hasso Plattner Institute for Digital Engineering, Germany} \\
  \addr{\textsuperscript{4}The Graduate University for Advanced Studies, SOKENDAI, Japan} \\
\addr{\textsuperscript{5}RIKEN Centre for Advanced Intelligence Project (AIP), Tokyo, Japan} \\
  \addr{\textsuperscript{6}Hasso Plattner Institute for Digital Health at the Icahn School of Medicine at Mount Sinai, USA} 
}


\maketitle

\begin{abstract}
Modern deep generative models are primarily studied for their ability to generate realistic samples, yet the generative dynamics they learn can also serve as objects of statistical inference.
We develop this idea for two-sample testing, the problem of deciding whether the same distribution generated two finite datasets.
Using stochastic interpolants, we connect both distributions to a shared Gaussian bottleneck, so that each half of the resulting path is a Gaussian channel acting on a single population.
We prove that the null hypothesis holds if and only if the population denoiser, or equivalently, the velocity fields of the two halves, coincide at any single noise level, which amounts to a reflection symmetry of the path about the bottleneck.
Deviations from this symmetry yield a continuum of two-sample witnesses, which we estimate via held-out regression risks on learned denoisers and velocities and aggregate along the path; under an information-theoretic weighting, the aggregated discrepancy equals the Jeffreys divergence between the noise-smoothed distributions.
Calibrating the resulting statistics by permutation yields tests that are valid in finite samples for any trained networks and consistent when the fields are learned accurately.
On a synthetic benchmark and three image benchmarks, the proposed tests improve power over the strongest baseline by up to 33 percentage points at an equal total sample budget, with the best choice of regression representation and path weighting depending on the data modality.
These results show that generative paths provide a principled representation for statistical testing, extending stochastic-interpolant models beyond generation.
\end{abstract}
\vspace{1cm}
\begin{keywords}
  Two-Sample Testing, Generative Modelling, Dynamic Transport
\end{keywords}

\newpage
\tableofcontents
\newpage

\section{Introduction}
\label{sec:intro}

Dynamical-transport generative models \citep{grathwohl_ffjord_2018, bunne_schrodinger_2023}, including diffusion models \citep{song_score-based_2020}, flow matching \citep{lipman_flow_2022} and stochastic interpolants \citep{albergo_stochastic_2025, albergo_building_2022}, are the state of the art for data synthesis.
The underlying idea is simple: given two distributions $\rho_0$ and $\rho_1$, one constructs a continuous dynamical process $(x_t)_{t \in [0,1]}$ such that $x_t \sim \rho_0$ at $t=0$ and $x_t \sim \rho_1$ at $t=1$. The choice of $t \in [0,1]$ is purely a unifying convention; diffusion models, for instance, instead run a fixed noising process over an extended or infinite time horizon to reach a Gaussian. 
Conventions differ across implementations, but in each case, a deep network is trained to approximate the field that generates this process: a velocity field $b(t,x)$ in the deterministic case, or a drift and score in the stochastic case.
Combining this simple modelling principle with the expressivity of deep networks has yielded strong empirical results and enabled applications in high-fidelity computer vision \citep{rombach_high-resolution_2022}, audio and speech synthesis \citep{kong_diffwave_2020}, and computational biology, such as protein design \citep{watson_novo_2023}.

Despite this abundance of generative applications, the direct use of these models for statistical inference remains largely unexplored.
The continuous transformation carrying a sample from $\rho_0$ to $\rho_1$ traces out a trajectory, which we call a \emph{path}, describing how a sample evolves from $\rho_0$ into a sample from $\rho_1$.
This path may follow stochastic or deterministic dynamics; we restrict our attention to deterministic dynamics, under which each sample follows a unique, identifiable path.
We argue that the path itself, not merely its endpoints, is a rich statistical object in its own right.
At the population level, suppose we want to test whether the distribution of collected data has changed over time.
We can construct a dynamical transport model that carries samples from $\rho_0$ to $\rho_1$.
If the dynamics realised optimal transport, then under $\rho_0 = \rho_1$ the optimal map would be the identity so that every sample would remain at rest, and any movement along the path would be evidence of a distributional change.
Learned dynamics are rarely optimal, however, so this picture serves only as intuition.
In \cref{sec:method}, we show that an analogous property holds exactly for an interpolant that routes both distributions through a shared Gaussian bottleneck \citep{albergo_stochastic_2025}: the null hypothesis holds if and only if the path is reflection symmetric about the bottleneck (\cref{prop:invariance,fig:path_invariance}).
We refer to this broader line of reasoning as \emph{path-based inference}.

Paths are particularly well-suited to this role because of the stability properties of the models that generate them.
\citet{kadkhodaie_generalization_2024, mlodozeniec_influence_2025} showed that these models are stable in the sense that sampling from the same seed and the same latent vector yields visually similar outputs across different trained models, reflecting a stability of the learned score, or velocity, field whenever the target distribution is held fixed.
\citet{briq_amazing_2025} showed an analogous stability of the path dynamics itself under ODE-based sampling.
This might suggest that it suffices to test only the initial and final points of the path, but doing so discards information.
\citet{english_path_2026} showed that paths can differ substantially even when their final mappings are visually near-identical, and, more importantly, such endpoint similarity can itself be an artefact of modelling bias rather than genuine distributional closeness.
Additionally, these models are known to lock onto low-dimensional data manifolds in high-dimensional settings \citep{pidstrigach_score-based_2022, bortoli_convergence_2022}, making them well-suited to high-dimensional inference problems.
Intuitively, path-based inference lets us base statistical conclusions on the entire generative process connecting two datasets, rather than on their endpoints alone.

\begin{figure}[!t]

    \centering
    \includegraphics[width=0.8\textwidth]{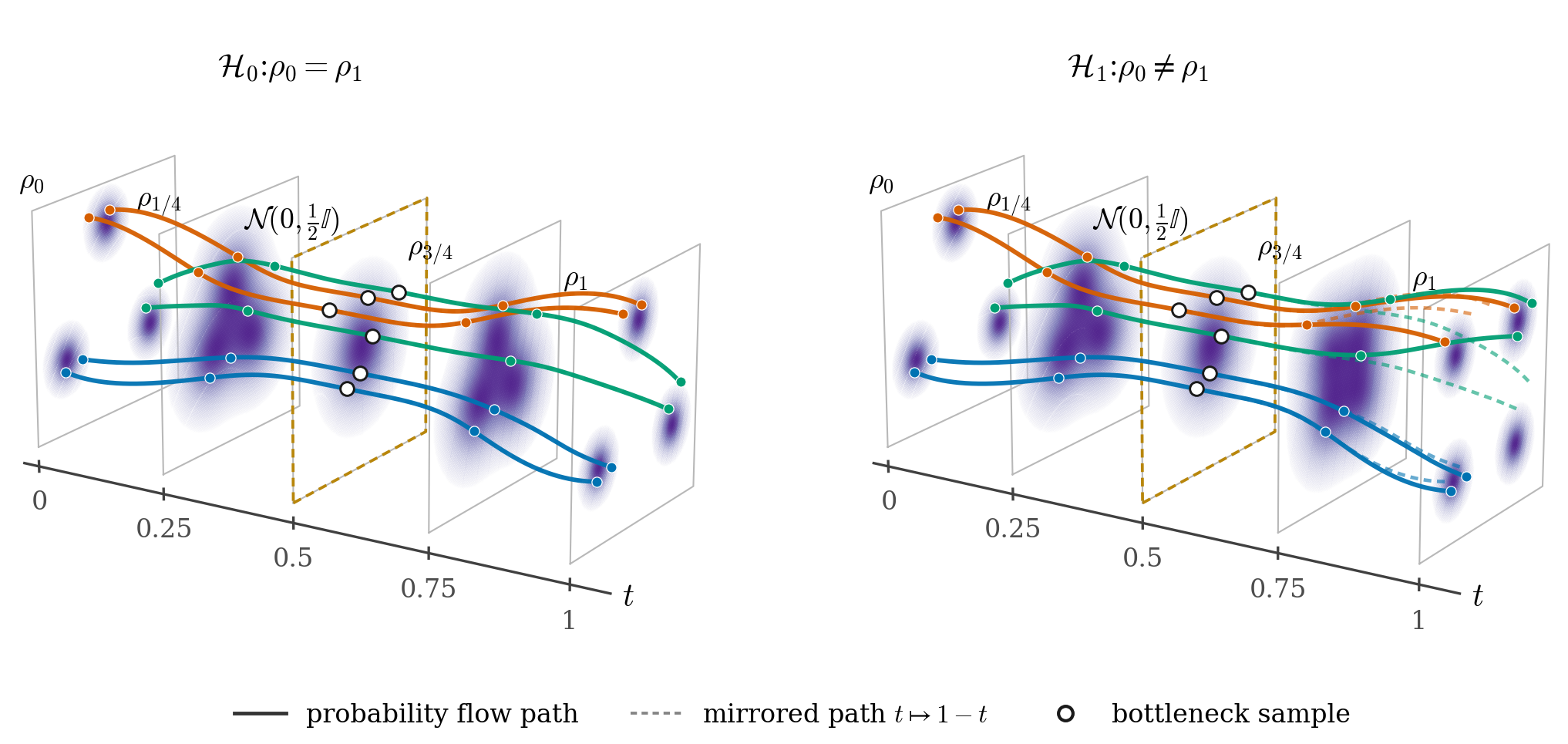}
    \caption{Path-based two-sample testing with a Gaussian-bottleneck interpolant.
    Samples from $\rho_0$ are transported by the probability flow (solid curves) to the shared bottleneck $\mathcal{N}(0, \tfrac{1}{2}\mathbb{I}_d)$ at $t = \tfrac{1}{2}$ (open circles) and onwards to $\rho_1$, while dashed curves show the first half of each path mirrored under $t \mapsto 1 - t$.
    \emph{Left:} under $\mathcal{H}_0$, the path is reflection symmetric about the bottleneck, so matched slices such as $\rho_{1/4}$ and $\rho_{3/4}$ coincide, each trajectory retraces its first half, and the mirrored curves are hidden beneath the flow.
    \emph{Right:} under $\mathcal{H}_1$, the second half of each path departs from the mirror image of the first, and this asymmetry, measured through the denoiser and velocity fields at matched noise levels, is the two-sample signal.}
    \label{fig:path_invariance}
\end{figure}

We develop this idea for the classical problem of \emph{two-sample testing} \citep{lehmann_invariance_2005}: given two finite datasets, determine whether they were drawn from the same distribution.
Two-sample testing underlies many statistical and machine learning tasks, including detecting distribution shift \citep{rabanser_failing_2019}, checking whether generated data match the true data distribution \citep{salimans_improved_2016}, and testing whether a distribution differs across demographic groups \citep{barocas_fairness_2023}.
Although classical two-sample testing is well studied \citep{gretton2012kernel, anderson1991hotelling}, existing methods still struggle with structured, high-dimensional data, especially in the low-sample regimes common in resource-limited settings such as biomedicine \citep{ramdas_decreasing_2015}.
A further desideratum is \emph{interpretability}: beyond a binary reject or fail-to-reject decision, we would like to identify which observations drive a rejection, for instance, to flag and remove corrupted data from a newly collected batch while preserving the rest of the distribution.
Path-based inference is naturally suited to this, since our test statistics are differences of averages of observation-level scores, so the observations that contribute most to a rejection can be read off directly (\cref{fig:localisation,sec:interpretability}). For a comprehensive positioning with respect to related works, see \cref{sec:related}.

We build on the stochastic interpolant framework, which unifies the major dynamical-transport generative models under a single formalism, and make the following contributions: 
\textbf{(i)}~We show that an interpolant routing both distributions through a shared Gaussian bottleneck \citep{albergo_stochastic_2025} turns two-sample testing into a test of path symmetry, and we prove that the null hypothesis holds if and only if the denoiser, or equivalently velocity, fields of its two halves coincide at any single noise level (\cref{sec:path_invariance}); 
\textbf{(ii)}~we turn this characterisation into denoising-risk and velocity-risk statistics that are evaluated on held-out data and aggregated along the path, and we show that an information-theoretic weighting recovers the Jeffreys divergence between the noise-smoothed distributions (\cref{sec:aggregation}); 
\textbf{(iii)}~we prove that permutation calibration yields tests that are valid in finite samples for any trained networks and consistent when the fields are learned accurately, with an explicit account of how estimation errors affect power (\cref{sec:permutation,sec:consistency}); and 
\textbf{(iv)}~on a synthetic benchmark and three image benchmarks, the proposed tests improve power over the strongest baseline by up to 33 percentage points at an equal total sample budget (\cref{sec:experiments}).
\section{Preliminaries}
\label{sec:prelim}

\subsection{Stochastic Interpolants}
\label{sec:prelim-si}

We work within the stochastic interpolant framework of \citet{albergo_stochastic_2025}, which unifies flow-based and diffusion-based generative models under a single construction.
Let $\rho_0$ and $\rho_1$ be probability densities on $\mathbb{R}^d$, and let $(x_0, x_1) \sim \nu$ be a coupling with marginals $\rho_0$ and $\rho_1$; the simplest choice is the independent coupling $\nu = \rho_0 \otimes \rho_1$.
Let $z \sim \mathcal{N}(0, \mathbb{I}_d)$ be a latent variable independent of $(x_0, x_1)$.

The stochastic interpolant is the process
\begin{equation}
\label{eq:interpolant}
    x_t = \alpha_t\, x_0 + \beta_t\, x_1 + \gamma_t\, z, \qquad t \in [0,1],
\end{equation}
where the coefficients $\alpha_t, \beta_t, \gamma_t$ are sufficiently smooth and satisfy the boundary conditions
\begin{equation}
\label{eq:boundary}
    \alpha_0 = \beta_1 = 1, \qquad \alpha_1 = \beta_0 = 0, \qquad \gamma_0 = \gamma_1 = 0, \qquad \gamma_t > 0 \;\; \text{for } t \in (0,1).
\end{equation}
These conditions guarantee that $x_{t=0} = x_0$ and $x_{t=1} = x_1$, so the time-dependent density $\rho_t$ of $x_t$ interpolates between $\rho_0$ and $\rho_1$.
A standard choice is $\alpha_t = 1 - t$, $\beta_t = t$ and $\gamma_t = \sqrt{2t(1-t)}$, and the same latent coefficient $\gamma_t$ is used in our construction in \cref{sec:method}.
The latent term $\gamma_t z$ smooths $\rho_t$ at intermediate times; setting $\gamma_t \equiv 0$ recovers the deterministic interpolants underlying flow matching \citep{lipman_flow_2022} and rectified flow \citep{liu_flow_2022}.

\Eqref{eq:interpolant} specifies the path at the level of individual samples: conditional on $(x_0, x_1, z)$, the trajectory $t \mapsto x_t$ is fully determined, and its law defines the modelling path $(\rho_t)_{t \in [0,1]}$.
The conditional velocity along this path is available in closed form,
\begin{equation}
\label{eq:cond-velocity}
    \dot{x}_t = \dot{\alpha}_t\, x_0 + \dot{\beta}_t\, x_1 + \dot{\gamma}_t\, z,
\end{equation}
where dots denote time derivatives.
However, $\dot{x}_t$ depends on the endpoints and the latent, which are not accessible from $x_t$ alone.
The generative dynamics are therefore obtained by averaging over them. 
\paragraph{Velocity field:}
The marginal velocity field is the conditional expectation of~\eqref{eq:cond-velocity} given the current position,
\begin{equation}
\label{eq:velocity}
    b(t, x) = \mathbb{E}\!\left[\dot{x}_t \,\middle|\, x_t = x\right].
\end{equation}
\citet{albergo_building_2022} show that $\rho_t$ satisfies the transport equation
\begin{equation}
\label{eq:transport}
    \partial_t \rho_t + \nabla \cdot \big(b\, \rho_t\big) = 0, \qquad \rho_{t=0} = \rho_0.
\end{equation}
Consequently, solutions of the probability flow ODE
\begin{equation}
\label{eq:ode}
    \dot{X}_t = b(t, X_t), \qquad X_0 \sim \rho_0,
\end{equation}
satisfy $X_t \sim \rho_t$ for all $t \in [0,1]$, and in particular $X_1 \sim \rho_1$.
Integrating~\eqref{eq:ode} backward in time from $X_1 \sim \rho_1$ likewise transports $\rho_1$ to $\rho_0$.
Although the interpolant~\eqref{eq:interpolant} involves the latent $z$, the transport~\eqref{eq:ode} is deterministic: stochasticity enters only through the construction of $\rho_t$, and each initial condition $X_0$ is mapped to a unique trajectory.

\paragraph{Denoisers and score:}
Equivalently, the dynamics can be parameterised through the denoisers
\begin{equation}
\label{eq:denoisers}
    \eta_0(t,x) = \mathbb{E}\!\left[x_0 \,\middle|\, x_t = x\right], \qquad
    \eta_1(t,x) = \mathbb{E}\!\left[x_1 \,\middle|\, x_t = x\right], \qquad
    \eta_z(t,x) = \mathbb{E}\!\left[z \,\middle|\, x_t = x\right].
\end{equation}
By linearity of conditional expectation,
\begin{equation}
\label{eq:velocity-denoiser}
    b(t,x) = \dot{\alpha}_t\, \eta_0(t,x) + \dot{\beta}_t\, \eta_1(t,x) + \dot{\gamma}_t\, \eta_z(t,x),
    \qquad
    x = \alpha_t\, \eta_0(t,x) + \beta_t\, \eta_1(t,x) + \gamma_t\, \eta_z(t,x),
\end{equation}
so that only two of the three denoisers need to be learned.
Moreover, for Gaussian $z$, the latent denoiser determines the score of $\rho_t$,
\begin{equation}
\label{eq:score}
    \nabla \log \rho_t(x) = -\gamma_t^{-1}\, \eta_z(t,x), \qquad t \in (0,1),
\end{equation}
which connects the framework to score-based diffusion models.
For any $\epsilon \geq 0$, the SDE $\mathrm{d}X_t = \big[b(t,X_t) + \epsilon\, \nabla \log \rho_t(X_t)\big]\,\mathrm{d}t + \sqrt{2\epsilon}\,\mathrm{d}W_t$ shares the marginals $\rho_t$; throughout this work, we use only the deterministic case $\epsilon = 0$.

\paragraph{Learning:}
Each conditional expectation above is the unique minimiser of a quadratic regression objective.
The velocity field minimises
\begin{equation}
\label{eq:loss-b}
    \mathcal{L}_b[\widehat{b}] = \int_0^1 \mathbb{E}\,\big|\widehat{b}(t, x_t) - \dot{x}_t\big|^2 \,\mathrm{d}t,
\end{equation}
and the latent denoiser minimises
\begin{equation}
\label{eq:loss-eta}
    \mathcal{L}_{\eta_z}[\widehat{\eta}_z] = \int_0^1 \mathbb{E}\,\big|\widehat{\eta}_z(t, x_t) - z\big|^2 \,\mathrm{d}t,
\end{equation}
with analogous objectives for $\eta_0$ and $\eta_1$ obtained by replacing $z$ with $x_0$ or $x_1$.
Here, the expectation is taken over $(x_0, x_1) \sim \nu$ and $z \sim \mathcal{N}(0, \mathbb{I}_d)$, with $x_t$ and $\dot{x}_t$ given by~\eqref{eq:interpolant} and~\eqref{eq:cond-velocity}.
These objectives coincide, up to additive constants independent of $\widehat{b}$ and $\widehat{\eta}_z$, with those of \citet{albergo_building_2022}.
In practice, $\widehat{b}$ and $\widehat{\eta}_z$ are parameterised by neural networks, and the objectives are estimated by Monte Carlo sampling of $t \sim \mathcal{U}[0,1]$, the endpoints and the latent.

\subsection{Two-Sample Testing}
\label{sec:prelim-testing}

Consider two independent datasets $\mathcal{D}_0 = \{x_0^{(i)}\}_{i=1}^{n_0}$ and $\mathcal{D}_1 = \{x_1^{(j)}\}_{j=1}^{n_1}$ in $\mathbb{R}^d$, whose members are drawn i.i.d.\ from densities $\rho_0$ and $\rho_1$, respectively.
The densities themselves are unknown, and we only have access to the two finite datasets.
The two-sample problem asks whether these datasets are consistent with a single underlying density, which we phrase as a test of the null hypothesis against its complement,
\begin{equation}
\label{eq:hypotheses}
    \mathcal{H}_0 \colon \rho_0 = \rho_1
    \qquad \text{versus} \qquad
    \mathcal{H}_1 \colon \rho_0 \neq \rho_1 .
\end{equation}
Our validity guarantee is non-asymptotic, holding at finite $n_0$ and $n_1$, whereas power is analysed as the sample sizes grow.

A test is a binary decision rule $\Delta(\mathcal{D}_0, \mathcal{D}_1) \in \{0, 1\}$, where $\Delta = 1$ corresponds to rejecting $\mathcal{H}_0$.
We allow $\Delta$ to be randomised, that is, to depend on internal randomness $\xi$ drawn independently of the data, such as the sample splitting, network training, Monte Carlo draws, and permutations used in \cref{sec:method}.
We write $\mathbb{P}_{\rho_0, \rho_1}$ for the joint law of $(\mathcal{D}_0, \mathcal{D}_1, \xi)$ when the datasets are drawn from $\rho_0$ and $\rho_1$; the law of $\xi$ is fixed by the test and is therefore left implicit.
More generally, subscripts on $\mathbb{P}$ and $\mathbb{E}$ indicate the distributions of the random variables involved, as in $\mathbb{E}_{x \sim \rho}[f(x)]$, and unsubscripted probabilities and expectations are taken over all random variables in the expression.

A false rejection, or type I error, occurs when $\Delta = 1$ although $\rho_0 = \rho_1$.
We require its probability to be controlled uniformly over the set $\mathcal{P}(\mathbb{R}^d)$ of probability densities on $\mathbb{R}^d$,
\begin{equation}
\label{eq:level}
    \sup_{\rho_0 \in \mathcal{P}(\mathbb{R}^d)} \mathbb{P}_{\rho_0, \rho_0}\big[\Delta(\mathcal{D}_0, \mathcal{D}_1) = 1\big] \leq \kappa ,
\end{equation}
for a prescribed $\kappa \in (0, 1)$, in which case $\Delta$ is said to have level $\kappa$.
Conversely, a type II error occurs when $\Delta = 0$ although $\rho_0 \neq \rho_1$.
For a fixed alternative, if
\begin{equation}
\label{eq:type-ii}
    \mathbb{P}_{\rho_0, \rho_1}\big[\Delta(\mathcal{D}_0, \mathcal{D}_1) = 0\big] \leq \delta
\end{equation}
for some $\delta \in (0, 1)$, the test is said to have power $1 - \delta$ against that alternative.
The goal is to construct tests that attain level $\kappa$ while maximising power against a broad class of alternatives.
\section{Path-Based Two-Sample Testing}
\label{sec:method}

We work in the setting of \cref{sec:prelim-testing}.
Our construction connects each of $\rho_0$ and $\rho_1$ to a common Gaussian bottleneck that depends on neither population.
Rather than comparing $\rho_0$ and $\rho_1$ only at their endpoints, we compare the regression fields induced along the two halves of this path.
Under $\mathcal{H}_0$, the two fields coincide at every matched noise level. In contrast, under $\mathcal{H}_1$, they differ at every matched noise level, so their discrepancy supplies a continuum of two-sample witnesses indexed by position along the path.
Throughout, population fields are distinguished from their neural estimates, which we denote with hats.

\subsection{A Gaussian-Bottleneck Interpolant}
\label{sec:gaussian_path}

Following \citet{albergo_stochastic_2025}, we specialise the interpolant~\eqref{eq:interpolant} to the schedule
\begin{equation}
\label{eq:bottleneck_schedule}
    \alpha_t = \cos^2(\pi t)\, \mathbf{1}_{\{t \leq 1/2\}},
    \qquad
    \beta_t = \cos^2(\pi t)\, \mathbf{1}_{\{t \geq 1/2\}},
    \qquad
    \gamma_t = \sqrt{2t(1-t)} .
\end{equation}
Its structure is most transparent in the radial coordinate $s = |t - \tfrac{1}{2}| \in [0, \tfrac{1}{2}]$, which measures distance from the midpoint.
Each radius $s$ corresponds to the two mirrored times
\begin{equation}
\label{eq:mirrored_times}
    t_s^0 = \tfrac{1}{2} - s,
    \qquad
    t_s^1 = \tfrac{1}{2} + s,
\end{equation}
and we define the radial schedule
\begin{equation}
\label{eq:radial_schedule}
    a_s = \sin^2(\pi s),
    \qquad
    \sigma_s = \sqrt{\tfrac{1}{2} - 2s^2}.
\end{equation}
For $s \in (0, \tfrac{1}{2})$, this schedule satisfies $a_s > 0$ and $\sigma_s > 0$, and its derivatives $a_s'$ and $\sigma_s'$ are finite; we use these facts without further mention.
The following lemma records the properties of the schedule that we need.

\begin{lemma}[Gaussian-bottleneck interpolant]
\label{lem:bottleneck}
The schedule~\eqref{eq:bottleneck_schedule} satisfies the boundary conditions~\eqref{eq:boundary}, with $\alpha_t, \beta_t \in C^1([0,1])$ and $\gamma_t^2 \in C^\infty([0,1])$.
Moreover, for every $s \in [0, \tfrac{1}{2}]$ and $\ell \in \{0, 1\}$,
\begin{equation}
\label{eq:channels}
    x_{t_s^\ell} = a_s\, x_\ell + \sigma_s\, z .
\end{equation}
In particular, the law of $x_t$ does not depend on the coupling $\nu$, and the midpoint density is the pivot $\rho_{1/2} = \mathcal{N}\big(0, \tfrac{1}{2}\mathbb{I}_d\big)$.
\end{lemma}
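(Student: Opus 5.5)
The plan is a direct verification in three parts: boundary conditions and regularity, the channel identity~\eqref{eq:channels}, and then the two consequences.

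\textbf{Boundary conditions and regularity.} At $t=0$ we have $\cos^2(0)=1$, and the indicator $\mathbf{1}_{\{t\le 1/2\}}$ is active, so $\alpha_0=1$ and $\beta_0=0$. Symmetrically, $\cos^2(\pi)=1$ gives $\beta_1=1$ and $\alpha_1=0$. For the latent coefficient, $\gamma_0=\gamma_1=0$, and $2t(1-t)>0$ on $(0,1)$ gives $\gamma_t>0$ there.

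For regularity, the only issue is the junction at $t=\tfrac12$, where the indicators switch. There $\cos^2(\pi t)$ vanishes together with its derivative $-\pi\sin(2\pi t)$. Hence both one-sided values and one-sided derivatives of $\alpha$ and $\beta$ equal $0$ at $t=\tfrac12$, which gives $C^1$. Finally, $\gamma_t^2=2t(1-t)$ is a polynomial, hence $C^\infty$. (We note that $\gamma_t$ itself fails to be $C^1$ at the endpoints, which is why the statement is phrased in terms of $\gamma_t^2$.)

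\textbf{Channel identity.} Substitute $t=t_s^0=\tfrac12-s$, so $t\le\tfrac12$ and $\beta_t=0$. We compute
\[
\alpha_t=\cos^2\!\big(\tfrac{\pi}{2}-\pi s\big)=\sin^2(\pi s)=a_s,
\]
and
\[
2t(1-t)=2\big(\tfrac12-s\big)\big(\tfrac12+s\big)=\tfrac12-2s^2,
\]
so $\gamma_t=\sigma_s$. This gives $x_{t_s^0}=a_s x_0+\sigma_s z$.

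For $t=t_s^1=\tfrac12+s$, we have $\alpha_t=0$ and $\cos^2(\tfrac{\pi}{2}+\pi s)=\sin^2(\pi s)$. The product $2t(1-t)$ is unchanged because it is symmetric under $t\mapsto 1-t$. So $x_{t_s^1}=a_s x_1+\sigma_s z$.

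At $s=0$ the two expressions agree: both indicators are active there, but the coefficients vanish.

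\textbf{Consequences.} Every $t\in[0,1]$ equals $t_s^\ell$ for some $s\in[0,\tfrac12]$ and some $\ell\in\{0,1\}$. By the channel identity, $x_t$ is a function of $(x_\ell,z)$ alone. Since $z$ is independent of $(x_0,x_1)$ and $x_\ell\sim\rho_\ell$ under any coupling $\nu$, the law of $x_t$ is the convolution of the law of $a_s x_\ell$ with $\mathcal{N}(0,\sigma_s^2\mathbb{I}_d)$. This law involves only the marginal $\rho_\ell$, so it is independent of $\nu$.

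At $s=0$ we get $a_0=0$ and $\sigma_0^2=\tfrac12$, so $x_{1/2}=\sigma_0 z\sim\mathcal{N}(0,\tfrac12\mathbb{I}_d)$.

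No step is a genuine obstacle. The only point needing care is the $C^1$ gluing at $t=\tfrac12$, which we have handled through the double zero of $\cos^2$ there.
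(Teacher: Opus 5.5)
Your proposal is correct and follows the paper's proof essentially step for step. You evaluate the boundary values directly, glue $\alpha_t,\beta_t$ in $C^1$ at $t=\tfrac12$ using the double zero of $\cos^2(\pi t)$, substitute the mirrored times $t_s^0,t_s^1$ to get the channel identity, and then read off coupling-independence and the pivot $\mathcal{N}(0,\tfrac12\mathbb{I}_d)$ at $s=0$; your side remarks on the indicator overlap at $t=\tfrac12$ and on $\gamma_t$ itself failing to be $C^1$ at the endpoints are accurate.
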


\begin{proof}
The proof consists of a direct evaluation of the schedule.
For the boundary values, $\alpha_0 = \cos^2 0 = 1$, $\alpha_1 = 0$, $\beta_0 = 0$, $\beta_1 = \cos^2 \pi = 1$, and $\gamma_0 = \gamma_1 = 0$, while $\gamma_t^2 = 2t(1-t) > 0$ for $t \in (0,1)$.
The function $\gamma_t^2$ is a polynomial and hence lies in $C^\infty([0,1])$.
On $[0, \tfrac{1}{2}]$, $\alpha_t = \cos^2(\pi t)$ has derivative $-\pi \sin(2\pi t)$, and both $\alpha_t$ and its derivative vanish at $t = \tfrac{1}{2}$, matching the identically zero branch on $(\tfrac{1}{2}, 1]$.
Hence $\alpha_t \in C^1([0,1])$, and the same argument applies to $\beta_t$.

For the channel representation, let $s \in [0, \tfrac{1}{2}]$.
At $t = t_s^0 = \tfrac{1}{2} - s$, we have $\alpha_t = \cos^2(\tfrac{\pi}{2} - \pi s) = \sin^2(\pi s) = a_s$ and $\beta_t = 0$, since either $t < \tfrac{1}{2}$ or $t = \tfrac{1}{2}$, where $\cos^2(\tfrac{\pi}{2}) = 0$.
Moreover, $\gamma_t^2 = 2(\tfrac{1}{2} - s)(\tfrac{1}{2} + s) = \tfrac{1}{2} - 2s^2 = \sigma_s^2$.
Substituting into~\eqref{eq:interpolant} gives $x_{t_s^0} = a_s x_0 + \sigma_s z$.
At $t = t_s^1$, the identity $\cos^2(\tfrac{\pi}{2} + \pi s) = \sin^2(\pi s)$ gives $x_{t_s^1} = a_s x_1 + \sigma_s z$ in the same way.
Since $z$ is independent of $(x_0, x_1)$, the law of $x_{t_s^\ell}$ depends only on the marginal $\rho_\ell$ and not on the coupling $\nu$.
At $s = 0$, we have $a_0 = 0$ and $\sigma_0^2 = \tfrac{1}{2}$, so $x_{1/2} = \sigma_0 z \sim \mathcal{N}(0, \tfrac{1}{2}\mathbb{I}_d)$.
\end{proof}

\Cref{lem:bottleneck} shows that each half of the path is a Gaussian channel acting on a single population.
We write
\begin{equation}
\label{eq:channel_densities}
    \zeta_s^\ell = \rho_{t_s^\ell},
    \qquad
    \ell \in \{0, 1\},
\end{equation}
for the density of channel $\ell$ at radius $s$, so that $\zeta_s^0$ and $\zeta_s^1$ are the interpolant densities at the two mirrored times.
As $s \uparrow \tfrac{1}{2}$, we have $a_s \to 1$ and $\sigma_s \to 0$, so the channels recover the data densities, $\zeta_{1/2}^\ell = \rho_\ell$, while at $s = 0$ both reduce to the pivot, $\zeta_0^0 = \zeta_0^1 = \mathcal{N}\big(0, \tfrac{1}{2}\mathbb{I}_d\big)$.
In the original time coordinate, the construction therefore realises the transport
\begin{equation}
    \rho_0 \longrightarrow \mathcal{N}\big(0, \tfrac{1}{2}\mathbb{I}_d\big) \longleftarrow \rho_1 ,
\end{equation}
while in the radial coordinate, it consists of two outward paths emanating from the same pivot.
The coefficients $\alpha_t$ and $\beta_t$ are only once differentiable at the bottleneck.
This suffices for all results below, which we establish directly for the schedule~\eqref{eq:bottleneck_schedule} rather than through the general theory of \citet{albergo_stochastic_2025}.

Throughout, we impose the following condition on the two populations.

\begin{assumption}[Moments]
\label{ass:moments}
The densities $\rho_0$ and $\rho_1$ have finite second moments.
\end{assumption}

\Cref{ass:moments} guarantees that the fields of \cref{sec:path_fields} are well defined and that the squared-error risks of \cref{sec:witnesses} are finite.
The Jeffreys identity of \cref{prop:jeffreys} additionally relies on a technical regularity condition, \cref{ass:regularity}, which we state in \cref{sec:aggregation}.

\subsection{Denoiser and Velocity Fields}
\label{sec:path_fields}

For $\ell \in \{0, 1\}$ and $s \in (0, \tfrac{1}{2})$, we define the posterior-mean denoiser and the outward velocity of channel $\ell$ as
\begin{equation}
\label{eq:denoiser}
    \eta_s^\ell(x) = \mathbb{E}\!\left[x_\ell \,\middle|\, x_{t_s^\ell} = x\right],
    \qquad
    u_s^\ell(x) = \mathbb{E}\!\left[\partial_s x_{t_s^\ell} \,\middle|\, x_{t_s^\ell} = x\right],
\end{equation}
where $\partial_s x_{t_s^\ell} = a_s' x_\ell + \sigma_s' z$ and primes denote derivatives in $s$.
These are the fields of \cref{sec:prelim-si} expressed in the radial coordinate: $\eta_s^0 = \eta_0(t_s^0, \cdot)$ and $\eta_s^1 = \eta_1(t_s^1, \cdot)$, while $u_s^0 = -b(t_s^0, \cdot)$ and $u_s^1 = b(t_s^1, \cdot)$, the sign reflecting that $s$ runs against $t$ on the first half.
Under \cref{ass:moments}, both conditional expectations exist and admit continuous versions on $\mathbb{R}^d$, constructed for the denoiser in \cref{lem:smoothing} and for the velocity in the proof of \cref{lem:relations}.
We use these versions throughout, so that identities between fields hold at every point.

Write $\varphi_\sigma(x) = (2\pi\sigma^2)^{-d/2} \exp\!\big(-|x|^2 / (2\sigma^2)\big)$ for the density of $\mathcal{N}(0, \sigma^2 \mathbb{I}_d)$.
The following lemma collects the properties of the channel densities used throughout.

\begin{lemma}[Gaussian smoothing]
\label{lem:smoothing}
Let $s \in [0, \tfrac{1}{2})$ and $\ell \in \{0, 1\}$.
The channel density is
\begin{equation}
\label{eq:smoothed_density}
    \zeta_s^\ell(x) = \int_{\mathbb{R}^d} \varphi_{\sigma_s}(x - a_s y)\, \rho_\ell(y)\, \mathrm{d}y ,
\end{equation}
which is infinitely differentiable and strictly positive on $\mathbb{R}^d$.
If, in addition, $s > 0$ and \cref{ass:moments} holds, then the posterior density of $x_\ell$ given $x_{t_s^\ell} = x$ is $\varphi_{\sigma_s}(x - a_s y)\, \rho_\ell(y) / \zeta_s^\ell(x)$, and
\begin{equation}
\label{eq:denoiser_version}
    \eta_s^\ell(x) = \frac{1}{\zeta_s^\ell(x)} \int_{\mathbb{R}^d} y\, \varphi_{\sigma_s}(x - a_s y)\, \rho_\ell(y)\, \mathrm{d}y
\end{equation}
is a continuous version of the conditional expectation in~\eqref{eq:denoiser}.
\end{lemma}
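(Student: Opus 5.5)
The plan is to read everything off the channel representation of \cref{lem:bottleneck}, namely $x_{t_s^\ell} = a_s x_\ell + \sigma_s z$ with $z \sim \mathcal{N}(0,\mathbb{I}_d)$ independent of $x_\ell \sim \rho_\ell$. For $s \in [0,\tfrac12)$ we have $\sigma_s^2 = \tfrac12 - 2s^2 > 0$. Conditional on $x_\ell = y$, the variable $x_{t_s^\ell}$ is therefore $\mathcal{N}(a_s y, \sigma_s^2 \mathbb{I}_d)$, with density $\varphi_{\sigma_s}(x - a_s y)$. Integrating against $\rho_\ell(y)$ gives the convolution formula~\eqref{eq:smoothed_density} by Fubini/Tonelli, since everything is nonnegative. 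Strict positivity is immediate: the integrand $\varphi_{\sigma_s}(x - a_s y)\rho_\ell(y)$ is nonnegative, and it is strictly positive on the set $\{\rho_\ell > 0\}$, which has positive Lebesgue measure because $\rho_\ell$ integrates to one.

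For smoothness I would differentiate under the integral sign. Every partial derivative $\partial_x^k \varphi_{\sigma_s}(x - a_s y)$ has the form $P_k(x - a_s y)\varphi_{\sigma_s}(x - a_s y)$ with $P_k$ a polynomial. Such a function is bounded uniformly in $(x,y)$, since a polynomial times a Gaussian is bounded. Because $\rho_\ell$ is a probability density, the dominated convergence theorem (with constant dominating function times $\rho_\ell$) justifies differentiating to every order. This is the step needing the most care. It still works at $s=0$, where $a_0 = 0$ and the density is just the pivot Gaussian, and it uses no moment assumption.

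For the posterior statement, take $s > 0$, so that the joint law of $(x_\ell, x_{t_s^\ell})$ has density $\varphi_{\sigma_s}(x - a_s y)\rho_\ell(y)$ on $\mathbb{R}^d \times \mathbb{R}^d$. Since the marginal $\zeta_s^\ell$ is strictly positive everywhere, the ratio $\varphi_{\sigma_s}(x - a_s y)\rho_\ell(y)/\zeta_s^\ell(x)$ is a well-defined regular conditional density for every $x$. By the standard Bayes formula for joint densities, $\mathbb{E}[g(x_\ell)\, h(x_{t_s^\ell})] = \int h(x)\int g(y)\,(\text{posterior})\,\mathrm{d}y\, \zeta_s^\ell(x)\,\mathrm{d}x$ for bounded $h$. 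Under \cref{ass:moments}, $x_\ell$ is integrable, so this identity extends to $g(y) = y$ and the numerator in~\eqref{eq:denoiser_version} is finite. To check finiteness directly, note that $\varphi_{\sigma_s} \le (2\pi\sigma_s^2)^{-d/2}$, so the numerator is bounded by $(2\pi\sigma_s^2)^{-d/2}\,\mathbb{E}|x_\ell|$. Hence~\eqref{eq:denoiser_version} is a version of the conditional expectation.

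Continuity of this version follows from the same dominated-convergence argument, now applied to the numerator. The map $x \mapsto y\,\varphi_{\sigma_s}(x - a_s y)\rho_\ell(y)$ is continuous in $x$ and dominated by $(2\pi\sigma_s^2)^{-d/2}|y|\rho_\ell(y)$, which is integrable by \cref{ass:moments}; first moments suffice here. So the numerator is continuous. The denominator is continuous and strictly positive, so the quotient is continuous on all of $\mathbb{R}^d$. The role of $s>0$ is worth stating explicitly. At $s = 0$ the posterior is simply $\rho_\ell$ and the formula is trivial, but the paper defines the fields only for $s \in (0,\tfrac12)$, so I would restrict to that range.
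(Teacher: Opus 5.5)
Your proposal is correct and follows essentially the same route as the paper: the conditional density from the channel representation of \cref{lem:bottleneck}, differentiation under the integral sign using bounded Gaussian derivatives dominated by a constant times $\rho_\ell$, Bayes' rule for the posterior, and dominated convergence with the bound $(2\pi\sigma_s^2)^{-d/2}|y|\rho_\ell(y)$ for continuity of the numerator. Your positivity argument (the integrand is positive on $\{\rho_\ell > 0\}$, a set of positive measure) and your remark that first moments suffice are slightly more explicit than the paper's wording, but the proof is the same.
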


\begin{proof}
By \cref{lem:bottleneck}, $x_{t_s^\ell} = a_s x_\ell + \sigma_s z$ with $z \sim \mathcal{N}(0, \mathbb{I}_d)$ independent of $x_\ell$.
Conditionally on $x_\ell = y$, the variable $x_{t_s^\ell}$ therefore has density $\varphi_{\sigma_s}(\cdot - a_s y)$, and integrating over $y \sim \rho_\ell$ gives~\eqref{eq:smoothed_density}.
Every partial derivative of $\varphi_{\sigma_s}$, of any order, is bounded on $\mathbb{R}^d$.
Hence, the corresponding derivative of the integrand in~\eqref{eq:smoothed_density} is dominated by a constant multiple of $\rho_\ell(y)$, uniformly in $x$, and dominated convergence allows differentiation under the integral sign to all orders.
This shows $\zeta_s^\ell \in C^\infty(\mathbb{R}^d)$.
Strict positivity follows because $\varphi_{\sigma_s} > 0$ everywhere and $\rho_\ell$ integrates to one.
The posterior density follows from Bayes' rule.
Finally, the integrand in the numerator of~\eqref{eq:denoiser_version} is continuous in $x$ and bounded by $(2\pi\sigma_s^2)^{-d/2}\, |y|\, \rho_\ell(y)$, which is integrable under \cref{ass:moments}.
The numerator is therefore continuous by dominated convergence, and so is the ratio, since the denominator is continuous and strictly positive.
\end{proof}

\begin{lemma}[Denoiser, velocity and score]
\label{lem:relations}
Under \cref{ass:moments}, for every $s \in (0, \tfrac{1}{2})$ and $\ell \in \{0, 1\}$: 
\textbf{(i)}~the velocity satisfies $u_s^\ell(x) = \lambda_s\, \eta_s^\ell(x) + \frac{\sigma_s'}{\sigma_s}\, x$ with $\lambda_s = a_s' - a_s \frac{\sigma_s'}{\sigma_s} = \pi \sin(2\pi s) + \frac{2s\, a_s}{\sigma_s^2}$; 
\textbf{(ii)}~the denoiser satisfies Tweedie's identity, $\eta_s^\ell(x) = \frac{1}{a_s}\big(x + \sigma_s^2\, \nabla \log \zeta_s^\ell(x)\big)$; and 
\textbf{(iii)}~the contrasts satisfy $u_s^1 - u_s^0 = \lambda_s ( \eta_s^1 - \eta_s^0 ) = \chi_s ( \nabla \log \zeta_s^1 - \nabla \log \zeta_s^0 )$ with $\chi_s = \frac{\lambda_s \sigma_s^2}{a_s} > 0$.
\end{lemma}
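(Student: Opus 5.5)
The plan is to work entirely from the channel representation of \cref{lem:bottleneck} and the explicit posterior of \cref{lem:smoothing}, proving (ii) first, then (i), then (iii) as a corollary.

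\textbf{Tweedie's identity (ii).} Differentiate~\eqref{eq:smoothed_density} under the integral sign, which \cref{lem:smoothing} already justifies. Since $\nabla_x \varphi_{\sigma_s}(x - a_s y) = -\sigma_s^{-2}(x - a_s y)\,\varphi_{\sigma_s}(x - a_s y)$, this gives
\[
\sigma_s^2 \nabla \zeta_s^\ell(x) = -x\,\zeta_s^\ell(x) + a_s \int y\,\varphi_{\sigma_s}(x-a_s y)\rho_\ell(y)\,\mathrm{d}y .
\]
Dividing by $\zeta_s^\ell(x)>0$ and using~\eqref{eq:denoiser_version} yields $\sigma_s^2\nabla\log\zeta_s^\ell = -x + a_s\eta_s^\ell$. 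This is (ii), since $a_s>0$ for $s\in(0,\tfrac12)$.

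\textbf{Velocity relation (i).} On the event $x_{t_s^\ell} = x$ we have $\sigma_s z = x - a_s x_\ell$. Hence
\[
\partial_s x_{t_s^\ell} = a_s' x_\ell + \frac{\sigma_s'}{\sigma_s}\,(x_{t_s^\ell} - a_s x_\ell) = \lambda_s x_\ell + \frac{\sigma_s'}{\sigma_s}\, x_{t_s^\ell}
\]
holds pointwise as random variables. Taking conditional expectations given $x_{t_s^\ell}$ gives $u_s^\ell(x) = \lambda_s\eta_s^\ell(x) + (\sigma_s'/\sigma_s)x$ almost everywhere. The right-hand side is continuous by \cref{lem:smoothing}, so I will define the continuous version of $u_s^\ell$ by this formula. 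This supplies the version promised in \cref{sec:path_fields}, and the identity then holds everywhere. Integrability of $\partial_s x_{t_s^\ell}$ follows from \cref{ass:moments}.

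It remains to compute $\lambda_s$ explicitly. We have $a_s' = 2\pi\sin(\pi s)\cos(\pi s) = \pi\sin(2\pi s)$. From $\sigma_s^2 = \tfrac12 - 2s^2$ we get $\sigma_s\sigma_s' = -2s$, so $\sigma_s'/\sigma_s = -2s/\sigma_s^2$. Therefore $\lambda_s = \pi\sin(2\pi s) + 2s a_s/\sigma_s^2$.

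\textbf{Contrasts (iii).} Subtract (i) for $\ell = 1$ and $\ell = 0$: the $x$ term cancels, giving $u_s^1 - u_s^0 = \lambda_s(\eta_s^1-\eta_s^0)$. Subtracting (ii) likewise cancels the $x/a_s$ term, giving $\eta_s^1-\eta_s^0 = (\sigma_s^2/a_s)(\nabla\log\zeta_s^1-\nabla\log\zeta_s^0)$. Combining the two, the coefficient is $\chi_s = \lambda_s\sigma_s^2/a_s$. Positivity holds because, for $s\in(0,\tfrac12)$, both $\sin(2\pi s)>0$ and $2sa_s/\sigma_s^2>0$, so $\lambda_s>0$, while $a_s,\sigma_s>0$.

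The only real subtlety is the version issue in (i): the conditional expectation is defined only almost everywhere, and it must be pinned to the continuous representative so that the identities hold pointwise. The remaining steps are routine.
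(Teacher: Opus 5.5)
Your proposal is correct and follows essentially the same route as the paper. You prove (ii) by differentiating~\eqref{eq:smoothed_density} under the integral and invoking~\eqref{eq:denoiser_version}, and (i) by eliminating $z = (x_{t_s^\ell} - a_s x_\ell)/\sigma_s$ and conditioning, with the continuous version defined by the right-hand side. You obtain (iii) by subtraction, with positivity from $\sin(2\pi s) > 0$, $a_s > 0$ and $\sigma_s > 0$. The differences are only cosmetic: you substitute for $z$ before taking the conditional expectation rather than after, and you prove (ii) before (i).
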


\begin{proof}
Parts~(i) and~(ii) are computed directly, and part~(iii) follows by subtraction.
Fix $s \in (0, \tfrac{1}{2})$ and $\ell \in \{0, 1\}$, and abbreviate $x_t = x_{t_s^\ell}$.
Both $\partial_s x_t = a_s' x_\ell + \sigma_s' z$ and $x_\ell$ are integrable under \cref{ass:moments}, so all conditional expectations below exist.

For~(i), linearity of conditional expectation gives $u_s^\ell(x) = a_s'\, \eta_s^\ell(x) + \sigma_s'\, \mathbb{E}[z \mid x_t = x]$.
Since $z = (x_t - a_s x_\ell)/\sigma_s$, we have $\mathbb{E}[z \mid x_t = x] = (x - a_s \eta_s^\ell(x))/\sigma_s$.
Substituting and collecting terms yields~(i).
The right-hand side of~(i) is continuous by \cref{lem:smoothing}, and it provides the continuous version of $u_s^\ell$ used throughout.
The explicit form of $\lambda_s$ follows from $a_s' = 2\pi \sin(\pi s) \cos(\pi s) = \pi \sin(2\pi s)$ and from differentiating $\sigma_s^2 = \tfrac{1}{2} - 2s^2$, which gives $\sigma_s'/\sigma_s = -2s/\sigma_s^2$.

For~(ii), differentiating~\eqref{eq:smoothed_density} under the integral sign, as justified in \cref{lem:smoothing}, gives
\begin{equation}
    \nabla \zeta_s^\ell(x)
    =
    -\frac{1}{\sigma_s^2} \int_{\mathbb{R}^d} (x - a_s y)\, \varphi_{\sigma_s}(x - a_s y)\, \rho_\ell(y)\, \mathrm{d}y
    =
    -\frac{x}{\sigma_s^2}\, \zeta_s^\ell(x) + \frac{a_s}{\sigma_s^2}\, \zeta_s^\ell(x)\, \eta_s^\ell(x),
\end{equation}
where the second equality uses~\eqref{eq:denoiser_version}.
Dividing by $\zeta_s^\ell(x) > 0$ and rearranging yields~(ii).

For~(iii), subtracting~(i) for $\ell = 0$ from~(i) for $\ell = 1$ cancels the common affine term and gives the first equality in~(iii).
Subtracting~(ii) for the two channels gives $\eta_s^1 - \eta_s^0 = (\sigma_s^2 / a_s)(\nabla \log \zeta_s^1 - \nabla \log \zeta_s^0)$, and multiplying by $\lambda_s$ gives the second equality.
Finally, on $(0, \tfrac{1}{2})$ we have $\sin(2\pi s) > 0$, $a_s > 0$ and $\sigma_s^2 > 0$, so $\lambda_s > 0$ and hence $\chi_s > 0$.
\end{proof}

The affine term in \Cref{lem:relations}~(i) is known and common to both channels, so it cancels in every contrast.
Denoiser, velocity and score contrasts therefore encode the same population difference, up to known positive scalar functions of $s$.

\subsection{Path Invariance Characterises the Null}
\label{sec:path_invariance}

\begin{proposition}[Path invariance]
\label{prop:invariance}
Suppose \cref{ass:moments} holds, and fix any $s \in (0, \tfrac{1}{2})$.
Then the following are equivalent:
\textbf{(i)}~$\mathcal{H}_0$ holds ($\rho_0 = \rho_1$); \ 
\textbf{(ii)}~$\zeta_s^0 = \zeta_s^1$; \ 
\textbf{(iii)}~$\eta_s^0 = \eta_s^1$; \ 
\textbf{(iv)}~$u_s^0 = u_s^1$.
\end{proposition}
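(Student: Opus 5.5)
I will prove the cycle (i)$\Rightarrow$(ii)$\Rightarrow$(iii)$\Leftrightarrow$(iv), and then close it with (iii)$\Rightarrow$(ii)$\Rightarrow$(i). The forward direction (i)$\Rightarrow$(ii) is immediate from \eqref{eq:smoothed_density} in \cref{lem:smoothing}: both channel densities are the same Gaussian convolution of the respective population density, so $\rho_0=\rho_1$ gives $\zeta_s^0=\zeta_s^1$.

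For (ii)$\Rightarrow$(iii), I will apply Tweedie's identity, \cref{lem:relations}~(ii). It writes $\eta_s^\ell$ as a fixed function of $\nabla\log\zeta_s^\ell$ with the common coefficients $a_s,\sigma_s$, so equal smooth densities give equal denoisers pointwise. The equivalence (iii)$\Leftrightarrow$(iv) comes from \cref{lem:relations}~(iii): $u_s^1-u_s^0=\lambda_s(\eta_s^1-\eta_s^0)$, and $\lambda_s>0$ on $(0,\tfrac12)$.

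For (iii)$\Rightarrow$(ii), \cref{lem:relations}~(iii) gives $\nabla\log\zeta_s^1=\nabla\log\zeta_s^0$ everywhere on $\mathbb{R}^d$, because $\chi_s>0$. Both log-densities are $C^1$ by \cref{lem:smoothing}, since the densities are smooth and strictly positive. As $\mathbb{R}^d$ is connected, $\log\zeta_s^1-\log\zeta_s^0$ is constant, so $\zeta_s^1=c\,\zeta_s^0$ with $c>0$. Both are probability densities, hence $c=1$.

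The main step is (ii)$\Rightarrow$(i), which is a deconvolution argument. By \cref{lem:bottleneck}, $\zeta_s^\ell$ is the law of $a_s x_\ell+\sigma_s z$ with $z$ independent of $x_\ell$. Taking characteristic functions gives
\[
\widehat{\zeta_s^\ell}(\omega)=\widehat{\rho_\ell}(a_s\omega)\,e^{-\sigma_s^2|\omega|^2/2}.
\]
The Gaussian factor never vanishes, so equality of $\zeta_s^0$ and $\zeta_s^1$ yields $\widehat{\rho_0}(a_s\omega)=\widehat{\rho_1}(a_s\omega)$ for all $\omega$. Since $a_s>0$, this gives $\widehat{\rho_0}=\widehat{\rho_1}$ on all of $\mathbb{R}^d$. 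Lévy's uniqueness theorem then shows that $\rho_0$ and $\rho_1$ define the same probability measure, so the densities agree almost everywhere, which is the meaning of $\mathcal{H}_0$.

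The only delicate points are using $a_s>0$ and $\sigma_s>0$, which is why $s$ must lie strictly inside $(0,\tfrac12)$, and interpreting density equality almost everywhere. \cref{ass:moments} is needed only to make the fields in (iii) and (iv) well defined; it plays no role in the identifiability step.
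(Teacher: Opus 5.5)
Your proof is correct and follows essentially the same route as the paper's. Like the paper, you get (ii)$\Rightarrow$(iii) from Tweedie's identity and (iii)$\Leftrightarrow$(iv) from $\lambda_s>0$. The key step (iii)$\Rightarrow$(i) goes, as in the paper, through equal scores, then equal densities by connectedness and normalisation, then deconvolution via the non-vanishing Gaussian factor in the characteristic function and L\'evy's uniqueness theorem. The only cosmetic difference is that you obtain equality of scores from \cref{lem:relations}~(iii) with $\chi_s>0$, where the paper uses part~(ii) directly.
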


\begin{proof}
We prove the cycle (i)~$\Rightarrow$~(ii)~$\Rightarrow$~(iii)~$\Rightarrow$~(i), and separately the equivalence (iii)~$\Leftrightarrow$~(iv).
The key step is (iii)~$\Rightarrow$~(i), which inverts the Gaussian channel.

The implication (i)~$\Rightarrow$~(ii) holds because, by \cref{lem:bottleneck}, the law of $x_{t_s^\ell}$ is determined by $\rho_\ell$ alone, so $\rho_0 = \rho_1$ implies $\zeta_s^0 = \zeta_s^1$.
For (ii)~$\Rightarrow$~(iii), the right-hand side of \Cref{lem:relations}~(ii) depends on channel $\ell$ only through $\zeta_s^\ell$, so equal densities give equal denoisers.

For (iii)~$\Rightarrow$~(i), \Cref{lem:relations}~(ii) shows that equal denoisers give $\nabla \log \zeta_s^0 = \nabla \log \zeta_s^1$ on $\mathbb{R}^d$.
Both densities are smooth and strictly positive by \cref{lem:smoothing}, so $\log(\zeta_s^0 / \zeta_s^1)$ is constant on the connected set $\mathbb{R}^d$.
Since both densities integrate to one, the constant is zero, and hence $\zeta_s^0 = \zeta_s^1$.
Writing $\hat{f}_\ell(k) = \mathbb{E}_{x \sim \rho_\ell}\big[e^{\mathrm{i} k \cdot x}\big]$ for the characteristic function of $\rho_\ell$, the representation~\eqref{eq:channels} gives the characteristic function of $\zeta_s^\ell$ as $\hat{f}_\ell(a_s k)\, e^{-\sigma_s^2 |k|^2 / 2}$.
The Gaussian factor never vanishes, so $\hat{f}_0(a_s k) = \hat{f}_1(a_s k)$ for all $k \in \mathbb{R}^d$.
Since $a_s > 0$, this gives $\hat{f}_0 = \hat{f}_1$, and L\'evy's uniqueness theorem yields $\rho_0 = \rho_1$.

Finally, (iii)~$\Leftrightarrow$~(iv) follows from \Cref{lem:relations}~(iii): since $u_s^1 - u_s^0 = \lambda_s (\eta_s^1 - \eta_s^0)$ with $\lambda_s > 0$, one difference vanishes if and only if the other does.
\end{proof}

In the original time coordinate, condition~(iv) is a reflection antisymmetry of the velocity field about the bottleneck,
\begin{equation}
\label{eq:mirror_cancellation}
    b(\tfrac{1}{2} - s, x) + b(\tfrac{1}{2} + s, x) = 0 ,
\end{equation}
and condition~(ii), which reads $\rho_{1/2 - s} = \rho_{1/2 + s}$, is the corresponding reflection symmetry of the interpolant density.

Consequently, whenever the probability flow ODE~\eqref{eq:ode} is well posed, every trajectory satisfies $X_{1-t} = X_t$ under $\mathcal{H}_0$, since $t \mapsto X_{1-t}$ solves the same ODE by~\eqref{eq:mirror_cancellation} and agrees with $X_t$ at $t = \tfrac{1}{2}$.
The transport from $\rho_0$ to $\rho_1$ is then the identity map, which makes the optimal-transport intuition of \cref{sec:intro} exact for this construction.

\Cref{prop:invariance} has two consequences.
Under $\mathcal{H}_0$, the path fields coincide at every radius.
Under $\mathcal{H}_1$, they differ at every radius, so any single matched noise level already carries two-sample evidence.
This motivates testing $\mathcal{H}_0$ through discrepancies between the fields of the two channels.

\subsection{Held-Out Risk Witnesses}
\label{sec:witnesses}

Suppose that $\widehat{\eta}_s^0, \widehat{u}_s^0$ and $\widehat{\eta}_s^1, \widehat{u}_s^1$ are estimated on a training split that is independent of the observations used for testing.
For a held-out observation $\omega \in \mathbb{R}^d$, we draw an independent latent $z \sim \mathcal{N}(0, \mathbb{I}_d)$ and form the corrupted point and its exact outward derivative,
\begin{equation}
\label{eq:heldout_corruption}
    \omega_s = a_s\, \omega + \sigma_s\, z,
    \qquad
    U_s(\omega, z) = a_s'\, \omega + \sigma_s'\, z .
\end{equation}
We define the denoising-risk and velocity-risk contrasts
\begin{align}
    \psi_s^{\mathrm{den}}(\omega, z)
    &=
    \big| \omega - \widehat{\eta}_s^1(\omega_s) \big|^2
    -
    \big| \omega - \widehat{\eta}_s^0(\omega_s) \big|^2 ,
    \label{eq:denoising_local_witness} \\
    \psi_s^{\mathrm{vel}}(\omega, z)
    &=
    \big| U_s(\omega, z) - \widehat{u}_s^1(\omega_s) \big|^2
    -
    \big| U_s(\omega, z) - \widehat{u}_s^0(\omega_s) \big|^2 .
    \label{eq:velocity_local_witness}
\end{align}
Each contrast is a single scalar function evaluated identically on held-out observations from both datasets.
The group label of $\omega$ is never supplied to the witness; the labels $0$ and $1$ in~\eqref{eq:denoising_local_witness} and~\eqref{eq:velocity_local_witness} refer only to the two learned fields.
Throughout, the latent $z \sim \mathcal{N}(0, \mathbb{I}_d)$ is drawn independently of all other variables. We write, for instance, $\mathbb{E}_{\omega \sim \rho_0,\, z}$ for the expectation over a held-out observation $\omega \sim \rho_0$ and the latent.

\begin{lemma}[Population risk contrasts]
\label{lem:risk_contrasts}
Suppose \cref{ass:moments} holds and that the learned fields equal their population counterparts.
Then for every $s \in (0, \tfrac{1}{2})$,
\begin{align}
    \mathbb{E}_{\omega \sim \rho_0,\, z}\big[\psi_s^{\mathrm{den}}(\omega, z)\big]
    &=
    \mathbb{E}_{x \sim \zeta_s^0}\big[| \eta_s^0(x) - \eta_s^1(x) |^2\big],
    &
    \mathbb{E}_{\omega \sim \rho_1,\, z}\big[\psi_s^{\mathrm{den}}(\omega, z)\big]
    &=
    -\mathbb{E}_{x \sim \zeta_s^1}\big[| \eta_s^0(x) - \eta_s^1(x) |^2\big],
    \label{eq:den_identities} \\
    \mathbb{E}_{\omega \sim \rho_0,\, z}\big[\psi_s^{\mathrm{vel}}(\omega, z)\big]
    &=
    \mathbb{E}_{x \sim \zeta_s^0}\big[| u_s^0(x) - u_s^1(x) |^2\big],
    &
    \mathbb{E}_{\omega \sim \rho_1,\, z}\big[\psi_s^{\mathrm{vel}}(\omega, z)\big]
    &=
    -\mathbb{E}_{x \sim \zeta_s^1}\big[| u_s^0(x) - u_s^1(x) |^2\big],
    \label{eq:vel_identities}
\end{align}
where both sides may be infinite.
Consequently, the population differences of held-out means,
\begin{equation}
\label{eq:signals}
    D_s^{\mathrm{rep}}
    =
    \mathbb{E}_{\omega \sim \rho_0,\, z}\big[\psi_s^{\mathrm{rep}}(\omega, z)\big]
    -
    \mathbb{E}_{\omega \sim \rho_1,\, z}\big[\psi_s^{\mathrm{rep}}(\omega, z)\big],
    \qquad
    \mathrm{rep} \in \{\mathrm{den}, \mathrm{vel}\},
\end{equation}
take values in $[0, \infty]$ and satisfy $D_s^{\mathrm{vel}} = \lambda_s^2 D_s^{\mathrm{den}}$.
Moreover, $D_s^{\mathrm{rep}} = 0$ if and only if $\mathcal{H}_0$ holds.
\end{lemma}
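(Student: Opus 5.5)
The approach is the standard bias--variance (Pythagorean) decomposition of squared error around a conditional mean, applied separately on each channel. I would treat the denoising identities first and handle the velocity identities in exactly the same way. For a held-out $\omega \sim \rho_0$ with independent $z$, \cref{lem:bottleneck} shows that $(\omega, \omega_s)$ has the same joint law as $(x_0, x_{t_s^0})$. Consequently $\mathbb{E}[\omega \mid \omega_s] = \eta_s^0(\omega_s)$, using the continuous version from \cref{lem:smoothing}. For $\ell \in \{0,1\}$ I would expand
\begin{equation*}
|\omega - \eta_s^\ell(\omega_s)|^2 = |\omega - \eta_s^0(\omega_s)|^2 + 2\,\big(\omega - \eta_s^0(\omega_s)\big)\cdot\big(\eta_s^0(\omega_s) - \eta_s^\ell(\omega_s)\big) + |\eta_s^0(\omega_s) - \eta_s^\ell(\omega_s)|^2 .
\end{equation*}
Conditioning on $\omega_s$ makes the cross term vanish. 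For $\ell=0$ the identity is trivial, and for $\ell=1$ it yields $\mathbb{E}_{x\sim\zeta_s^0}|\eta_s^0-\eta_s^1|^2$ plus the common term, which cancels. The $\rho_1$ case is symmetric, with the sign flipped because the true posterior mean is now $\eta_s^1$. For velocities the same argument applies with $\omega$ replaced by $U_s(\omega,z)$, since $\mathbb{E}[U_s \mid \omega_s] = u_s^\ell(\omega_s)$ by definition~\eqref{eq:denoiser}.

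The main obstacle is rigour when the quantities may be infinite. The subtraction in $\psi$ must make sense, and the cross term must be integrable before it can be conditioned away. I would handle this as follows. By \cref{ass:moments}, $\omega$ and $U_s$ are square integrable. By conditional Jensen, so are $\eta_s^0(\omega_s)$ and $u_s^0(\omega_s)$. Hence $|\omega-\eta_s^0(\omega_s)|^2$ has finite mean, and I can write $\psi^{\mathrm{den}}_s = \pm(\text{cross} + |\eta_s^0-\eta_s^1|^2)$. If $\mathbb{E}|\eta_s^0-\eta_s^1|^2(\omega_s)<\infty$, then Cauchy--Schwarz makes the cross term integrable, so its expectation is zero and the identity holds. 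If that expectation is infinite, I would argue that $\psi$ has infinite expectation of the appropriate sign. One route is to condition on $\omega_s$ and apply the tower property to the nonnegative part. Alternatively, one can note that $\mathbb{E}[|\omega-\eta^1_s|^2]=\infty$ while $\mathbb{E}[|\omega-\eta^0_s|^2]<\infty$, so the difference is $+\infty$ in the extended sense. In fact, for $\ell=1$ evaluated on channel $0$, $\eta^1_s$ may fail to be square integrable under $\zeta_s^0$, which is why both sides may be infinite.

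With the identities in hand, I would substitute them into~\eqref{eq:signals}. This gives
\begin{equation*}
D_s^{\mathrm{den}} = \mathbb{E}_{\zeta_s^0}|\eta_s^0-\eta_s^1|^2 + \mathbb{E}_{\zeta_s^1}|\eta_s^0-\eta_s^1|^2 \in [0,\infty],
\end{equation*}
which is well defined as a sum of nonnegative terms. \Cref{lem:relations}(iii) gives $u_s^1-u_s^0 = \lambda_s(\eta_s^1-\eta_s^0)$ pointwise, so $D_s^{\mathrm{vel}} = \lambda_s^2 D_s^{\mathrm{den}}$, and this holds also in the infinite case.

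Finally, $D_s^{\mathrm{den}}=0$ forces $\eta_s^0=\eta_s^1$ $\zeta_s^0$-almost everywhere. Since $\zeta_s^0>0$ everywhere (\cref{lem:smoothing}) and both fields are continuous, equality holds everywhere. \Cref{prop:invariance} then gives $\mathcal{H}_0$. Conversely, $\mathcal{H}_0$ implies $\eta_s^0=\eta_s^1$ by the same proposition, so $D_s^{\mathrm{den}}=0$. The velocity case follows because $\lambda_s>0$.
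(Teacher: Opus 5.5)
Your proof is correct and follows the paper's argument essentially step for step. The shared steps are: the expansion of $|\omega-\eta_s^1(\omega_s)|^2$ around the true posterior mean $\eta_s^0(\omega_s)$, with the cross term removed by Cauchy--Schwarz and the tower property; the observation that $\omega-\eta_s^1(\omega_s)$ cannot be square integrable when the field difference is not, since the residual is; and the passage through \cref{lem:relations}(iii), strict positivity of $\zeta_s^0$, continuity and \cref{prop:invariance} for the remaining claims.

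A harmless aside: your remark that $\eta_s^1$ may fail to be square integrable under $\zeta_s^0$ never actually happens under \cref{ass:moments}. A Gaussian-channel posterior mean grows at most linearly, $|\eta_s^1(x)|\le C(1+|x|)$, so the infinite branch, in your proof and in the paper's, is vacuous.
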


\begin{proof}
The identities follow from the orthogonality of regression residuals, and the final claim follows from \cref{prop:invariance}.
Fix $s \in (0, \tfrac{1}{2})$, and consider first the denoising contrast for $\omega \sim \rho_0$; in this paragraph, $\mathbb{E}$ denotes the expectation over $\omega \sim \rho_0$ and $z$.
By \cref{lem:bottleneck}, the pair $(\omega, \omega_s)$ has the same joint law as $(x_0, x_{t_s^0})$, so $\omega_s \sim \zeta_s^0$ and $\eta_s^0(\omega_s) = \mathbb{E}[\omega \mid \omega_s]$.
Write $e = \omega - \eta_s^0(\omega_s)$ and $g = \eta_s^0(\omega_s) - \eta_s^1(\omega_s)$, so that $\omega - \eta_s^1(\omega_s) = e + g$ and $\psi_s^{\mathrm{den}}(\omega, z) = |e + g|^2 - |e|^2$.
Since $e$ is the residual of a conditional expectation, $\mathbb{E}|e|^2 = \mathbb{E}|\omega|^2 - \mathbb{E}|\eta_s^0(\omega_s)|^2 \leq \mathbb{E}_{\omega \sim \rho_0}|\omega|^2 < \infty$ under \cref{ass:moments}.
The negative part of $\psi_s^{\mathrm{den}}$ is therefore integrable, and its expectation is well defined in $(-\infty, \infty]$.
If $\mathbb{E}|g|^2 < \infty$, then the cross term $\mathbb{E}[e \cdot g]$ is integrable by the Cauchy--Schwarz inequality and vanishes by the tower property, because $g$ is a function of $\omega_s$ and $\mathbb{E}[e \mid \omega_s] = 0$.
Expanding the square then gives
\begin{equation}
    \mathbb{E}\big[\psi_s^{\mathrm{den}}(\omega, z)\big]
    =
    \mathbb{E}|g|^2
    =
    \mathbb{E}_{x \sim \zeta_s^0}\big[|\eta_s^0(x) - \eta_s^1(x)|^2\big],
\end{equation}
which is the first identity in~\eqref{eq:den_identities}.
If instead $\mathbb{E}|g|^2 = \infty$, then $e + g$ is not square integrable, because $e$ is, and both sides equal $+\infty$.

For $\omega \sim \rho_1$, the pair $(\omega, \omega_s)$ has the same law as $(x_1, x_{t_s^1})$, so $\omega_s \sim \zeta_s^1$.
The same argument with the roles of the two denoisers exchanged gives $\mathbb{E}_{\omega \sim \rho_1,\, z}[\psi_s^{\mathrm{den}}(\omega, z)] = -\mathbb{E}_{x \sim \zeta_s^1}[|\eta_s^0(x) - \eta_s^1(x)|^2] \in [-\infty, 0]$, which is the second identity in~\eqref{eq:den_identities}.
The velocity identities~\eqref{eq:vel_identities} follow identically, replacing $\omega$ by the target $U_s(\omega, z)$.
This target is square integrable under \cref{ass:moments}, and $u_s^\ell(\omega_s) = \mathbb{E}[U_s(\omega, z) \mid \omega_s]$ when $\omega \sim \rho_\ell$.

Since the first identity in each row is non-negative and the second is non-positive, their difference $D_s^{\mathrm{rep}}$ is well defined in $[0, \infty]$.
By \Cref{lem:relations}~(iii), $|u_s^0 - u_s^1|^2 = \lambda_s^2 |\eta_s^0 - \eta_s^1|^2$ pointwise, so $D_s^{\mathrm{vel}} = \lambda_s^2 D_s^{\mathrm{den}}$.
If $\mathcal{H}_0$ holds, then $\eta_s^0 = \eta_s^1$ by \cref{prop:invariance}, and $D_s^{\mathrm{den}} = 0$.
Conversely, if $D_s^{\mathrm{den}} = 0$, then $\eta_s^0 = \eta_s^1$ holds $\zeta_s^0$-almost everywhere.
This equality then holds Lebesgue-almost everywhere because $\zeta_s^0 > 0$, and everywhere because both fields are continuous.
\Cref{prop:invariance} then gives $\mathcal{H}_0$.
The same conclusion holds for $D_s^{\mathrm{vel}}$, since $\lambda_s > 0$.
\end{proof}

\subsection{Path Aggregation}
\label{sec:aggregation}

We aggregate the local contrasts over the truncated interval $\mathcal{I}_\tau = [\tau, \tfrac{1}{2} - \tau]$, for some $\tau \in (0, \tfrac{1}{4})$, using a positive weight $w(s)$,
\begin{equation}
\label{eq:aggregated_witness}
    \Phi_w^{\mathrm{rep}}(\omega)
    =
    \mathbb{E}_{z}\!\left[ \int_{\mathcal{I}_\tau} w(s)\, \psi_s^{\mathrm{rep}}(\omega, z)\, \mathrm{d}s \right],
    \qquad
    \mathrm{rep} \in \{\mathrm{den}, \mathrm{vel}\}.
\end{equation}
We consider two weighting schemes.
The uniform scheme sets $w_{\mathrm{U}}(s) = 1$ and assigns equal path measure to all retained radii.
The Jeffreys scheme uses the information-theoretically canonical weights
\begin{equation}
\label{eq:jeff_weights}
    w_{\mathrm{J}}^{\mathrm{den}}(s) = \frac{\lambda_s^2}{\chi_s} = \frac{\lambda_s\, a_s}{\sigma_s^2},
    \qquad
    w_{\mathrm{J}}^{\mathrm{vel}}(s) = \frac{1}{\chi_s},
\end{equation}
whose origin is made precise in \cref{prop:jeffreys}.
Given held-out subsets of $\mathcal{D}_0$ and $\mathcal{D}_1$ of sizes $n_0^{\mathrm{te}}$ and $n_1^{\mathrm{te}}$, the empirical statistic is the difference of held-out group means,
\begin{equation}
\label{eq:T_generic_weighted}
    \widehat{T}_w^{\mathrm{rep}}
    =
    \frac{1}{n_0^{\mathrm{te}}} \sum_{i=1}^{n_0^{\mathrm{te}}} \widehat{\Phi}_w^{\mathrm{rep}}\big(x_0^{(i)}\big)
    -
    \frac{1}{n_1^{\mathrm{te}}} \sum_{j=1}^{n_1^{\mathrm{te}}} \widehat{\Phi}_w^{\mathrm{rep}}\big(x_1^{(j)}\big),
\end{equation}
where $\widehat{\Phi}_w^{\mathrm{rep}}$ denotes the Monte Carlo estimate of \cref{sec:path_mc}.
Normalising $w$ by a constant is immaterial, since a deterministic rescaling does not affect the permutation test of \cref{sec:permutation}.

\begin{corollary}[Population target]
\label{cor:target}
Suppose the conditions of \cref{lem:risk_contrasts} hold, that the sampling density $\mu$ of \cref{sec:path_mc} is strictly positive on $\mathcal{I}_\tau$, and that $\int_{\mathcal{I}_\tau} w(s)\, \mathbb{E}_{\omega \sim \rho_\ell,\, z}\big[|\psi_s^{\mathrm{rep}}(\omega, z)|\big]\, \mathrm{d}s < \infty$ for $\ell \in \{0, 1\}$.
Then the population mean of $\widehat{T}_w^{\mathrm{rep}}$ is
\begin{equation}
\label{eq:population_target}
    T_w^{\mathrm{rep}} = \int_{\mathcal{I}_\tau} w(s)\, D_s^{\mathrm{rep}}\, \mathrm{d}s \geq 0,
\end{equation}
with equality if and only if $\mathcal{H}_0$ holds.
\end{corollary}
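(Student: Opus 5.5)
The plan is to compute the population mean of $\widehat{T}_w^{\mathrm{rep}}$ in two layers: first average over the Monte Carlo randomness of \cref{sec:path_mc}, then over the held-out observations. Then Fubini--Tonelli, licensed by the stated integrability condition, reduces everything to \cref{lem:risk_contrasts}.

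First, since $\mu$ is strictly positive on $\mathcal{I}_\tau$, the Monte Carlo estimator is an importance-sampling estimate of the integral in~\eqref{eq:aggregated_witness}. A typical instance draws $s \sim \mu$ and $z$ independently and returns $w(s)\psi_s^{\mathrm{rep}}(\omega,z)/\mu(s)$, possibly averaged over several draws. Its conditional expectation given $\omega$ is therefore $\Phi_w^{\mathrm{rep}}(\omega)$ whenever the latter is well defined. I would state this unbiasedness explicitly, reading off the form of the estimator from \cref{sec:path_mc}.

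Next I take expectations over the held-out points. The held-out samples are i.i.d.\ within each group and independent of the training split, and the fields are taken to be the population fields. Hence
\begin{equation*}
\mathbb{E}\,\widehat{T}_w^{\mathrm{rep}} = \mathbb{E}_{\omega\sim\rho_0}\Phi_w^{\mathrm{rep}}(\omega) - \mathbb{E}_{\omega\sim\rho_1}\Phi_w^{\mathrm{rep}}(\omega).
\end{equation*}
The assumption $\int_{\mathcal{I}_\tau} w(s)\,\mathbb{E}_{\omega\sim\rho_\ell,z}|\psi_s^{\mathrm{rep}}|\,\mathrm{d}s<\infty$ makes $w(s)\psi_s^{\mathrm{rep}}(\omega,z)$ jointly integrable over $(s,\omega,z)$. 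This requires measurability in $s$, which follows from the continuity of $a_s,\sigma_s$ and of the field versions; I would note this rather than belabour it. Fubini then lets me exchange the $s$-integral with the expectations over $\omega$ and $z$. This also shows that $\Phi_w^{\mathrm{rep}}$ is finite for $\rho_\ell$-a.e.\ $\omega$, which retroactively justifies the previous step. Each group term becomes $\int_{\mathcal{I}_\tau} w(s)\,\mathbb{E}_{\omega\sim\rho_\ell,z}[\psi_s^{\mathrm{rep}}]\,\mathrm{d}s$, and both terms are finite. Subtracting them under the integral gives $\int_{\mathcal{I}_\tau} w(s) D_s^{\mathrm{rep}}\,\mathrm{d}s$ by~\eqref{eq:signals}, and integrability ensures each $D_s^{\mathrm{rep}}$ is finite for a.e.\ $s$.

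Finally, nonnegativity follows because $w>0$ and $D_s^{\mathrm{rep}}\in[0,\infty]$ by \cref{lem:risk_contrasts}. If $\mathcal{H}_0$ holds, then $D_s^{\mathrm{rep}}=0$ for every $s$, so $T_w^{\mathrm{rep}}=0$. Conversely, if $T_w^{\mathrm{rep}}=0$, then since $w>0$ we get $D_s^{\mathrm{rep}}=0$ for Lebesgue-a.e.\ $s\in\mathcal{I}_\tau$. Because $\mathcal{I}_\tau$ has positive length ($\tau<1/4$), there is at least one such $s\in(0,\tfrac12)$, and \cref{lem:risk_contrasts} (equivalently \cref{prop:invariance}) then yields $\mathcal{H}_0$. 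The only delicate point is the Fubini exchange together with the well-definedness of the difference of expectations. Both are handled by the stated integrability hypothesis, so I expect no real obstacle beyond checking joint measurability.
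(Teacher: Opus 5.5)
Your proof is correct and follows the paper's argument. The paper likewise uses the integrability hypothesis and $\mu>0$ on $\mathcal{I}_\tau$ to apply Tonelli and then Fubini to a single importance-weighted term $w(s)\psi_s^{\mathrm{rep}}(\omega,z)/\mu(s)$, obtains $\int_{\mathcal{I}_\tau} w(s)D_s^{\mathrm{rep}}\,\mathrm{d}s$ by linearity, and concludes from the a.e.\ vanishing of $D_s^{\mathrm{rep}}$ on the positive-length interval $\mathcal{I}_\tau$ together with \cref{lem:risk_contrasts}. The only difference is presentational: you pass through $\Phi_w^{\mathrm{rep}}(\omega)$ via the tower property, whereas the paper takes the joint expectation over $(\omega,s,z)$ directly, which avoids the retroactive justification of your first step.
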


\begin{proof}
We compute the mean of a single Monte Carlo term, then apply linearity.
Let $\omega \sim \rho_\ell$, $s \sim \mu$ and $z \sim \mathcal{N}(0, \mathbb{I}_d)$ be independent.
By Tonelli's theorem, the integrability condition in the statement gives
\begin{equation}
    \mathbb{E}_{\omega \sim \rho_\ell,\, s \sim \mu,\, z}\!\left[ \frac{w(s)}{\mu(s)}\, \big|\psi_s^{\mathrm{rep}}(\omega, z)\big| \right]
    =
    \int_{\mathcal{I}_\tau} w(s)\, \mathbb{E}_{\omega \sim \rho_\ell,\, z}\big[|\psi_s^{\mathrm{rep}}(\omega, z)|\big]\, \mathrm{d}s
    < \infty ,
\end{equation}
where we used $\mu > 0$ on $\mathcal{I}_\tau$.
Fubini's theorem then gives
\begin{equation}
    \mathbb{E}_{\omega \sim \rho_\ell,\, s \sim \mu,\, z}\!\left[ \frac{w(s)}{\mu(s)}\, \psi_s^{\mathrm{rep}}(\omega, z) \right]
    =
    \int_{\mathcal{I}_\tau} w(s)\, \mathbb{E}_{\omega \sim \rho_\ell,\, z}\big[\psi_s^{\mathrm{rep}}(\omega, z)\big]\, \mathrm{d}s ,
\end{equation}
and the inner expectations are finite for almost every $s \in \mathcal{I}_\tau$.
Each term of~\eqref{eq:mc_witness} has this mean for an observation from $\rho_\ell$, so by linearity the mean of $\widehat{T}_w^{\mathrm{rep}}$ is
\begin{equation}
    \int_{\mathcal{I}_\tau} w(s) \Big( \mathbb{E}_{\omega \sim \rho_0,\, z}\big[\psi_s^{\mathrm{rep}}(\omega, z)\big] - \mathbb{E}_{\omega \sim \rho_1,\, z}\big[\psi_s^{\mathrm{rep}}(\omega, z)\big] \Big)\, \mathrm{d}s
    =
    \int_{\mathcal{I}_\tau} w(s)\, D_s^{\mathrm{rep}}\, \mathrm{d}s .
\end{equation}
The integrand is non-negative by \cref{lem:risk_contrasts}, and $w > 0$, so the integral vanishes if and only if $D_s^{\mathrm{rep}} = 0$ for almost every $s \in \mathcal{I}_\tau$.
If $\mathcal{H}_0$ holds, then $D_s^{\mathrm{rep}} = 0$ for every $s$.
Conversely, since $\tau < \tfrac{1}{4}$, the interval $\mathcal{I}_\tau$ has positive length, so if $D_s^{\mathrm{rep}} = 0$ for almost every $s \in \mathcal{I}_\tau$, it vanishes for at least one $s$, and \cref{lem:risk_contrasts} gives $\mathcal{H}_0$.
\end{proof}

We now make precise the origin of the Jeffreys weights~\eqref{eq:jeff_weights}.
The identity below relies on the following technical condition, which is used nowhere else in the paper.

\begin{assumption}[Regularity of the divergences]
\label{ass:regularity}
For every $S \in (0, \tfrac{1}{2})$, the maps $s \mapsto \mathrm{KL}(\zeta_s^0 \,\|\, \zeta_s^1)$ and $s \mapsto \mathrm{KL}(\zeta_s^1 \,\|\, \zeta_s^0)$ are absolutely continuous on every compact subinterval of $(0, S]$. Their derivatives may be computed by differentiating under the integral sign and integrating by parts with vanishing boundary terms.
\end{assumption}

For a density $\zeta$ on $\mathbb{R}^d$ and a measurable map $f \colon \mathbb{R}^d \to \mathbb{R}^d$, we write
\begin{equation}
\label{eq:l2_norm}
    \| f \|_{L^2(\zeta)} = \Big( \mathbb{E}_{x \sim \zeta}\big[|f(x)|^2\big] \Big)^{1/2}.
\end{equation}

\begin{proposition}[Jeffreys identity]
\label{prop:jeffreys}
Under \cref{ass:moments,ass:regularity}, for every $S \in (0, \tfrac{1}{2})$,
\begin{equation}
\label{eq:jeffreys_identity}
    \boxed{
    \mathcal{J}_S
    :=
    \int_0^S \frac{D_s^{\mathrm{vel}}}{\chi_s}\, \mathrm{d}s
    =
    \int_0^S w_{\mathrm{J}}^{\mathrm{den}}(s)\, D_s^{\mathrm{den}}\, \mathrm{d}s
    =
    \mathrm{KL}\big(\zeta_S^0 \,\big\|\, \zeta_S^1\big)
    +
    \mathrm{KL}\big(\zeta_S^1 \,\big\|\, \zeta_S^0\big).
    }
\end{equation}
\end{proposition}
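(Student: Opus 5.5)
The first equality in \eqref{eq:jeffreys_identity} is pointwise in $s$, so I would handle it first. By \cref{lem:risk_contrasts}, $D_s^{\mathrm{vel}} = \lambda_s^2 D_s^{\mathrm{den}}$, hence $D_s^{\mathrm{vel}}/\chi_s = (\lambda_s^2/\chi_s) D_s^{\mathrm{den}} = w_{\mathrm{J}}^{\mathrm{den}}(s) D_s^{\mathrm{den}}$ by \eqref{eq:jeff_weights}; the integrands agree as elements of $[0,\infty]$. The remaining work is to identify $\mathcal{J}_S$ with the symmetrised KL divergence, which I would do by differentiating the KL divergences in $s$ and integrating back.

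\textbf{Step 1: continuity equation.} For each channel, \eqref{eq:channels} and the definition of $u_s^\ell$ in \eqref{eq:denoiser} give, for any test function $f \in C_c^\infty(\mathbb{R}^d)$,
\begin{equation*}
\partial_s \mathbb{E}\big[f(x_{t_s^\ell})\big] = \mathbb{E}\big[\nabla f(x_{t_s^\ell}) \cdot \partial_s x_{t_s^\ell}\big] = \mathbb{E}\big[\nabla f(x_{t_s^\ell}) \cdot u_s^\ell(x_{t_s^\ell})\big],
\end{equation*}
where the second equality uses the tower property. Hence
\begin{equation*}
\partial_s \zeta_s^\ell + \nabla \cdot \big(u_s^\ell \zeta_s^\ell\big) = 0 \qquad \text{on } (0,\tfrac{1}{2}).
\end{equation*}
Since $\zeta_s^\ell$ is smooth in $x$ by \cref{lem:smoothing}, and smooth in $s$ by the explicit convolution formula \eqref{eq:smoothed_density}, this equation can be taken to hold classically.

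\textbf{Step 2: derivative of each KL.} Using \cref{ass:regularity} to differentiate under the integral sign and to integrate by parts without boundary terms, I would compute
\begin{equation*}
\tfrac{\mathrm{d}}{\mathrm{d}s}\mathrm{KL}(\zeta_s^0\|\zeta_s^1) = \int \partial_s\zeta_s^0 \log\tfrac{\zeta_s^0}{\zeta_s^1} - \int \tfrac{\zeta_s^0}{\zeta_s^1}\,\partial_s\zeta_s^1 = \int \zeta_s^0\,\big(u_s^0-u_s^1\big)\cdot\nabla\log\tfrac{\zeta_s^0}{\zeta_s^1}.
\end{equation*}
The term $\int \partial_s \zeta_s^0$ that would otherwise appear vanishes because mass is conserved. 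In the second integral, integrating by parts gives $\int \zeta_s^1 u_s^1 \cdot \nabla(\zeta_s^0/\zeta_s^1) = \int \zeta_s^0 u_s^1 \cdot \nabla \log(\zeta_s^0/\zeta_s^1)$. By \cref{lem:relations}~(iii), $u_s^0 - u_s^1 = \chi_s \nabla\log(\zeta_s^0/\zeta_s^1)$, so the derivative equals $\chi_s^{-1}\,\mathbb{E}_{\zeta_s^0}|u_s^0-u_s^1|^2$. The symmetric computation gives $\chi_s^{-1}\,\mathbb{E}_{\zeta_s^1}|u_s^0-u_s^1|^2$ for the reverse divergence. By the identities \eqref{eq:vel_identities} and the definition \eqref{eq:signals}, the sum of the two derivatives is exactly $D_s^{\mathrm{vel}}/\chi_s$.

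\textbf{Step 3: integration and the limit at the bottleneck.} Absolute continuity on $[\varepsilon,S]$ gives
\begin{equation*}
\int_\varepsilon^S \frac{D_s^{\mathrm{vel}}}{\chi_s}\,\mathrm{d}s = J(S) - J(\varepsilon), \qquad J(s) = \mathrm{KL}(\zeta_s^0\|\zeta_s^1) + \mathrm{KL}(\zeta_s^1\|\zeta_s^0).
\end{equation*}
Since the integrand is nonnegative, monotone convergence handles the left side as $\varepsilon \downarrow 0$, so it suffices to show $J(\varepsilon) \to 0$.

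I expect this limit to be the main obstacle, because \cref{ass:regularity} only covers compact subintervals of $(0,S]$ and so says nothing about the endpoint $s = 0$. My plan is to represent each $\zeta_s^\ell$ as a mixture of the Gaussians $\mathcal{N}(a_s y_\ell, \sigma_s^2 \mathbb{I}_d)$ under the independent coupling $y_0 \sim \rho_0$, $y_1 \sim \rho_1$. Joint convexity of KL then gives
\begin{equation*}
\mathrm{KL}(\zeta_s^0\|\zeta_s^1) \le \frac{a_s^2}{2\sigma_s^2}\,\mathbb{E}|y_0-y_1|^2,
\end{equation*}
and the same bound holds for the reverse divergence. The expectation is finite by \cref{ass:moments}, and the prefactor tends to $0$ as $s \to 0$ because $a_s \to 0$ and $\sigma_s^2 \to \tfrac{1}{2}$. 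Hence $J(\varepsilon) \to 0$, which yields \eqref{eq:jeffreys_identity}.

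A secondary point to check is that the interchanges in Step 2 are legitimate when $D_s^{\mathrm{vel}}$ might be infinite; I would defer this to \cref{ass:regularity} as stated.
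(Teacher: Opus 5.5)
Your proposal is correct and follows essentially the same route as the paper. Both arguments take the middle equality from $D_s^{\mathrm{vel}} = \lambda_s^2 D_s^{\mathrm{den}}$, use weak continuity equations for each channel, and differentiate each divergence via \cref{ass:regularity} and \Cref{lem:relations}~(iii) to obtain $\chi_s^{-1}\|u_s^0-u_s^1\|^2_{L^2(\zeta_s^\ell)}$. Both then integrate from $\varepsilon$ to $S$ and use the joint-convexity bound $a_s^2\,\mathbb{E}|y_0-y_1|^2/(2\sigma_s^2)$ to kill the divergences at the pivot. The only difference is minor: the paper justifies the exchange of $\partial_s$ and $\mathbb{E}$ in Step~1 by domination (bounded $\nabla f$, and $a_s'$, $\sigma_s'$ bounded on compact subintervals), which you should add in one line, whereas your remark that the equation also holds classically is true but not needed.
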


\begin{proof}
The middle equality in~\eqref{eq:jeffreys_identity} follows from \cref{lem:risk_contrasts}, since $w_{\mathrm{J}}^{\mathrm{den}}(s)\, D_s^{\mathrm{den}} = (\lambda_s^2 / \chi_s)\, D_s^{\mathrm{den}} = D_s^{\mathrm{vel}} / \chi_s$.
For the remaining equality, we derive continuity equations for the two channels, differentiate the resulting divergences, and integrate from the pivot.
Within this proof, we write $F_s = \mathrm{KL}(\zeta_s^0 \| \zeta_s^1)$ and $G_s = \mathrm{KL}(\zeta_s^1 \| \zeta_s^0)$.

We begin with the continuity equations.
Let $f \in C_c^\infty(\mathbb{R}^d)$, and let $s$ range over a compact subinterval of $(0, \tfrac{1}{2})$.
The map $s \mapsto x_{t_s^\ell}$ is differentiable with derivative $\partial_s x_{t_s^\ell} = a_s' x_\ell + \sigma_s' z$.
This derivative is dominated by an integrable variable, uniformly in $s$, because $a_s'$ and $\sigma_s'$ are bounded on the subinterval and $x_\ell$ and $z$ are integrable.
Since $\nabla f$ is bounded, we may differentiate under the expectation, and conditioning on $x_{t_s^\ell}$ gives
\begin{equation}
    \frac{\mathrm{d}}{\mathrm{d}s}\, \mathbb{E}_{x \sim \zeta_s^\ell}\big[f(x)\big]
    =
    \mathbb{E}\big[\nabla f(x_{t_s^\ell}) \cdot \partial_s x_{t_s^\ell}\big]
    =
    \mathbb{E}_{x \sim \zeta_s^\ell}\big[\nabla f(x) \cdot u_s^\ell(x)\big] .
\end{equation}
Hence $\partial_s \zeta_s^0 + \nabla \cdot (u_s^0\, \zeta_s^0) = 0$ and $\partial_s \zeta_s^1 + \nabla \cdot (u_s^1\, \zeta_s^1) = 0$ in the weak sense.

Next, we differentiate the divergences.
Fix $s \in (0, \tfrac{1}{2})$.
\Cref{ass:regularity} permits differentiating $F_s = \int \zeta_s^0 \log(\zeta_s^0 / \zeta_s^1)\, \mathrm{d}x$ under the integral sign and testing the continuity equations above against the functions $\log(\zeta_s^0 / \zeta_s^1)$ and $\zeta_s^0 / \zeta_s^1$, which are not compactly supported.
Differentiating gives
\begin{equation}
    \partial_s F_s
    =
    \int \partial_s \zeta_s^0\, \log \frac{\zeta_s^0}{\zeta_s^1}\, \mathrm{d}x
    +
    \int \partial_s \zeta_s^0\, \mathrm{d}x
    -
    \int \frac{\zeta_s^0}{\zeta_s^1}\, \partial_s \zeta_s^1\, \mathrm{d}x .
\end{equation}
The middle term vanishes because $\zeta_s^0$ integrates to one for all $s$.
Substituting the continuity equations and integrating by parts, the first term becomes $\mathbb{E}_{x \sim \zeta_s^0}[u_s^0(x) \cdot \nabla \log(\zeta_s^0 / \zeta_s^1)(x)]$.
Similarly, the third term becomes $-\int \zeta_s^1\, u_s^1 \cdot \nabla(\zeta_s^0 / \zeta_s^1)\, \mathrm{d}x = -\mathbb{E}_{x \sim \zeta_s^0}[u_s^1(x) \cdot \nabla \log(\zeta_s^0 / \zeta_s^1)(x)]$, using $\zeta_s^1 \nabla(\zeta_s^0 / \zeta_s^1) = \zeta_s^0 \nabla \log(\zeta_s^0 / \zeta_s^1)$.
Hence
\begin{equation}
    \partial_s F_s = \mathbb{E}_{x \sim \zeta_s^0}\Big[\big(u_s^0(x) - u_s^1(x)\big) \cdot \nabla \log \frac{\zeta_s^0}{\zeta_s^1}(x)\Big].
\end{equation}
By \Cref{lem:relations}~(iii), $\nabla \log(\zeta_s^0 / \zeta_s^1) = (u_s^0 - u_s^1)/\chi_s$, so $\partial_s F_s = \| u_s^0 - u_s^1 \|_{L^2(\zeta_s^0)}^2 / \chi_s$.
Exchanging the roles of the two channels gives $\partial_s G_s = \| u_s^0 - u_s^1 \|_{L^2(\zeta_s^1)}^2 / \chi_s$, and adding the two yields
\begin{equation}
\label{eq:jeffreys_derivative}
    \partial_s (F_s + G_s) = \frac{D_s^{\mathrm{vel}}}{\chi_s} \geq 0 .
\end{equation}

It remains to control the divergences near the pivot.
Write $\zeta_s^0$ and $\zeta_s^1$ as mixtures of $\mathcal{N}(a_s y, \sigma_s^2 \mathbb{I}_d)$ and $\mathcal{N}(a_s y', \sigma_s^2 \mathbb{I}_d)$, respectively, over the same mixing law $(y, y') \sim \rho_0 \otimes \rho_1$.
By joint convexity of the Kullback--Leibler divergence,
\begin{equation}
\label{eq:kl_bound}
    F_s
    \leq
    \mathbb{E}_{(y, y') \sim \rho_0 \otimes \rho_1}\Big[\mathrm{KL}\big(\mathcal{N}(a_s y, \sigma_s^2 \mathbb{I}_d) \,\big\|\, \mathcal{N}(a_s y', \sigma_s^2 \mathbb{I}_d)\big)\Big]
    =
    \frac{a_s^2}{2\sigma_s^2}\, \mathbb{E}_{(y, y') \sim \rho_0 \otimes \rho_1}\big[|y - y'|^2\big],
\end{equation}
and the same bound holds for $G_s$.
The expectation on the right is finite under \cref{ass:moments}.
Since $a_s \to 0$ and $\sigma_s^2 \to \tfrac{1}{2}$ as $s \downarrow 0$, both $F_s$ and $G_s$ tend to zero, and both are finite for every $s < \tfrac{1}{2}$.

To conclude, let $0 < \epsilon < S < \tfrac{1}{2}$.
By \cref{ass:regularity}, $F_s + G_s$ is absolutely continuous on $[\epsilon, S]$, so integrating~\eqref{eq:jeffreys_derivative} gives
\begin{equation}
    F_S + G_S - F_\epsilon - G_\epsilon = \int_\epsilon^S \frac{D_s^{\mathrm{vel}}}{\chi_s}\, \mathrm{d}s .
\end{equation}
As $\epsilon \downarrow 0$, the left-hand side converges to $F_S + G_S$ by~\eqref{eq:kl_bound}.
The right-hand side converges to $\mathcal{J}_S$ by monotone convergence, since the integrand is non-negative.
This proves~\eqref{eq:jeffreys_identity}.
\end{proof}

The Jeffreys-weighted statistics thus accumulate the local path discrepancy into a global information divergence between the two channels.
By \cref{cor:target}, their population target on $\mathcal{I}_\tau$ is $\mathcal{J}_{1/2 - \tau} - \mathcal{J}_\tau$.

\begin{remark}
\label{rem:jeffreys_limit}
Under the assumptions of \cref{prop:jeffreys}, the divergence $\mathcal{J}_S$ is non-decreasing in $S$, and as $S \uparrow \tfrac{1}{2}$ it converges in $[0, \infty]$ to the Jeffreys divergence $\mathrm{KL}(\rho_0 \| \rho_1) + \mathrm{KL}(\rho_1 \| \rho_0)$.
\end{remark}

\begin{proof}
Monotonicity is immediate, since $\mathcal{J}_S$ is the integral of a non-negative function over $(0, S]$.
For the limit, it suffices by \cref{prop:jeffreys} to show that $\mathrm{KL}(\zeta_S^0 \| \zeta_S^1) \to \mathrm{KL}(\rho_0 \| \rho_1)$, since the reverse divergence is treated identically.
The densities $\zeta_S^0$ and $\zeta_S^1$ are the images of $\rho_0$ and $\rho_1$ under the same Markov kernel $y \mapsto \mathcal{N}(a_S y, \sigma_S^2 \mathbb{I}_d)$.
The data-processing inequality therefore gives $\mathrm{KL}(\zeta_S^0 \| \zeta_S^1) \leq \mathrm{KL}(\rho_0 \| \rho_1)$ for every $S$.
As $S \uparrow \tfrac{1}{2}$, we have $a_S \to 1$ and $\sigma_S \to 0$, so $x_{t_S^\ell} \to x_\ell$ in $L^2$ and hence in distribution.
Since the Kullback--Leibler divergence is jointly lower semicontinuous with respect to weak convergence, $\liminf_{S \uparrow 1/2} \mathrm{KL}(\zeta_S^0 \| \zeta_S^1) \geq \mathrm{KL}(\rho_0 \| \rho_1)$.
Combining the two bounds gives convergence in $[0, \infty]$.
\end{proof}

\subsection{Held-Out Permutation Calibration}
\label{sec:permutation}

We split each dataset into a training part and a held-out part, $\mathcal{D}_\ell = \mathcal{D}_\ell^{\mathrm{tr}} \cup \mathcal{D}_\ell^{\mathrm{te}}$ for $\ell \in \{0, 1\}$.
The fields $\widehat{\eta}^\ell$ and $\widehat{u}^\ell$ are fitted on $\mathcal{D}_\ell^{\mathrm{tr}}$ alone, and all statistics are evaluated on $\mathcal{D}^{\mathrm{te}} = (\mathcal{D}_0^{\mathrm{te}}, \mathcal{D}_1^{\mathrm{te}})$.
Conditionally on the training data and on the randomness of training, the learned fields are fixed, and every held-out observation is mapped through the same scalar witness $\widehat{\Phi}$.
Writing $\widehat{\Phi}_i^0 = \widehat{\Phi}(x_0^{(i)})$ and $\widehat{\Phi}_j^1 = \widehat{\Phi}(x_1^{(j)})$ for the held-out scores, we use the generic statistic
\begin{equation}
\label{eq:generic_test_stat}
    \widehat{T}
    =
    \frac{1}{n_0^{\mathrm{te}}} \sum_{i=1}^{n_0^{\mathrm{te}}} \widehat{\Phi}_i^0
    -
    \frac{1}{n_1^{\mathrm{te}}} \sum_{j=1}^{n_1^{\mathrm{te}}} \widehat{\Phi}_j^1 .
\end{equation}
For each permutation $r = 1, \dots, R$, the pooled scores are randomly relabelled while preserving the group sizes, yielding $\widehat{T}^{(r)}$.
The two-sided permutation $p$-value is
\begin{equation}
\label{eq:permutation_pvalue}
    \widehat{p}
    =
    \frac{1 + \sum_{r=1}^{R} \mathbf{1}\big\{ |\widehat{T}^{(r)}| \geq |\widehat{T}| \big\}}{R + 1},
\end{equation}
and the test is $\Delta(\mathcal{D}_0, \mathcal{D}_1) = \mathbf{1}\{\widehat{p} \leq \kappa\}$.

\begin{proposition}[Finite-sample validity]
\label{prop:validity}
Let $\mathcal{G}$ denote the training data together with all randomness used in training.
Suppose that the held-out data are independent of $\mathcal{G}$, that the Monte Carlo draws are i.i.d.\ across observations and independent of the data and of the group labels, and that the permutations are drawn independently of everything else.
Then, under $\mathcal{H}_0$, for any fitted fields,
\begin{equation}
\label{eq:conditional_validity}
    \mathbb{P}\big[\widehat{p} \leq \kappa \,\big|\, \mathcal{G}\big] \leq \kappa .
\end{equation}
In particular, $\Delta$ has level $\kappa$ in the sense of~\eqref{eq:level}.
\end{proposition}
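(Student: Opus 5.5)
The plan is the standard exchangeability argument for permutation tests, carried out conditionally on $\mathcal{G}$. First, condition on $\mathcal{G}$, so that the fitted fields $\widehat{\eta}^\ell$, $\widehat{u}^\ell$ and hence the map $\widehat{\Phi}$ are deterministic. Each held-out score is $\widehat{\Phi}(\omega, \xi_\omega)$, where $\xi_\omega$ denotes the Monte Carlo draws (radii $s$ and latents $z$) attached to that observation. Under $\mathcal{H}_0$ the pooled held-out observations $x_0^{(1)},\dots,x_0^{(n_0^{\mathrm{te}})},x_1^{(1)},\dots,x_1^{(n_1^{\mathrm{te}})}$ are i.i.d.\ from $\rho_0=\rho_1$ and independent of $\mathcal{G}$. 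By assumption, the Monte Carlo draws are i.i.d.\ across observations and independent of the data and labels. Hence the pairs $(\omega_k,\xi_k)$, $k=1,\dots,n:=n_0^{\mathrm{te}}+n_1^{\mathrm{te}}$, are i.i.d.\ conditionally on $\mathcal{G}$, so the pooled score vector $V=(\widehat{\Phi}_1,\dots,\widehat{\Phi}_n)$ is conditionally i.i.d.\ and in particular exchangeable. Its law is therefore invariant under every permutation $\pi$ of $\{1,\dots,n\}$. The key point is that the witness never sees the group label, so the label enters only through which index an observation occupies.

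Second, write $\widehat{T}=T(V)$ and $\widehat{T}^{(r)}=T(V_{\pi_r})$, where $T(v)=\frac{1}{n_0^{\mathrm{te}}}\sum_{k\le n_0^{\mathrm{te}}}v_k-\frac{1}{n_1^{\mathrm{te}}}\sum_{k>n_0^{\mathrm{te}}}v_k$ and $\pi_1,\dots,\pi_R$ are i.i.d.\ uniform permutations, independent of $V$ and $\mathcal{G}$. (Relabelling while preserving group sizes is equivalent to a uniform random permutation of indices.) Set $\pi_0=\mathrm{id}$ and $Y_r=|T(V_{\pi_r})|$ for $r=0,\dots,R$.

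The claim to establish is that $(Y_0,\dots,Y_R)$ is exchangeable conditionally on $\mathcal{G}$. Take an independent uniform permutation $\sigma$. By invariance of the law of $V$, the tuple $(V_{\pi_0},\dots,V_{\pi_R})$ has the same law as $(V_{\sigma\circ\pi_0},\dots,V_{\sigma\circ\pi_R})$. Moreover, $\{\sigma\circ\pi_r\}_{r=0}^R$ with $\sigma\circ\pi_0=\sigma$ is an i.i.d.\ uniform family (up to the standard group-translation argument), which is exchangeable in $r$. This step is the main obstacle: the identity $\pi_0$ is distinguished, and I expect to handle it exactly by this randomisation trick, i.e.\ showing $(\sigma,\sigma\pi_1,\dots,\sigma\pi_R)$ are i.i.d.\ uniform because the group $S_n$ acts freely and transitively on itself.

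Third, given exchangeability of $(Y_0,\dots,Y_R)$, the rank of $Y_0$ among them, counting ties conservatively in its favour via $\ge$, is stochastically bounded: $\mathbb{P}[\,1+\#\{r\ge1: Y_r\ge Y_0\}\le \kappa(R+1)\mid\mathcal{G}]\le \lfloor\kappa(R+1)\rfloor/(R+1)\le\kappa$. This is the usual counting lemma, obtained by averaging the indicator over the $R+1$ positions: at most $\lfloor\kappa(R+1)\rfloor$ indices can have at most that many values at least as large as themselves. This yields~\eqref{eq:conditional_validity}. Finally, taking expectations over $\mathcal{G}$ gives $\mathbb{P}_{\rho_0,\rho_0}[\Delta=1]\le\kappa$ for every $\rho_0$, hence the supremum bound~\eqref{eq:level}. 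No property of the fitted networks, nor \cref{ass:moments}, is used.
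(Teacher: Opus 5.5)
Your proposal is correct and follows the same route as the paper. Both arguments have three steps: conditional i.i.d.\ of the label-blind scores given $\mathcal{G}$; an auxiliary uniform permutation, with bijectivity of group translation on $S_n^{R+1}$, to make $(|\widehat{T}^{(0)}|,\dots,|\widehat{T}^{(R)}|)$ exchangeable (the paper's $V=W_{g_\ast}$, $W_{g_r}=V_{g_\ast^{-1}\circ g_r}$ is an almost-sure version of your distributional identity); and the tie-robust counting bound, whose assertion that at most $\lfloor\kappa(R+1)\rfloor$ indices $j$ satisfy $\#\{r: Y_r\ge Y_j\}\le\kappa(R+1)$ the paper proves by picking such a $j^\ast$ with minimal $Y_{j^\ast}$.
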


\begin{proof}
We first show that the held-out scores are conditionally exchangeable given $\mathcal{G}$, then that the observed statistic is exchangeable with its permuted copies, and we conclude with a counting argument.

Write $n^{\mathrm{te}} = n_0^{\mathrm{te}} + n_1^{\mathrm{te}}$, and let $W = (W_1, \dots, W_{n^{\mathrm{te}}})$ be the pooled vector of held-out scores, with the first $n_0^{\mathrm{te}}$ entries from $\mathcal{D}_0^{\mathrm{te}}$.
Each score is obtained by applying a $\mathcal{G}$-measurable function to one held-out observation and its own Monte Carlo draws.
Under $\mathcal{H}_0$, the held-out observations are i.i.d.\ from $\rho_0$ and independent of $\mathcal{G}$, and the Monte Carlo draws are i.i.d.\ across observations and independent of the group labels.
Hence, conditionally on $\mathcal{G}$, the entries of $W$ are i.i.d., and in particular $W$ is exchangeable.
No moment condition is needed here, since the learned fields are defined for any data.

To pass from the scores to the statistics, write $W_g = (W_{g(1)}, \dots, W_{g(n^{\mathrm{te}})})$ for a permutation $g$ of $\{1, \dots, n^{\mathrm{te}}\}$, so that $(W_g)_{g'} = W_{g \circ g'}$.
Let $h(W)$ denote the difference of group means~\eqref{eq:generic_test_stat}.
Let $g_0$ be the identity, and let $g_1, \dots, g_R$ be the i.i.d.\ uniform permutations used by the test, so that $\widehat{T}^{(r)} = h(W_{g_r})$.
Draw an auxiliary uniform permutation $g_\ast$, independent of everything else, and set $V = W_{g_\ast}$.
By exchangeability, $V$ has the same conditional law as $W$ and is independent of $(g_\ast, g_1, \dots, g_R)$.
Moreover, $W_{g_r} = V_{\tilde{g}_r}$ with $\tilde{g}_0 = g_\ast^{-1}$ and $\tilde{g}_r = g_\ast^{-1} \circ g_r$ for $r \geq 1$.
The map $(g_\ast, g_1, \dots, g_R) \mapsto (\tilde{g}_0, \dots, \tilde{g}_R)$ is a bijection, so $\tilde{g}_0, \dots, \tilde{g}_R$ are i.i.d.\ uniform and independent of $V$.
Consequently, the vector $(A_0, \dots, A_R)$ with $A_r = |h(V_{\tilde{g}_r})| = |\widehat{T}^{(r)}|$ is exchangeable conditionally on $\mathcal{G}$, where $\widehat{T}^{(0)} = \widehat{T}$.

Finally, for $j \in \{0, \dots, R\}$, define $\widehat{p}_j = (R+1)^{-1} \#\{r : A_r \geq A_j\}$, so that $\widehat{p} = \widehat{p}_0$.
Let $J = \{j : \widehat{p}_j \leq \kappa\}$.
If $J$ is non-empty, choose $j^\ast \in J$ minimising $A_{j^\ast}$.
Every $j \in J$ then satisfies $A_j \geq A_{j^\ast}$, so $|J| \leq \#\{r : A_r \geq A_{j^\ast}\} \leq \kappa (R+1)$.
By exchangeability, each $\widehat{p}_j$ has the same conditional law, and therefore
\begin{equation}
    \mathbb{P}\big[\widehat{p} \leq \kappa \,\big|\, \mathcal{G}\big]
    =
    \frac{1}{R+1} \sum_{j=0}^{R} \mathbb{P}\big[\widehat{p}_j \leq \kappa \,\big|\, \mathcal{G}\big]
    =
    \frac{\mathbb{E}\big[|J| \,\big|\, \mathcal{G}\big]}{R+1}
    \leq
    \kappa.
\end{equation}
This is~\eqref{eq:conditional_validity}.
Taking expectations over $\mathcal{G}$ shows that the unconditional rejection probability is at most $\kappa$ for every $\rho_0 \in \mathcal{P}(\mathbb{R}^d)$, which is the level condition~\eqref{eq:level}.
\end{proof}

\Cref{prop:validity} requires neither moment assumptions nor any accuracy of the learned fields.
Consistency of the learned denoisers or velocities is required for power, through \cref{cor:target}, but not for validity.

\subsection{Consistency}
\label{sec:consistency}

\cref{prop:validity} holds for arbitrary fitted fields, so the quality of the learned fields enters only through power.
To quantify this, we measure the error of each learned field under both channel densities,
\begin{equation}
\label{eq:cross_errors}
    \varepsilon_{\ell | k}(s) = \Big( \mathbb{E}_{x \sim \zeta_s^k}\big[|\widehat{\eta}_s^\ell(x) - \eta_s^\ell(x)|^2\big] \Big)^{1/2},
    \qquad
    \ell, k \in \{0, 1\},
\end{equation}
and we denote by $\widehat{D}_s^{\mathrm{rep}}$ the contrast~\eqref{eq:signals} computed with the learned fields in place of the population ones.
The diagonal errors $\varepsilon_{0|0}$ and $\varepsilon_{1|1}$ measure how well each field is learned on its own channel.
The cross errors $\varepsilon_{1|0}$ and $\varepsilon_{0|1}$ measure how well each field extrapolates to the other channel, on which the witness also evaluates it.

\begin{lemma}[Risk contrasts with learned fields]
\label{lem:learned_gap}
Suppose \cref{ass:moments} holds, fix $s \in (0, \tfrac{1}{2})$, and let the learned denoisers be fixed and square integrable under $\zeta_s^0$ and $\zeta_s^1$.
Then
\begin{equation}
\label{eq:learned_gap}
    \widehat{D}_s^{\mathrm{den}}
    =
    \Big( \mathbb{E}_{x \sim \zeta_s^0}\big[|\eta_s^0(x) - \widehat{\eta}_s^1(x)|^2\big] - \varepsilon_{0|0}(s)^2 \Big)
    +
    \Big( \mathbb{E}_{x \sim \zeta_s^1}\big[|\eta_s^1(x) - \widehat{\eta}_s^0(x)|^2\big] - \varepsilon_{1|1}(s)^2 \Big).
\end{equation}
In particular, $\widehat{D}_s^{\mathrm{den}} = 0$ under $\mathcal{H}_0$ for any learned fields, and
\begin{equation}
\label{eq:learned_gap_bound}
    \big| \widehat{D}_s^{\mathrm{den}} - D_s^{\mathrm{den}} \big|
    \leq
    2 \sqrt{D_s^{\mathrm{den}}}\, \big( \varepsilon_{1|0}(s) + \varepsilon_{0|1}(s) \big)
    +
    \sum_{\ell, k \in \{0, 1\}} \varepsilon_{\ell|k}(s)^2 .
\end{equation}
The same statements hold for the velocity contrast, with $u$ in place of $\eta$ throughout.
\end{lemma}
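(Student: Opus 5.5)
The proof repeats the residual-orthogonality argument of \cref{lem:risk_contrasts}, now with learned fields in place of the population ones. The only change is that the learned field on the ``own'' channel no longer equals the conditional mean, so its squared error does not cancel.

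\textbf{Step 1: per-channel expectations.} Fix $s$ and take $\omega \sim \rho_0$. By \cref{lem:bottleneck}, $\omega_s \sim \zeta_s^0$ and $\eta_s^0(\omega_s) = \mathbb{E}[\omega \mid \omega_s]$. Write $e = \omega - \eta_s^0(\omega_s)$. Then for $\ell \in \{0,1\}$,
\begin{equation*}
    \omega - \widehat{\eta}_s^\ell(\omega_s) = e + \big(\eta_s^0 - \widehat{\eta}_s^\ell\big)(\omega_s).
\end{equation*}
The residual $e$ is square integrable under \cref{ass:moments}, and the second summand is square integrable by hypothesis. The cross term therefore vanishes by the tower property, and
\begin{equation*}
    \mathbb{E}\,\big|\omega - \widehat{\eta}_s^\ell(\omega_s)\big|^2 = \mathbb{E}|e|^2 + \big\|\eta_s^0 - \widehat{\eta}_s^\ell\big\|_{L^2(\zeta_s^0)}^2 .
\end{equation*}
Taking the difference over $\ell = 1, 0$ cancels $\mathbb{E}|e|^2$ and gives
\begin{equation*}
    \mathbb{E}_{\rho_0}\big[\psi_s^{\mathrm{den}}\big] = \|\eta_s^0 - \widehat{\eta}_s^1\|_{L^2(\zeta_s^0)}^2 - \varepsilon_{0|0}(s)^2 .
\end{equation*}
The symmetric computation for $\omega \sim \rho_1$ gives
\begin{equation*}
    \mathbb{E}_{\rho_1}\big[\psi_s^{\mathrm{den}}\big] = \varepsilon_{1|1}(s)^2 - \|\eta_s^1 - \widehat{\eta}_s^0\|_{L^2(\zeta_s^1)}^2 .
\end{equation*}
Subtracting the second from the first yields \eqref{eq:learned_gap}. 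Under $\mathcal{H}_0$ we have $\zeta_s^0 = \zeta_s^1$ and $\eta_s^0 = \eta_s^1$ by \cref{prop:invariance}, so the two brackets are negatives of each other and $\widehat{D}_s^{\mathrm{den}} = 0$.

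\textbf{Step 2: the perturbation bound.} In $L^2(\zeta_s^0)$, expand
\begin{equation*}
    \eta_s^0 - \widehat{\eta}_s^1 = g_0 - \delta_{1|0}, \qquad g_0 = \eta_s^0 - \eta_s^1, \quad \delta_{1|0} = \widehat{\eta}_s^1 - \eta_s^1 .
\end{equation*}
This gives
\begin{equation*}
    \|\eta_s^0 - \widehat{\eta}_s^1\|^2 = \|g_0\|^2 - 2\langle g_0, \delta_{1|0}\rangle + \varepsilon_{1|0}^2 .
\end{equation*}
The first bracket of \eqref{eq:learned_gap} therefore equals
\begin{equation*}
    \|g_0\|_{L^2(\zeta_s^0)}^2 - 2\langle g_0, \delta_{1|0}\rangle + \varepsilon_{1|0}^2 - \varepsilon_{0|0}^2 .
\end{equation*}
The same expansion in $L^2(\zeta_s^1)$ handles the second bracket. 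By \cref{lem:risk_contrasts},
\begin{equation*}
    \|g_0\|_{L^2(\zeta_s^0)}^2 + \|g_1\|_{L^2(\zeta_s^1)}^2 = D_s^{\mathrm{den}},
\end{equation*}
so these terms cancel against $D_s^{\mathrm{den}}$. Cauchy--Schwarz bounds each inner product by $2\|g_k\|\,\varepsilon_{\ell|k}$, and each $\|g_k\| \le \sqrt{D_s^{\mathrm{den}}}$. The squared-error terms are bounded in absolute value by $\sum_{\ell,k} \varepsilon_{\ell|k}^2$. Together these give \eqref{eq:learned_gap_bound}.

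\textbf{Step 3: velocity.} For the velocity contrast, replace the target $\omega$ by $U_s(\omega,z)$. This target is square integrable, and its conditional mean given $\omega_s$ is $u_s^\ell(\omega_s)$ (\cref{lem:risk_contrasts}), so Steps 1 and 2 go through verbatim.

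The main obstacle is integrability bookkeeping. One must confirm that $\eta_s^0 - \eta_s^1 \in L^2(\zeta_s^k)$ so that $D_s^{\mathrm{den}} < \infty$ and the expansions are legitimate. If $D_s^{\mathrm{den}} = \infty$, the bound \eqref{eq:learned_gap_bound} is trivial; otherwise finiteness follows from the hypotheses and the triangle inequality, since both the true and learned fields are square integrable.
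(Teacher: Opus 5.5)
Your proposal is correct and follows the paper's proof step for step: the residual-orthogonality identity on each channel, cancellation of the two brackets under $\mathcal{H}_0$, expansion around the population fields with Cauchy--Schwarz and $\|\eta_s^0-\eta_s^1\|_{L^2(\zeta_s^k)}\le\sqrt{D_s^{\mathrm{den}}}$, and a verbatim transfer to the velocity targets. The one thing to tidy is the integrability remark: instead of case-splitting on $D_s^{\mathrm{den}}=\infty$, split on whether a cross error $\varepsilon_{1|0}(s)$ or $\varepsilon_{0|1}(s)$ is infinite (then the right-hand side is infinite), and otherwise obtain $\|\eta_s^0-\eta_s^1\|_{L^2(\zeta_s^0)}\le\|\eta_s^0-\widehat{\eta}_s^1\|_{L^2(\zeta_s^0)}+\varepsilon_{1|0}(s)<\infty$ from the triangle inequality, as the paper does, which avoids needing cross-channel square integrability of the population fields.
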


Each bracket in~\eqref{eq:learned_gap} is positive when, on the corresponding channel, the matching learned field is closer to the population field than the mismatched one.
The test, therefore, retains power whenever each model fits its own population better than the other, which is considerably weaker than the consistency of either model.

\begin{proof}[Proof of \cref{lem:learned_gap}]
The identity follows from the orthogonality argument of \cref{lem:risk_contrasts}, now applied to the learned fields, and the bound follows by expanding the squares around the population fields.
Fix $s \in (0, \tfrac{1}{2})$, and abbreviate $\| \cdot \|_k = \| \cdot \|_{L^2(\zeta_s^k)}$ for $k \in \{0, 1\}$.

Let $\omega \sim \rho_0$ and $z$ be independent, let $\mathbb{E}$ denote the expectation over both, and write $e = \omega - \eta_s^0(\omega_s)$.
As in the proof of \cref{lem:risk_contrasts}, $\omega_s \sim \zeta_s^0$ and $\mathbb{E}[e \mid \omega_s] = 0$, and both $e$ and $\eta_s^0(\omega_s)$ are square integrable, with second moments at most $\mathbb{E}_{\omega \sim \rho_0}|\omega|^2$.
For any measurable $f$ with $\| f \|_0 < \infty$, the variable $(\eta_s^0 - f)(\omega_s)$ is therefore square integrable and a function of $\omega_s$.
The tower property then gives
\begin{equation}
    \mathbb{E}\big|\omega - f(\omega_s)\big|^2
    =
    \mathbb{E}|e|^2 + \| \eta_s^0 - f \|_0^2 .
\end{equation}
Applying this with $f = \widehat{\eta}_s^1$ and $f = \widehat{\eta}_s^0$, which are square integrable under $\zeta_s^0$ by assumption, and subtracting, we obtain
\begin{equation}
    \mathbb{E}_{\omega \sim \rho_0,\, z}\big[\psi_s^{\mathrm{den}}(\omega, z)\big]
    =
    \| \eta_s^0 - \widehat{\eta}_s^1 \|_0^2 - \varepsilon_{0|0}(s)^2 .
\end{equation}
The same argument for $\omega \sim \rho_1$, with residual $\omega - \eta_s^1(\omega_s)$, gives
\begin{equation}
    \mathbb{E}_{\omega \sim \rho_1,\, z}\big[\psi_s^{\mathrm{den}}(\omega, z)\big]
    =
    \varepsilon_{1|1}(s)^2 - \| \eta_s^1 - \widehat{\eta}_s^0 \|_1^2 .
\end{equation}
All terms are finite, and subtracting the second identity from the first yields~\eqref{eq:learned_gap}.

Under $\mathcal{H}_0$, \cref{prop:invariance} gives $\zeta_s^0 = \zeta_s^1$ and $\eta_s^0 = \eta_s^1 =: \eta_s$, so the two norms coincide and we write $\| \cdot \|$ for both.
The identity~\eqref{eq:learned_gap} then reads
\begin{equation}
    \widehat{D}_s^{\mathrm{den}}
    =
    \big( \| \eta_s - \widehat{\eta}_s^1 \|^2 - \| \eta_s - \widehat{\eta}_s^0 \|^2 \big)
    +
    \big( \| \eta_s - \widehat{\eta}_s^0 \|^2 - \| \eta_s - \widehat{\eta}_s^1 \|^2 \big)
    =
    0 .
\end{equation}

If any of the four errors is infinite, the right-hand side of~\eqref{eq:learned_gap_bound} is infinite and there is nothing to prove, so we assume all four are finite.
Write $A = \eta_s^0 - \eta_s^1$ and $B_\ell = \widehat{\eta}_s^\ell - \eta_s^\ell$, so that $\| B_1 \|_0 = \varepsilon_{1|0}(s)$ and $\| B_0 \|_1 = \varepsilon_{0|1}(s)$.
Then $\eta_s^0 - \widehat{\eta}_s^1 = A - B_1$ and $\eta_s^1 - \widehat{\eta}_s^0 = -(A + B_0)$.
The norms of $A$ are finite, since $\| A \|_0 \leq \| \eta_s^0 - \widehat{\eta}_s^1 \|_0 + \varepsilon_{1|0}(s)$ and similarly for $\| A \|_1$, and \cref{lem:risk_contrasts} gives $D_s^{\mathrm{den}} = \| A \|_0^2 + \| A \|_1^2$.
Expanding the squares,
\begin{equation}
\begin{aligned}
    \| A - B_1 \|_0^2 - \| A \|_0^2 &= -2\, \mathbb{E}_{x \sim \zeta_s^0}\big[A(x) \cdot B_1(x)\big] + \varepsilon_{1|0}(s)^2, \\
    \| A + B_0 \|_1^2 - \| A \|_1^2 &= 2\, \mathbb{E}_{x \sim \zeta_s^1}\big[A(x) \cdot B_0(x)\big] + \varepsilon_{0|1}(s)^2 .
\end{aligned}
\end{equation}
Subtracting $D_s^{\mathrm{den}}$ from~\eqref{eq:learned_gap} therefore gives
\begin{equation}
\begin{split}
    \widehat{D}_s^{\mathrm{den}} - D_s^{\mathrm{den}}
    &=
    -2\, \mathbb{E}_{x \sim \zeta_s^0}\big[A(x) \cdot B_1(x)\big]
    + 2\, \mathbb{E}_{x \sim \zeta_s^1}\big[A(x) \cdot B_0(x)\big] \\
    &\quad
    + \varepsilon_{1|0}(s)^2 + \varepsilon_{0|1}(s)^2 - \varepsilon_{0|0}(s)^2 - \varepsilon_{1|1}(s)^2 .
\end{split}
\end{equation}
By the Cauchy--Schwarz inequality, the two cross terms are bounded in absolute value by $2 \| A \|_0\, \varepsilon_{1|0}(s)$ and $2 \| A \|_1\, \varepsilon_{0|1}(s)$.
Since $\| A \|_k \leq \sqrt{D_s^{\mathrm{den}}}$ for both $k$, the triangle inequality yields~\eqref{eq:learned_gap_bound}.

\emph{Velocity contrast:}
The same three steps apply with the target $U_s(\omega, z)$ in place of $\omega$ and the velocities in place of the denoisers.
This uses that $U_s$ is square integrable, that $u_s^\ell(\omega_s) = \mathbb{E}[U_s(\omega, z) \mid \omega_s]$ for $\omega \sim \rho_\ell$, and that $u_s^0 = u_s^1$ under $\mathcal{H}_0$ by \cref{prop:invariance}(iv).
\end{proof}

\begin{proposition}[Consistency]
\label{prop:consistency}
Suppose the sampling conditions of \cref{prop:validity} hold, that \cref{ass:moments} holds, that $\mu$ is strictly positive on $\mathcal{I}_\tau$, and that the integrability condition of \cref{cor:target} holds for the population fields.
Suppose further that $\mathcal{H}_1$ holds with $T_w^{\mathrm{rep}} < \infty$, and that the number of permutations satisfies $R \geq 1/\kappa - 1$.
For $\ell \in \{0, 1\}$, let $M_\ell = \mathbb{E}\big[\widehat{\Phi}(\omega)^2 \,\big|\, \mathcal{G}\big]$ denote the conditional second moment of the score of a held-out observation $\omega \sim \rho_\ell$.
Consider a sequence of experiments along which $n_0^{\mathrm{te}}, n_1^{\mathrm{te}} \to \infty$, and suppose that along this sequence
\begin{enumerate}
    \item[(a)] the errors~\eqref{eq:cross_errors} of the learned fields of representation $\mathrm{rep}$ satisfy $\int_{\mathcal{I}_\tau} w(s)\, \varepsilon_{\ell|k}(s)^2\, \mathrm{d}s \to 0$ in probability for all $\ell, k \in \{0, 1\}$;
    \item[(b)] for every $s \in \mathcal{I}_\tau$, the learned fields are square integrable under $\zeta_s^0$ and $\zeta_s^1$, and $M_0$ and $M_1$ are bounded in probability.
\end{enumerate}
Then $\mathbb{P}\big[\Delta(\mathcal{D}_0, \mathcal{D}_1) = 1\big] \to 1$.
\end{proposition}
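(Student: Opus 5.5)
Conditionally on $\mathcal{G}$, the held-out scores are independent: $\widehat{\Phi}_i^0$ are i.i.d.\ with mean $m_0$ and second moment $M_0$, and $\widehat{\Phi}_j^1$ are i.i.d.\ with mean $m_1$ and second moment $M_1$. I will show three things. First, the conditional mean gap $\widehat{T}_w := m_0 - m_1$ converges in probability to $T_w^{\mathrm{rep}} > 0$. Second, $\widehat{T}$ concentrates around $\widehat{T}_w$. Third, the permuted statistics $\widehat{T}^{(r)}$ concentrate around zero. Given these, $|\widehat{T}| > |\widehat{T}^{(r)}|$ for all $r$ with probability tending to one. Then $\widehat{p} = 1/(R+1) \leq \kappa$, because $R \geq 1/\kappa - 1$, so $\Delta = 1$.

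\textbf{Step 1: the mean gap.} Repeat the Fubini argument of \cref{cor:target} with the learned fields. This gives $\widehat{T}_w = \int_{\mathcal{I}_\tau} w(s)\, \widehat{D}_s^{\mathrm{rep}}\, \mathrm{d}s$, where the integrability needed for Fubini follows from (b) together with the identity in \cref{lem:learned_gap}, which makes each $\widehat{D}_s$ finite. Then integrate the bound~\eqref{eq:learned_gap_bound} against $w$ and apply Cauchy--Schwarz to the cross term:
\begin{equation*}
|\widehat{T}_w - T_w^{\mathrm{rep}}| \leq 2 \Big(\int w\, D_s\Big)^{1/2} \Big(\int w\, (\varepsilon_{1|0}+\varepsilon_{0|1})^2\Big)^{1/2} + \sum_{\ell,k}\int w\, \varepsilon_{\ell|k}^2 .
\end{equation*}
Because $T_w^{\mathrm{rep}} < \infty$ and (a) holds, the right-hand side tends to zero in probability. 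Under $\mathcal{H}_1$, \cref{cor:target} gives $T_w^{\mathrm{rep}} > 0$. Hence $\widehat{T}_w \to T_w^{\mathrm{rep}} > 0$ in probability.

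\textbf{Step 2: concentration of the observed statistic.} By Chebyshev's inequality conditional on $\mathcal{G}$,
\begin{equation*}
\mathbb{P}\big[|\widehat{T} - \widehat{T}_w| > \epsilon \mid \mathcal{G}\big] \leq \epsilon^{-2}\big(M_0/n_0^{\mathrm{te}} + M_1/n_1^{\mathrm{te}}\big).
\end{equation*}
This tends to zero in probability because $M_0, M_1$ are $O_P(1)$ by (b). Taking expectations (the conditional probabilities are bounded) gives $\widehat{T} \to T_w^{\mathrm{rep}}$ in probability.

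\textbf{Step 3: the permuted statistics.} This is the main obstacle, since the pooled scores are not exchangeable under $\mathcal{H}_1$. I would condition on $\mathcal{G}$ and on the pooled values. A permuted difference of means has conditional mean zero. Its conditional variance is at most
\begin{equation*}
\Big(\tfrac{1}{n_0^{\mathrm{te}}}+\tfrac{1}{n_1^{\mathrm{te}}}\Big)\cdot \frac{n^{\mathrm{te}}}{n^{\mathrm{te}}-1}\cdot \overline{W^2},
\end{equation*}
using the standard sampling-without-replacement variance formula. Here $\overline{W^2}$ is the pooled empirical second moment, which has conditional expectation at most $\max(M_0, M_1) = O_P(1)$. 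Chebyshev's inequality then gives $\max_{r \leq R} |\widehat{T}^{(r)}| \to 0$ in probability, with a union bound over the fixed number $R$ of permutations. If $R$ grows along the sequence, this union bound fails. In that case I would instead show that the conditional permutation distribution's $(1-\kappa)$-quantile of $|\widehat{T}^{(r)}|$ tends to zero, and control $\widehat{p}$ through a binomial tail bound.

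Combining the three steps, $|\widehat{T}| \to T_w^{\mathrm{rep}} > 0$ while every $|\widehat{T}^{(r)}| \to 0$ in probability. Therefore $\mathbb{P}[\widehat{p} \leq \kappa] \to 1$.
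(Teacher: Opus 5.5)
Your proposal follows the paper's proof essentially step for step. You identify the conditional mean of $\widehat{T}$ as $\int_{\mathcal{I}_\tau} w(s)\,\widehat{D}_s^{\mathrm{rep}}\,\mathrm{d}s$ and compare it with $T_w^{\mathrm{rep}}$ by integrating \eqref{eq:learned_gap_bound} with Cauchy--Schwarz. You concentrate $\widehat{T}$ by conditional Chebyshev using $M_0$ and $M_1$. You control the permuted statistics through the sampling-without-replacement variance $(1/n_0^{\mathrm{te}} + 1/n_1^{\mathrm{te}})S^2$ and a union bound over the fixed $R$. The paper also takes $R$ fixed, so your growing-$R$ remark is an extension rather than a needed step.

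The one step whose justification does not go through as you wrote it is the Fubini integrability in Step 1. Finiteness of each $\widehat{D}_s^{\mathrm{rep}}$ from \cref{lem:learned_gap} is a pointwise statement in $s$. It does not imply $\int_{\mathcal{I}_\tau} w(s)\,\mathbb{E}_{\omega\sim\rho_\ell,\,z}\big[|\psi_s^{\mathrm{rep}}(\omega,z)|\big]\,\mathrm{d}s < \infty$ for the learned fields. Those fields can blow up non-integrably in $s$ while remaining square integrable at every fixed $s$.

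What supplies the integrability is the moment part of (b), and the paper uses it as follows. Write $\widehat{\Phi}(\omega) = K^{-1}\sum_{k=1}^K X_k$ with $X_k = w(s_k)\,\psi_{s_k}^{\mathrm{rep}}(\omega,z_k)/\mu(s_k)$. Since the $X_k$ are i.i.d.\ given $(\omega,\mathcal{G})$, we have $\mathbb{E}[\widehat{\Phi}(\omega)^2 \mid \omega,\mathcal{G}] \geq \mathbb{E}[X_1^2\mid\omega,\mathcal{G}]/K$. Hence $\mathbb{E}[X_1^2\mid\mathcal{G}] \leq K M_\ell$. This is finite with probability tending to one, which suffices for a convergence-in-probability conclusion. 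Tonelli then gives exactly the integrability Fubini needs.

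With that substitution, your argument coincides with the paper's.
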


Condition~(b) holds, for instance, if $\rho_0$ and $\rho_1$ have finite fourth moments, $\mu$ is bounded away from zero on $\mathcal{I}_\tau$, $w$ is bounded on $\mathcal{I}_\tau$ as are the weights~\eqref{eq:jeff_weights}, and the learned fields grow at most linearly, $|\widehat{\eta}_s^\ell(x)| \leq C(1 + |x|)$, with $C$ bounded in probability.
The same holds for the velocity representation, with $\widehat{u}_s^\ell$ in place of $\widehat{\eta}_s^\ell$.
Condition~(a) makes precise the role of the cross errors: near the pivot, the two channels overlap heavily, and extrapolation is mild, whereas near $s = \tfrac{1}{2}$ the channels separate and the cross errors are hardest to control.
This motivates both the truncation $\tau$ and the uniform weighting, which, unlike the Jeffreys weights of the denoiser representation, does not up-weight the low-noise region.

\begin{proof}[Proof of \cref{sec:consistency}]
We show that the observed statistic converges in probability to the positive population target, while every permuted statistic converges to zero.
Within this proof, we drop the superscript on the held-out sizes, write $n = n_0 + n_1$, $T = T_w^{\mathrm{rep}}$ and $M = \max(M_0, M_1)$, and let $\widetilde{T} = \mathbb{E}[\widehat{T} \mid \mathcal{G}]$ denote the conditional mean of the statistic.

Condition on $\mathcal{G}$, so that the learned fields are fixed.
For an observation $\omega \sim \rho_\ell$ with Monte Carlo draws $(s_k, z_k)_{k=1}^K$, the score is $\widehat{\Phi}(\omega) = K^{-1} \sum_{k=1}^K X_k$ with $X_k = w(s_k)\, \psi_{s_k}^{\mathrm{rep}}(\omega, z_k) / \mu(s_k)$.
Given $\omega$ and $\mathcal{G}$, the $X_k$ are i.i.d., so
\begin{equation}
    \mathbb{E}\big[\widehat{\Phi}(\omega)^2 \,\big|\, \omega, \mathcal{G}\big]
    =
    \frac{\mathrm{Var}(X_1 \mid \omega, \mathcal{G})}{K} + \mathbb{E}[X_1 \mid \omega, \mathcal{G}]^2
    \geq
    \frac{\mathbb{E}[X_1^2 \mid \omega, \mathcal{G}]}{K},
\end{equation}
and averaging over $\omega$ gives $\mathbb{E}[X_1^2 \mid \mathcal{G}] \leq K M_\ell < \infty$.
In particular, $X_1$ is conditionally integrable.
Since the learned fields are $\mathcal{G}$-measurable and $(\omega, s_1, z_1)$ is independent of $\mathcal{G}$, and since $\mu > 0$ on $\mathcal{I}_\tau$, Tonelli's theorem gives
\begin{equation}
    \mathbb{E}\big[|X_1| \,\big|\, \mathcal{G}\big]
    =
    \int_{\mathcal{I}_\tau} w(s)\, \mathbb{E}_{\omega \sim \rho_\ell,\, z}\big[|\psi_s^{\mathrm{rep}}(\omega, z)|\big]\, \mathrm{d}s ,
\end{equation}
with the learned fields inside $\psi_s^{\mathrm{rep}}$ held fixed.
Fubini's theorem then gives $\mathbb{E}[\widehat{\Phi}(\omega) \mid \mathcal{G}] = \int_{\mathcal{I}_\tau} w(s)\, \mathbb{E}_{\omega \sim \rho_\ell,\, z}[\psi_s^{\mathrm{rep}}(\omega, z)]\, \mathrm{d}s$, and by linearity,
\begin{equation}
    \widetilde{T} = \int_{\mathcal{I}_\tau} w(s)\, \widehat{D}_s^{\mathrm{rep}}\, \mathrm{d}s .
\end{equation}

Since $T < \infty$ and $w > 0$, we have $D_s^{\mathrm{rep}} < \infty$ for almost every $s \in \mathcal{I}_\tau$.
By condition~(b), \cref{lem:learned_gap} applies at every $s \in \mathcal{I}_\tau$.
Integrating~\eqref{eq:learned_gap_bound} against $w$ and applying the Cauchy--Schwarz inequality to the first term gives
\begin{equation}
    \big| \widetilde{T} - T \big|
    \leq
    2\, T^{1/2} \left( 2 \int_{\mathcal{I}_\tau} w(s) \big( \varepsilon_{1|0}(s)^2 + \varepsilon_{0|1}(s)^2 \big)\, \mathrm{d}s \right)^{1/2}
    +
    \sum_{\ell, k \in \{0, 1\}} \int_{\mathcal{I}_\tau} w(s)\, \varepsilon_{\ell|k}(s)^2\, \mathrm{d}s ,
\end{equation}
where we used $(\varepsilon_{1|0} + \varepsilon_{0|1})^2 \leq 2(\varepsilon_{1|0}^2 + \varepsilon_{0|1}^2)$.
By condition~(a), the right-hand side converges to zero in probability, so $\widetilde{T} \to T$ in probability.
Under $\mathcal{H}_1$, \cref{cor:target} gives $T > 0$.

Conditionally on $\mathcal{G}$, the held-out scores are independent, with the $n_0$ scores from $\mathcal{D}_0^{\mathrm{te}}$ identically distributed and likewise the $n_1$ scores from $\mathcal{D}_1^{\mathrm{te}}$.
Hence $\mathrm{Var}(\widehat{T} \mid \mathcal{G}) \leq M_0 / n_0 + M_1 / n_1 \leq M (1/n_0 + 1/n_1)$.
For any $\delta > 0$ and $C > 0$, Chebyshev's inequality gives
\begin{equation}
    \mathbb{P}\big[|\widehat{T} - \widetilde{T}| > \delta\big]
    \leq
    \mathbb{P}[M > C] + \frac{C}{\delta^2} \left( \frac{1}{n_0} + \frac{1}{n_1} \right).
\end{equation}
The second term vanishes as $n_0, n_1 \to \infty$, and the first can be made arbitrarily small by choosing $C$ large, since $M$ is bounded in probability.
Hence $\widehat{T} - \widetilde{T} \to 0$ in probability, and combined with Step 2, $\widehat{T} \to T$ in probability.
In particular, $\mathbb{P}[|\widehat{T}| \geq T/2] \to 1$.

Let $W$ be the pooled vector of held-out scores, with mean $\bar{W}$ and sample variance $S^2 = (n-1)^{-1} \sum_{i=1}^n (W_i - \bar{W})^2$.
For a uniform permutation $g$, independent of $W$, the permuted statistic is $m_0 - m_1$, where $m_0$ is the mean of a uniformly random subset of $n_0$ pooled scores and $m_1$ is the mean of its complement.
Since $n_0 m_0 + n_1 m_1 = n \bar{W}$, we have $m_0 - m_1 = (n / n_1)(m_0 - \bar{W})$.
By the standard moments of sampling without replacement, $\mathbb{E}[m_0 \mid W] = \bar{W}$ and $\mathrm{Var}(m_0 \mid W) = n_1 S^2 / (n\, n_0)$.
Hence, the permuted statistic has a conditional mean of zero and a conditional variance
\begin{equation}
    \frac{n^2}{n_1^2} \cdot \frac{n_1 S^2}{n\, n_0}
    =
    \left( \frac{1}{n_0} + \frac{1}{n_1} \right) S^2 .
\end{equation}
Moreover, $S^2 \leq \frac{n}{n-1} \cdot \frac{1}{n} \sum_{i=1}^n W_i^2$, and the conditional mean of $n^{-1} \sum_{i} W_i^2$ given $\mathcal{G}$ is at most $M$.
By the conditional Markov inequality, $S^2$ is therefore bounded in probability.
Chebyshev's inequality, applied conditionally on $W$, gives for each $r \in \{1, \dots, R\}$
\begin{equation}
    \mathbb{P}\big[|\widehat{T}^{(r)}| > T/4 \,\big|\, W\big]
    \leq
    \frac{16}{T^2} \left( \frac{1}{n_0} + \frac{1}{n_1} \right) S^2 ,
\end{equation}
which converges to zero in probability.
By bounded convergence, $\mathbb{P}[|\widehat{T}^{(r)}| > T/4] \to 0$, and a union bound over the fixed number $R$ of permutations gives $\mathbb{P}[\max_{1 \leq r \leq R} |\widehat{T}^{(r)}| > T/4] \to 0$.

\emph{Step 5: conclusion:}
By Steps 3 and 4, the event $\{|\widehat{T}| \geq T/2\} \cap \{\max_{r} |\widehat{T}^{(r)}| \leq T/4\}$ has probability tending to one.
On this event, every indicator in~\eqref{eq:permutation_pvalue} vanishes, so $\widehat{p} = 1/(R+1) \leq \kappa$ by the choice $R \geq 1/\kappa - 1$, and the test rejects.
\end{proof}

\begin{algorithm}[t]
\caption{Path-based two-sample test (PBI)}
\label{alg:pbi}
\begin{algorithmic}[1]
\Require datasets $\mathcal{D}_0, \mathcal{D}_1$; representation $\mathrm{rep} \in \{\mathrm{den}, \mathrm{vel}\}$; path weight $w$ (\cref{tab:variants}); truncation $\tau \in (0, \tfrac{1}{4})$; sampling density $\mu > 0$ on $\mathcal{I}_\tau$; Monte Carlo draws $K$; permutations $R \geq 1/\kappa - 1$; level $\kappa$
\Ensure decision $\Delta \in \{0, 1\}$ and $p$-value $\widehat{p}$
\State Split $\mathcal{D}_\ell = \mathcal{D}_\ell^{\mathrm{tr}} \cup \mathcal{D}_\ell^{\mathrm{te}}$ at random for $\ell \in \{0, 1\}$ \Comment{independent of the data values}
\For{$\ell \in \{0, 1\}$}
    \State $\widehat{f}^{\ell} \gets \Call{FitField}{\mathcal{D}_\ell^{\mathrm{tr}}, \mathrm{rep}}$ \Comment{\cref{alg:fit}; training data only}
\EndFor
\State Pool the held-out data as $(\omega^{(i)}, \ell_i)_{i=1}^{N}$, with $N = n_0^{\mathrm{te}} + n_1^{\mathrm{te}}$ and group labels $\ell_i \in \{0, 1\}$
\For{$i = 1, \dots, N$}
    \State $W_i \gets \Call{Score}{\omega^{(i)}; \widehat{f}^0, \widehat{f}^1}$ \Comment{\cref{alg:score}; label not used}
\EndFor
\State $\widehat{T} \gets \Call{MeanDiff}{W, \ell}$ \Comment{statistic~\eqref{eq:generic_test_stat}}
\For{$r = 1, \dots, R$}
    \State $\ell^{(r)} \gets$ uniformly random permutation of $(\ell_1, \dots, \ell_N)$
    \State $\widehat{T}^{(r)} \gets \Call{MeanDiff}{W, \ell^{(r)}}$ \Comment{reuses scores; no network evaluations}
\EndFor
\State $\widehat{p} \gets \big(1 + \sum_{r=1}^{R} \mathbf{1}\{|\widehat{T}^{(r)}| \geq |\widehat{T}|\}\big) / (R + 1)$
\State \Return $\Delta = \mathbf{1}\{\widehat{p} \leq \kappa\}$ and $\widehat{p}$
\Statex
\Function{MeanDiff}{$W, \ell$}
    \State \Return $\dfrac{1}{|\{i : \ell_i = 0\}|} \displaystyle\sum_{i : \ell_i = 0} W_i \;-\; \dfrac{1}{|\{i : \ell_i = 1\}|} \displaystyle\sum_{i : \ell_i = 1} W_i$
\EndFunction
\end{algorithmic}
\end{algorithm}

\begin{algorithm}[t]
\caption{\textsc{FitField}: fitting the field of one channel on its training split alone.
No validation data or early stopping is used; the network after $n_{\mathrm{iter}}$
iterations is returned (\cref{app:implementation-details}).}
\label{alg:fit}
\begin{algorithmic}[1]
\Require training data $\mathcal{D}_\ell^{\mathrm{tr}}$; representation $\mathrm{rep}$; network $f_\theta(s, x)$; training density $\mu_{\mathrm{tr}}$ on $\mathcal{I}_\tau$; iterations $n_{\mathrm{iter}}$; batch size $B$
\Ensure fitted field $\widehat{f}^{\ell}$, estimating $\eta_s^\ell$ if $\mathrm{rep} = \mathrm{den}$ and $u_s^\ell$ if $\mathrm{rep} = \mathrm{vel}$
\For{$\mathrm{iter} = 1, \dots, n_{\mathrm{iter}}$}
    \State Draw $x^{(1)}, \dots, x^{(B)}$ from $\mathcal{D}_\ell^{\mathrm{tr}}$, and independent $s_j \sim \mu_{\mathrm{tr}}$ and $z_j \sim \mathcal{N}(0, \mathbb{I}_d)$
    \For{$j = 1, \dots, B$}
        \State $x_{s_j} \gets a_{s_j} x^{(j)} + \sigma_{s_j} z_j$ \Comment{Gaussian channel~\eqref{eq:channels}}
        \State $y_j \gets \Call{Target}{x^{(j)}, z_j, s_j}$
    \EndFor
    \State Update $\theta$ by a gradient step on $\frac{1}{B} \sum_{j=1}^{B} \big|f_\theta(s_j, x_{s_j}) - y_j\big|^2$
\EndFor
\State \Return $\widehat{f}^{\ell} = f_\theta$
\Statex
\Function{Target}{$x, z, s$}
    \If{$\mathrm{rep} = \mathrm{den}$}
        \State \Return $x$ \Comment{denoiser target}
    \Else
        \State \Return $a_s' x + \sigma_s' z$ \Comment{outward velocity target}
    \EndIf
\EndFunction
\end{algorithmic}
\end{algorithm}

\begin{algorithm}[t]
\caption{\textsc{Score}: Monte Carlo estimate of the observation-level score.}
\label{alg:score}
\begin{algorithmic}[1]
\Require held-out observation $\omega$; fitted fields $\widehat{f}^0, \widehat{f}^1$; representation $\mathrm{rep}$; path weight $w$; sampling density $\mu$ on $\mathcal{I}_\tau$; Monte Carlo draws $K$
\Ensure score $\widehat{\Phi}_w^{\mathrm{rep}}(\omega)$
\For{$k = 1, \dots, K$}
    \State Draw $s_k \sim \mu$ and $z_k \sim \mathcal{N}(0, \mathbb{I}_d)$ \Comment{fresh draws for each observation}
    \State $\omega_{s_k} \gets a_{s_k} \omega + \sigma_{s_k} z_k$
    \State $y_k \gets \Call{Target}{\omega, z_k, s_k}$ \Comment{\cref{alg:fit}}
    \State $\psi_k \gets \big|y_k - \widehat{f}^1(s_k, \omega_{s_k})\big|^2 - \big|y_k - \widehat{f}^0(s_k, \omega_{s_k})\big|^2$ \Comment{local contrast}
\EndFor
\State \Return $\dfrac{1}{K} \displaystyle\sum_{k=1}^{K} \frac{w(s_k)}{\mu(s_k)}\, \psi_k$ \Comment{importance weighting~\eqref{eq:mc_witness}}
\end{algorithmic}
\end{algorithm}

\begin{table}[t]
\centering
\caption{The four PBI variants, each combining a regression representation with a path
weight and run with \cref{alg:pbi,alg:fit,alg:score}. The two Jeffreys weights give
the same population target, the Jeffreys divergence of \cref{prop:jeffreys} up to path
truncation. All weights are positive and bounded on $I_\tau$.}
\label{tab:variants}
\small
\begin{tabular}{llll}
\toprule
Variant & Fitted fields & Regression target $y$ & Path weight $w(s)$ \\
\midrule
PBI-Denoiser\_U & $\widehat{\eta}_s^0, \widehat{\eta}_s^1$ & $\omega$ & $1$ \\
PBI-Denoiser\_J & $\widehat{\eta}_s^0, \widehat{\eta}_s^1$ & $\omega$ & $\lambda_s a_s / \sigma_s^2$ \\
PBI-Velocity\_U & $\widehat{u}_s^0, \widehat{u}_s^1$ & $a_s' \omega + \sigma_s' z$ & $1$ \\
PBI-Velocity\_J & $\widehat{u}_s^0, \widehat{u}_s^1$ & $a_s' \omega + \sigma_s' z$ & $a_s / (\lambda_s \sigma_s^2)$ \\
\bottomrule
\end{tabular}
\end{table}

\subsection{Relationship between the Four Variants}
\label{sec:four_variants}

The method yields four closely related test statistics,
\begin{equation}
\label{eq:four_tests}
\begin{array}{c|cc}
    & \text{Uniform} & \text{Jeffreys weighted} \\
    \hline
    \text{Denoising risk} & \widehat{T}_{\mathrm{U}}^{\mathrm{den}} & \widehat{T}_{\mathrm{J}}^{\mathrm{den}} \\[1mm]
    \text{Velocity risk} & \widehat{T}_{\mathrm{U}}^{\mathrm{vel}} & \widehat{T}_{\mathrm{J}}^{\mathrm{vel}} .
\end{array}
\end{equation}
The rows differ in the regression representation used to probe the path, and the columns differ in how evidence is aggregated across radii.
By \cref{lem:risk_contrasts}, $w_{\mathrm{J}}^{\mathrm{den}} D_s^{\mathrm{den}} = D_s^{\mathrm{vel}} / \chi_s$ at every radius, so the two Jeffreys variants share the population target identified in \cref{prop:jeffreys}.
Their finite-sample behaviour can nevertheless differ because the denoiser and the velocity regression have different noise and approximation properties.
The uniform variants omit the information-theoretic reweighting.
They provide a useful alternative when the Jeffreys weights amplify the difficult low-noise region near $s = \tfrac{1}{2}$, where $\sigma_s \to 0$.

\subsection{Monte Carlo Approximation over the Path}
\label{sec:path_mc}

In practice, neither the latent expectation nor the path integral in~\eqref{eq:aggregated_witness} is evaluated analytically.
For each held-out observation $\omega^{(i)}$, we draw independent
\begin{equation}
    s_{ik} \sim \mu,
    \qquad
    z_{ik} \sim \mathcal{N}(0, \mathbb{I}_d),
    \qquad
    k = 1, \dots, K,
\end{equation}
where $\mu$ is a sampling density that is strictly positive on $\mathcal{I}_\tau$, and use the importance-weighted estimate
\begin{equation}
\label{eq:mc_witness}
    \widehat{\Phi}_w^{\mathrm{rep}}\big(\omega^{(i)}\big)
    =
    \frac{1}{K} \sum_{k=1}^{K} \frac{w(s_{ik})}{\mu(s_{ik})}\, \psi_{s_{ik}}^{\mathrm{rep}}\big(\omega^{(i)}, z_{ik}\big).
\end{equation}
Uniform sampling corresponds to constant $\mu$, and the factor $w/\mu$ permits nonuniform sampling in $s$ without changing the population target~\eqref{eq:population_target}.
Increasing $K$ reduces only the Monte Carlo component of the variance.
The $K$ corrupted copies of a single observation are therefore treated as one statistical unit and averaged before testing.

\section{Experiments}
\label{sec:experiments}

We evaluate the path-based inference (PBI) tests of \cref{sec:method} on one synthetic benchmark and three image benchmarks.
Our study addresses two questions: (i) whether the proposed tests achieve higher power than existing two-sample tests as the sample size increases, and (ii) how the regression representation (denoiser or velocity) and the path weighting (uniform or Jeffreys) affect performance across data modalities.
We consider the four variants \texttt{PBI-Denoiser\_U}, \texttt{PBI-Denoiser\_J}, \texttt{PBI-Velocity\_U} and \texttt{PBI-Velocity\_J} (\cref{tab:variants}); the complete procedure is given in \cref{alg:pbi,alg:fit,alg:score}.
The suffixes \texttt{U} and \texttt{J} denote the uniform and Jeffreys path weights of \cref{sec:aggregation}.
Each variant is a valid test in its own right by \cref{prop:validity}, and we report all four to study the effects of these design choices.
We compare against AutoTST \citep{kubler2022autotst}, C2ST-L \citep{cheng2022c2st}, C2ST-S \citep{lopezpaz2017revisiting}, MMD-D and MMD-O \citep{liu2020learning}, and MMDAgg \citep{jmlr_mmdagg}.

\paragraph{Evaluation protocol:}
All methods are compared under the same total budget of $2n$ observations per population, irrespective of whether they require sample splitting.
Methods that separate learning from testing (PBI, AutoTST, C2ST-L, C2ST-S, MMD-D and MMD-O) split each dataset into $n^{\mathrm{tr}}_\ell = n^{\mathrm{te}}_\ell = n$ observations; their learned components are fitted only on the training split and frozen before the test is evaluated on the independent held-out split.
MMDAgg requires no training split and is instead evaluated on all $2n$ observations per population.
Thus, $n$ in all figures and tables indexes the split-based allocation, and every method receives $4n$ observations across the two populations, so the comparison accounts for both the data used to learn a statistic and the data used to evaluate it.
For each of five seeds, the learned components are fitted once, and the test is then applied to 50 independently drawn held-out test sets of $n$ observations per population.
The power for a seed is the fraction of these 50 tests that reject $\mathcal{H}_0$ at level $\kappa = 0.05$, and we report the mean over the five seeds (250 tests in total), quoting gains in percentage points (pp) over the strongest baseline at each $n$.

\paragraph{Implementation:}
Throughout this section, the two fields are fitted by separate networks, one per population, each trained on its own training split alone (the \emph{Separate} architecture of \cref{sec:ablation}); this applies to both the denoiser and the velocity representations.
No validation set or early stopping is used, and we evaluate the final checkpoint after a fixed training budget.
The training-time variable is restricted to $s \in [0.01, 0.49]$, and the time-embedding dimension is 64.
Image inputs are normalised to $[-1, 1]$, whereas HDGM samples are left unnormalised.
Dataset-specific backbones and optimisation settings, together with the baseline implementations, are given in \cref{app:implementation-details}.

\begin{figure}[!t]
  \centering
  \includegraphics[width=\textwidth]{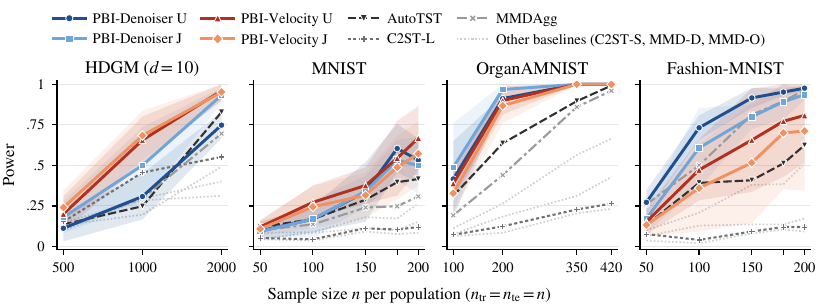}
  \caption{Test power at level $\kappa = 0.05$ as a function of the per-population
  sample size $n$ ($n_{\mathrm{tr}} = n_{\mathrm{te}} = n$; every method receives $4n$
  observations in total). MNIST, OrganAMNIST and Fashion-MNIST use class contamination
  at a fraction $0.1$ (3$\to$8, left$\to$right lung, sneaker$\to$ankle boot). Lines show the mean over five training seeds (50 test sets each); shaded bands show $\pm 1$ standard deviation across seeds for the PBI variants. Highlighted baselines are those that are strongest at some
  sample size; the remaining baselines are shown in light grey. The HDGM axis is
  logarithmic.}
  \label{fig:power-main}
\end{figure}

\paragraph{Reporting:}
\Cref{fig:power-main} summarises power on all four benchmarks.
For each benchmark, a first table reports power at each sample size as the mean over five seeds $\pm$ the unbiased sample variance across seeds; the bands in \cref{fig:power-main} instead show $\pm 1$ standard deviation.
A second table averages power over the sample sizes of the benchmark and compares each PBI variant with every baseline: each cell reports the absolute difference in percentage points and, in parentheses, the difference relative to that baseline's power.
Relative differences become large when a baseline's power is close to the level $\kappa$, so they should be read alongside the absolute differences.

\subsection{High-Dimensional Gaussian Mixtures}
\label{sec:exp-hdgm}

\textbf{Data:}
The high-dimensional Gaussian mixture (HDGM) benchmark of \citet{liu2020learning} consists of equal-weight, two-component Gaussian mixtures in $d = 10$ dimensions.
Let $c \sim \mathrm{Bernoulli}(\tfrac{1}{2})$ denote the component indicator.
Under both $\rho_0$ and $\rho_1$, the components have means $\mathbf{0}$ and $0.5\, \mathbf{1}_d$, for $c = 0$ and $c = 1$ respectively, and unit marginal variances.
Under $\rho_0$, the coordinates are independent within each component.
Under $\rho_1$, the first two coordinates are correlated, with correlation $+0.5$ in the first component and $-0.5$ in the second.
Concretely, we draw $v \sim \mathcal{N}(\mathbf{0}, \mathbb{I}_d)$, replace $v_2$ by $\pm \tfrac{1}{2} v_1 + \tfrac{\sqrt{3}}{2} v_2$ under $\rho_1$, with the sign $+$ for $c = 0$ and $-$ for $c = 1$, and add the component mean $0.5\, c\, \mathbf{1}_d$.
The two densities therefore share their component weights, means and coordinate-wise marginal variances, and differ only in a localised, component-dependent covariance structure.
We evaluate $n \in \{500, 1000, 2000\}$.

\textbf{Results:}
\Cref{fig:power-main,tab:hdgm-power-mean-variance,tab:hdgm-power-comparison} show that the velocity representation is particularly effective on HDGM.
At $n = 500$, \texttt{PBI-Velocity\_J} reaches power $0.240$, exceeding the strongest baseline, C2ST-L, by $8.8$~pp.
The advantage widens at $n = 1000$, where \texttt{PBI-Velocity\_J} attains $0.684$ against $0.456$ for C2ST-L, a gain of $22.8$~pp.
At $n = 2000$, \texttt{PBI-Velocity\_U} performs best, reaching $0.960$ and exceeding the strongest baseline, AutoTST, by $13.2$~pp.
The Jeffreys weighting markedly improves the denoiser, raising its power from $0.112$, $0.308$ and $0.748$ to $0.180$, $0.500$ and $0.932$ across the three sample sizes.
As a result, \texttt{PBI-Denoiser\_J} exceeds the strongest baseline at every sample size, whereas \texttt{PBI-Denoiser\_U} remains below it.
These results indicate that the velocity representation is well suited to the localised covariance shift in this benchmark, and that the Jeffreys weighting is beneficial in this low-dimensional setting.

\begin{table}[t]
\centering
\caption{Test power on HDGM ($d = 10$; localised covariance shift, \cref{app:implementation-details})
at level $\kappa = 0.05$ with $n$ training and $n$ test observations per population.
Entries report the mean over five training seeds (50 test sets each; \cref{app:implementation-details}) $\pm$ the unbiased sample variance across seeds (not the standard deviation). Bold marks the best method and underlining the best baseline in each column.}
\label{tab:hdgm-power-mean-variance}
\small
\begin{adjustbox}{max width=\textwidth}
\begin{tabular}{lccc}
\toprule
& \multicolumn{3}{c}{Sample size $n$} \\
\cmidrule(l){2-4}
Method & $500$ & $1000$ & $2000$ \\
\midrule
PBI-Denoiser\_U & \res{0.112}{0.0065} & \res{0.308}{0.0201} & \res{0.748}{0.0065} \\
PBI-Denoiser\_J & \res{0.180}{0.0098} & \res{0.500}{0.0552} & \res{0.932}{0.0023} \\
PBI-Velocity\_U & \res{0.200}{0.0120} & \res{0.656}{0.0211} & \resb{0.960}{0.0034} \\
PBI-Velocity\_J & \resb{0.240}{0.0110} & \resb{0.684}{0.0231} & \res{0.952}{0.0031} \\
\midrule
AutoTST & \res{0.136}{0.0045} & \res{0.248}{0.0433} & \resu{0.828}{0.0117} \\
C2ST-L & \resu{0.152}{0.0055} & \resu{0.456}{0.0433} & \res{0.552}{0.0129} \\
C2ST-S & \res{0.096}{0.0035} & \res{0.296}{0.0287} & \res{0.400}{0.0158} \\
MMD-D & \res{0.148}{0.0209} & \res{0.284}{0.1175} & \res{0.312}{0.1263} \\
MMD-O & \res{0.132}{0.0045} & \res{0.196}{0.0003} & \res{0.492}{0.0021} \\
MMDAgg & \res{0.120}{0.0018} & \res{0.296}{0.0045} & \res{0.696}{0.0037} \\
\bottomrule
\end{tabular}
\end{adjustbox}
\end{table}
\begin{table}[t]
\centering
\caption{Power differences between each PBI variant (rows) and each baseline (columns) on HDGM, averaged over $n \in \{500, 1000, 2000\}$.
With $A_m$ the mean power of method $m$ averaged over seeds and sample sizes, each cell reports the absolute difference $100(A_{\mathrm{PBI}} - A_b)$ in percentage points, followed in parentheses by the relative difference $100(A_{\mathrm{PBI}} - A_b)/A_b$ in percent.
Positive values favour PBI, and the largest gain in each column is in bold.}
\label{tab:hdgm-power-comparison}
\small
\begin{adjustbox}{max width=\textwidth}
\begin{tabular}{lcccccc}
\toprule
& \multicolumn{6}{c}{Baseline $b$} \\
\cmidrule(l){2-7}
PBI variant & AutoTST & C2ST-L & C2ST-S & MMD-D & MMD-O & MMDAgg \\
\midrule
PBI-Denoiser\_U & \gain{-1.5}{-3.6} & \gain{+0.3}{+0.7} & \gain{+12.5}{+47.5} & \gain{+14.1}{+57.0} & \gain{+11.6}{+42.4} & \gain{+1.9}{+5.0} \\
PBI-Denoiser\_J & \gain{+13.3}{+33.0} & \gain{+15.1}{+39.0} & \gain{+27.3}{+103.5} & \gain{+28.9}{+116.7} & \gain{+26.4}{+96.6} & \gain{+16.7}{+45.0} \\
PBI-Velocity\_U & \gain{+20.1}{+49.8} & \gain{+21.9}{+56.6} & \gain{+34.1}{+129.3} & \gain{+35.7}{+144.1} & \gain{+33.2}{+121.5} & \gain{+23.5}{+63.3} \\
PBI-Velocity\_J & \gainb{+22.1}{+54.8} & \gainb{+23.9}{+61.7} & \gainb{+36.1}{+136.9} & \gainb{+37.7}{+152.2} & \gainb{+35.2}{+128.8} & \gainb{+25.5}{+68.7} \\
\bottomrule
\end{tabular}
\end{adjustbox}
\end{table}

\subsection{MNIST}
\label{sec:exp-mnist}

\textbf{Data:}
MNIST contains greyscale images of handwritten digits \citep{lecun1998mnist}.
We construct a controlled class-contamination alternative from a source digit and a target digit.
Under $\rho_0$, all images are drawn from the source digit $3$.
Under $\rho_1$, a fraction $0.1$ of the images are drawn from the target digit $8$, and the remainder from the source digit.
Each sample of $n$ observations from $\rho_1$ contains exactly $\lfloor 0.1\, n + \tfrac{1}{2} \rfloor$ target-digit images, that is, $0.1\, n$ rounded half up, with the remaining images drawn independently from the source digit.
We vary $n \in \{50, 100, 150, 180, 200\}$.

\textbf{Results:}
\Cref{tab:mnist-power-mean-variance,tab:mnist-power-comparison} show that AutoTST is the strongest baseline at every sample size, and that the PBI variants become substantially more powerful as more data are available.
At $n = 50$, \texttt{PBI-Velocity\_U} ties AutoTST at power $0.120$, while the other variants remain slightly below it.
At $n = 100$, \texttt{PBI-Velocity\_U} reaches $0.272$, improving over AutoTST by $10.4$~pp ($61.9\%$ relative), and both denoiser variants match AutoTST at $0.168$.
From $n = 150$ onwards, all four PBI variants exceed the strongest baseline.
The largest gain at $n = 180$ is obtained by \texttt{PBI-Denoiser\_U}, which reaches $0.604$ and improves over AutoTST by $20.8$~pp.
At $n = 200$, \texttt{PBI-Velocity\_U} is strongest, with power $0.668$ and a gain of $25.2$~pp.
The denoiser variants dip slightly between $n = 180$ and $n = 200$, a change within the seed-to-seed variability reported in \cref{tab:mnist-power-mean-variance}.
Overall, uniform weighting is more effective than Jeffreys weighting on this task, particularly for the velocity representation, which performs better under uniform weighting across all sample sizes.

\begin{table}[t]
\centering
\caption{Test power on MNIST with source digit $3$ and target digit $8$ at a contamination fraction of $0.1$, at level $\kappa = 0.05$ and with $n$ training and $n$ test observations per population.
Entries report the mean over five training seeds (50 test sets each; \cref{app:implementation-details}) $\pm$ the unbiased sample variance across seeds (not the standard deviation).
The best result in each column is in bold, and the best baseline is underlined; tied entries share the formatting.}
\label{tab:mnist-power-mean-variance}
\small
\begin{adjustbox}{max width=\textwidth}
\begin{tabular}{lccccc}
\toprule
& \multicolumn{5}{c}{Sample size $n$} \\
\cmidrule(l){2-6}
Method & $50$ & $100$ & $150$ & $180$ & $200$ \\
\midrule
PBI-Denoiser\_U & \res{0.096}{0.0033} & \res{0.168}{0.0065} & \res{0.336}{0.0339} & \resb{0.604}{0.0227} & \res{0.532}{0.0181} \\
PBI-Denoiser\_J & \res{0.092}{0.0009} & \res{0.168}{0.0129} & \res{0.340}{0.0092} & \res{0.532}{0.0157} & \res{0.500}{0.0306} \\
PBI-Velocity\_U & \resb{0.120}{0.0018} & \resb{0.272}{0.0105} & \resb{0.376}{0.0125} & \res{0.544}{0.0529} & \resb{0.668}{0.0387} \\
PBI-Velocity\_J & \res{0.108}{0.0023} & \res{0.244}{0.0169} & \res{0.316}{0.0195} & \res{0.488}{0.0385} & \res{0.572}{0.0393} \\
\midrule
AutoTST & \resbu{0.120}{0.0048} & \resu{0.168}{0.0117} & \resu{0.288}{0.0071} & \resu{0.396}{0.0245} & \resu{0.416}{0.0361} \\
C2ST-L & \res{0.052}{0.0001} & \res{0.044}{0.0019} & \res{0.112}{0.0023} & \res{0.104}{0.0055} & \res{0.120}{0.0074} \\
C2ST-S & \res{0.044}{0.0003} & \res{0.036}{0.0011} & \res{0.092}{0.0041} & \res{0.076}{0.0025} & \res{0.084}{0.0043} \\
MMD-D & \res{0.048}{0.0009} & \res{0.084}{0.0021} & \res{0.112}{0.0035} & \res{0.100}{0.0002} & \res{0.152}{0.0017} \\
MMD-O & \res{0.084}{0.0011} & \res{0.088}{0.0009} & \res{0.180}{0.0016} & \res{0.172}{0.0001} & \res{0.264}{0.0061} \\
MMDAgg & \res{0.100}{0.0014} & \res{0.136}{0.0007} & \res{0.240}{0.0108} & \res{0.248}{0.0011} & \res{0.308}{0.0041} \\
\bottomrule
\end{tabular}
\end{adjustbox}
\end{table}
\begin{table}[t]
\centering
\caption{Power differences between each PBI variant (rows) and each baseline (columns) on MNIST, averaged over $n \in \{50, 100, 150, 180, 200\}$.
Cells are computed as in \cref{tab:hdgm-power-comparison}.
Positive values favour PBI, and the largest gain in each column is in bold.}
\label{tab:mnist-power-comparison}
\small
\begin{adjustbox}{max width=\textwidth}
\begin{tabular}{lcccccc}
\toprule
& \multicolumn{6}{c}{Baseline $b$} \\
\cmidrule(l){2-7}
PBI variant & AutoTST & C2ST-L & C2ST-S & MMD-D & MMD-O & MMDAgg \\
\midrule
PBI-Denoiser\_U & \gain{+7.0}{+25.1} & \gain{+26.1}{+301.9} & \gain{+28.1}{+422.9} & \gain{+24.8}{+250.0} & \gain{+19.0}{+120.3} & \gain{+14.1}{+68.2} \\
PBI-Denoiser\_J & \gain{+4.9}{+17.6} & \gain{+24.0}{+277.8} & \gain{+26.0}{+391.6} & \gain{+22.7}{+229.0} & \gain{+16.9}{+107.1} & \gain{+12.0}{+58.1} \\
PBI-Velocity\_U & \gainb{+11.8}{+42.7} & \gainb{+31.0}{+358.3} & \gainb{+33.0}{+496.4} & \gainb{+29.7}{+299.2} & \gainb{+23.8}{+151.3} & \gainb{+19.0}{+91.9} \\
PBI-Velocity\_J & \gain{+6.8}{+24.5} & \gain{+25.9}{+300.0} & \gain{+27.9}{+420.5} & \gain{+24.6}{+248.4} & \gain{+18.8}{+119.3} & \gain{+13.9}{+67.4} \\
\bottomrule
\end{tabular}
\end{adjustbox}
\end{table}

\subsection{OrganAMNIST}
\label{sec:exp-organamnist}

\textbf{Data:}
OrganAMNIST is a MedMNIST subset consisting of grayscale axial CT slices derived from three-dimensional abdominal CT volumes and centred on annotated organ bounding boxes \citep{medmnistv2}.
We use the $64\times64$ dimensionality and apply the class-contamination construction with Left Lung as the source class and Right Lung as the target class.
Under $\rho_0$, all images are drawn from the source class, whereas each sample of $n$ observations from $\rho_1$ contains $\lfloor 0.1\,n + \tfrac{1}{2} \rfloor$ target-class images, with the remainder drawn from the source class.
We vary $n\in\{100,200,350,420\}$.

\textbf{Results:}
\Cref{tab:organamnist-power-mean-variance,tab:organamnist-power-comparison} show that AutoTST is the strongest baseline at every sample size, yet every PBI variant exceeds it throughout the experiment.
In the lower-sample regime, \texttt{PBI-Denoiser\_J} is the strongest variant.
At $n=100$, it reaches $0.488$ power compared with $0.312$ for AutoTST, a gain of $17.6$~pp ($56.4\%$ relative).
At $n=200$, the corresponding powers are $0.968$ and $0.636$, yielding a gain of $33.2$~pp ($52.2\%$ relative).
All four PBI variants attain power $1.000$ at $n=350$ and $n=420$, with zero variance across five training seeds, whereas AutoTST reaches $0.896$ and $0.992$, respectively.
Averaged over all sample sizes, \texttt{PBI-Denoiser\_J} has the largest gain against every baseline, including $15.5$~pp ($21.9\%$ relative) over AutoTST and $25.1$~pp ($40.9\%$ relative) over MMDAgg.
Thus, OrganAMNIST favours the denoiser parameterisation in the lower-sample regime, while both parameterisations saturate once $n\geq350$.

\begin{table}[t]
\centering
\caption{Test power on OrganAMNIST with source class Left Lung and target class Right Lung at a contamination fraction of $0.1$, at level $\kappa = 0.05$ and with $n$ training and $n$ test observations per population.
Entries report the mean over five training seeds (50 test sets each; \cref{app:implementation-details}) $\pm$ the unbiased sample variance across seeds (not the standard deviation).
The best result in each column is in bold, and the best baseline is underlined.}
\label{tab:organamnist-power-mean-variance}
\small
\begin{adjustbox}{max width=\textwidth}
\begin{tabular}{lcccc}
\toprule
& \multicolumn{4}{c}{Sample size $n$} \\
\cmidrule(l){2-5}
Method & $100$ & $200$ & $350$ & $420$ \\
\midrule
PBI-Denoiser\_U & \res{0.416}{0.0543} & \res{0.912}{0.0003} & \resb{1.000}{0.0000} & \resb{1.000}{0.0000} \\
PBI-Denoiser\_J & \resb{0.488}{0.0737} & \resb{0.968}{0.0013} & \resb{1.000}{0.0000} & \resb{1.000}{0.0000} \\
PBI-Velocity\_U & \res{0.388}{0.0097} & \res{0.900}{0.0076} & \resb{1.000}{0.0000} & \resb{1.000}{0.0000} \\
PBI-Velocity\_J & \res{0.328}{0.0141} & \res{0.868}{0.0117} & \resb{1.000}{0.0000} & \resb{1.000}{0.0000} \\
\midrule
AutoTST & \resu{0.312}{0.0291} & \resu{0.636}{0.0199} & \resu{0.896}{0.0123} & \resu{0.992}{0.0003} \\
C2ST-L & \res{0.072}{0.0019} & \res{0.124}{0.0047} & \res{0.228}{0.0027} & \res{0.264}{0.0037} \\
C2ST-S & \res{0.064}{0.0007} & \res{0.084}{0.0011} & \res{0.204}{0.0043} & \res{0.232}{0.0045} \\
MMD-D & \res{0.076}{0.0011} & \res{0.184}{0.0083} & \res{0.312}{0.0069} & \res{0.428}{0.0053} \\
MMD-O & \res{0.112}{0.0029} & \res{0.264}{0.0011} & \res{0.564}{0.0005} & \res{0.664}{0.0015} \\
MMDAgg & \res{0.192}{0.0023} & \res{0.440}{0.0138} & \res{0.860}{0.0030} & \res{0.960}{0.0008} \\
\bottomrule
\end{tabular}
\end{adjustbox}
\end{table}

\begin{table}[t]
\centering
\caption{Power differences between each PBI variant (rows) and each baseline (columns) on OrganAMNIST, averaged over $n \in \{100, 200, 350, 420\}$.
Cells are computed as in \cref{tab:hdgm-power-comparison}.
Positive values favour PBI, and the largest gain in each column is in bold.}
\label{tab:organamnist-power-comparison}
\small
\begin{adjustbox}{max width=\textwidth}
\begin{tabular}{lcccccc}
\toprule
& \multicolumn{6}{c}{Baseline $b$} \\
\cmidrule(l){2-7}
PBI variant & AutoTST & C2ST-L & C2ST-S & MMD-D & MMD-O & MMDAgg \\
\midrule
PBI-Denoiser\_U & \gain{+12.3}{+17.3} & \gain{+66.0}{+383.7} & \gain{+68.6}{+469.9} & \gain{+58.2}{+232.8} & \gain{+43.1}{+107.5} & \gain{+21.9}{+35.7} \\
PBI-Denoiser\_J & \gainb{+15.5}{+21.9} & \gainb{+69.2}{+402.3} & \gainb{+71.8}{+491.8} & \gainb{+61.4}{+245.6} & \gainb{+46.3}{+115.5} & \gainb{+25.1}{+40.9} \\
PBI-Velocity\_U & \gain{+11.3}{+15.9} & \gain{+65.0}{+377.9} & \gain{+67.6}{+463.0} & \gain{+57.2}{+228.8} & \gain{+42.1}{+105.0} & \gain{+20.9}{+34.1} \\
PBI-Velocity\_J & \gain{+9.0}{+12.7} & \gain{+62.7}{+364.5} & \gain{+65.3}{+447.3} & \gain{+54.9}{+219.6} & \gain{+39.8}{+99.3} & \gain{+18.6}{+30.3} \\
\bottomrule
\end{tabular}
\end{adjustbox}
\end{table}

\subsection{Fashion-MNIST}
\label{sec:exp-fashion-mnist}

\textbf{Data:}
Fashion-MNIST contains greyscale images of apparel with the same dimensions and the same training and test splits as MNIST \citep{xiao2017_fashionmnist}.
We apply the class-contamination construction, with Sneaker as the source class and Ankle Boot as the target class.
Under $\rho_0$, all images are drawn from the source class, whereas each sample of $n$ observations from $\rho_1$ contains $\lfloor 0.1\, n + \tfrac{1}{2} \rfloor$ target-class images, with the remainder drawn from the source class.
We again vary $n \in \{50, 100, 150, 180, 200\}$.

\textbf{Results:}
\Cref{tab:fashion-mnist-power-mean-variance,tab:fashion-mnist-power-comparison} reveal a different ordering of the PBI variants.
MMDAgg is the strongest baseline at every sample size, and \texttt{PBI-Denoiser\_U} is the only variant that exceeds it at every sample size.
Its largest advantage occurs at $n = 100$, where it reaches $0.732$ against $0.500$ for MMDAgg, a gain of $23.2$~pp ($46.4\%$ relative).
It remains stronger at $n = 150$ and $n = 180$, with gains of $10.4$ and $5.6$~pp respectively, and approaches ceiling power at $n = 200$ ($0.976$ against $0.964$).
\texttt{PBI-Denoiser\_J} exceeds MMDAgg at $n = 100$ and is within $1.2$~pp of it at $n = 150$ and $n = 180$.
Both velocity variants trail MMDAgg across all sample sizes and exhibit markedly greater variability across seeds, suggesting that velocity regression is less stable on this dataset.
Thus, unlike HDGM and MNIST, Fashion-MNIST clearly favours the denoiser representation with uniform weighting.

\begin{table}[t]
\centering
\caption{Test power on Fashion-MNIST with source class Sneaker and target class Ankle Boot at a contamination fraction of $0.1$, at level $\kappa = 0.05$ and with $n$ training and $n$ test observations per population.
Entries report the mean over five training seeds (50 test sets each; \cref{app:implementation-details}) $\pm$ the unbiased sample variance across seeds (not the standard deviation).
The best result in each column is in bold, and the best baseline is underlined.}
\label{tab:fashion-mnist-power-mean-variance}
\small
\begin{adjustbox}{max width=\textwidth}
\begin{tabular}{lccccc}
\toprule
& \multicolumn{5}{c}{Sample size $n$} \\
\cmidrule(l){2-6}
Method & $50$ & $100$ & $150$ & $180$ & $200$ \\
\midrule
PBI-Denoiser\_U & \resb{0.272}{0.0057} & \resb{0.732}{0.0057} & \resb{0.916}{0.0037} & \resb{0.952}{0.0005} & \resb{0.976}{0.0003} \\
PBI-Denoiser\_J & \res{0.172}{0.0073} & \res{0.608}{0.0605} & \res{0.800}{0.0056} & \res{0.892}{0.0117} & \res{0.936}{0.0011} \\
PBI-Velocity\_U & \res{0.152}{0.0061} & \res{0.472}{0.0355} & \res{0.656}{0.1015} & \res{0.772}{0.1177} & \res{0.808}{0.0841} \\
PBI-Velocity\_J & \res{0.132}{0.0047} & \res{0.360}{0.0724} & \res{0.516}{0.1379} & \res{0.700}{0.1208} & \res{0.712}{0.1341} \\
\midrule
AutoTST & \res{0.132}{0.0011} & \res{0.392}{0.0211} & \res{0.408}{0.0157} & \res{0.512}{0.0139} & \res{0.624}{0.0117} \\
C2ST-L & \res{0.076}{0.0015} & \res{0.040}{0.0014} & \res{0.092}{0.0023} & \res{0.120}{0.0038} & \res{0.120}{0.0008} \\
C2ST-S & \res{0.036}{0.0003} & \res{0.020}{0.0002} & \res{0.080}{0.0010} & \res{0.104}{0.0029} & \res{0.092}{0.0007} \\
MMD-D & \res{0.064}{0.0015} & \res{0.128}{0.0031} & \res{0.136}{0.0059} & \res{0.136}{0.0039} & \res{0.172}{0.0041} \\
MMD-O & \res{0.100}{0.0006} & \res{0.208}{0.0061} & \res{0.380}{0.0040} & \res{0.384}{0.0021} & \res{0.512}{0.0063} \\
MMDAgg & \resu{0.260}{0.0030} & \resu{0.500}{0.0004} & \resu{0.812}{0.0041} & \resu{0.896}{0.0049} & \resu{0.964}{0.0015} \\
\bottomrule
\end{tabular}
\end{adjustbox}
\end{table}
\begin{table}[ht]
\centering
\caption{Power differences between each PBI variant (rows) and each baseline (columns) on Fashion-MNIST, averaged over $n \in \{50, 100, 150, 180, 200\}$.
Cells are computed as in \cref{tab:hdgm-power-comparison}.
Positive values favour PBI, and the largest gain in each column is in bold.}
\label{tab:fashion-mnist-power-comparison}
\small
\begin{adjustbox}{max width=\textwidth}
\begin{tabular}{lcccccc}
\toprule
& \multicolumn{6}{c}{Baseline $b$} \\
\cmidrule(l){2-7}
PBI variant & AutoTST & C2ST-L & C2ST-S & MMD-D & MMD-O & MMDAgg \\
\midrule
PBI-Denoiser\_U & \gainb{+35.6}{+86.1} & \gainb{+68.0}{+758.9} & \gainb{+70.3}{+1059.0} & \gainb{+64.2}{+505.0} & \gainb{+45.3}{+142.9} & \gainb{+8.3}{+12.1} \\
PBI-Denoiser\_J & \gain{+26.8}{+64.8} & \gain{+59.2}{+660.7} & \gain{+61.5}{+926.5} & \gain{+55.4}{+435.8} & \gain{+36.5}{+115.2} & \gain{-0.5}{-0.7} \\
PBI-Velocity\_U & \gain{+15.8}{+38.3} & \gain{+48.2}{+538.4} & \gain{+50.6}{+761.4} & \gain{+44.5}{+349.7} & \gain{+25.5}{+80.6} & \gain{-11.4}{-16.7} \\
PBI-Velocity\_J & \gain{+7.0}{+17.0} & \gain{+39.4}{+440.2} & \gain{+41.8}{+628.9} & \gain{+35.7}{+280.5} & \gain{+16.7}{+52.8} & \gain{-20.2}{-29.5} \\
\bottomrule
\end{tabular}
\end{adjustbox}
\end{table}

\subsection{Type-I Error}
\label{sec:exp-type1}

\Cref{fig:type1} reports the empirical type-I error on all four benchmarks, with both samples drawn from $\rho_0$.
The rejection rates of all four PBI variants remain close to the nominal level $\kappa = 0.05$ at every sample size, with deviations comparable to the seed-to-seed variation shown by the shaded bands, as guaranteed by \cref{prop:validity}.

\begin{figure}[!t]
  \centering
  \includegraphics[width=\textwidth]{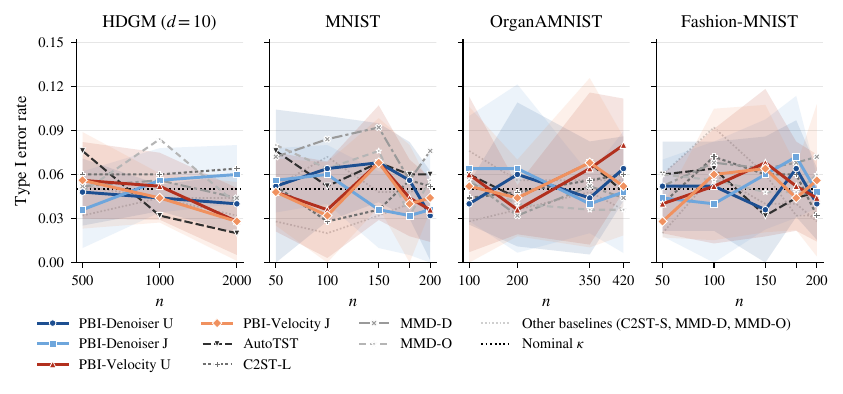}
  \caption{Empirical Type-I error at level $\kappa = 0.05$, with both samples drawn from
$\rho_0$, as a function of the per-population sample size $n$
($n_{\mathrm{tr}} = n_{\mathrm{te}} = n$). Lines show the mean rejection rate over five
training seeds (50 test sets each); shaded bands show $\pm 1$ standard deviation across
seeds for the PBI variants, and the dotted line marks $\kappa$. Baselines are styled as in
\cref{fig:power-main}, and the HDGM axis is logarithmic. The PBI variants remain near the
nominal level on all four benchmarks, as guaranteed by \cref{prop:validity}.}
  \label{fig:type1}
\end{figure}

\subsection{Samples from the Fitted Models}
\label{sec:exp-samples}

\Cref{fig:mnist-generation-n50,fig:organ-amnist-generation-n100,fig:fashion-mnist-generation-n50} show samples generated by the separately trained denoiser and velocity models at the smallest sample size of each image benchmark, alongside held-out observations from both populations.
The $\rho_0$ models are trained on the source class alone (digit 3, Left Lung and Sneaker), and the $\rho_1$ models on samples containing $10\%$ of the target class (digit 8, Right Lung and Ankle Boot).
Although the samples are visibly noisy, the tests built from these fitted fields attain the powers reported in \cref{sec:exp-mnist,sec:exp-organamnist,sec:exp-fashion-mnist} at the same sample sizes.

\providecommand{\mnistfigdir}{experiments/mnist}
\providecommand{\organfigdir}{experiments/organamnist}
\providecommand{\fashionfigdir}{experiments/fashionmnist}

\begin{figure}[!t]
  \centering
  \begin{minipage}[t]{0.48\textwidth}
    \centering
    \includegraphics[width=0.88\linewidth]{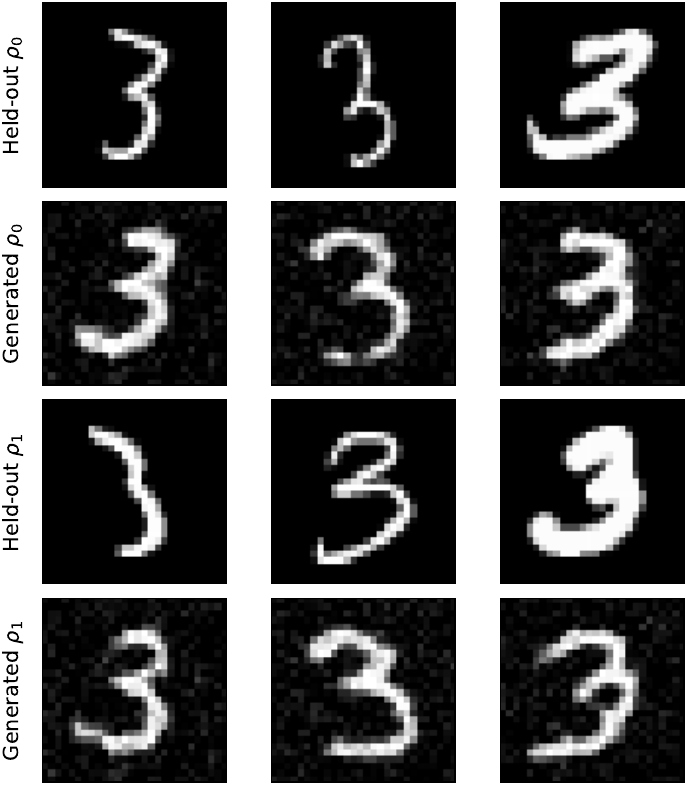}\\[-2pt]
    \textbf{(a) Denoiser}
  \end{minipage}\hfill
  \begin{minipage}[t]{0.48\textwidth}
    \centering
    \includegraphics[width=0.88\linewidth]{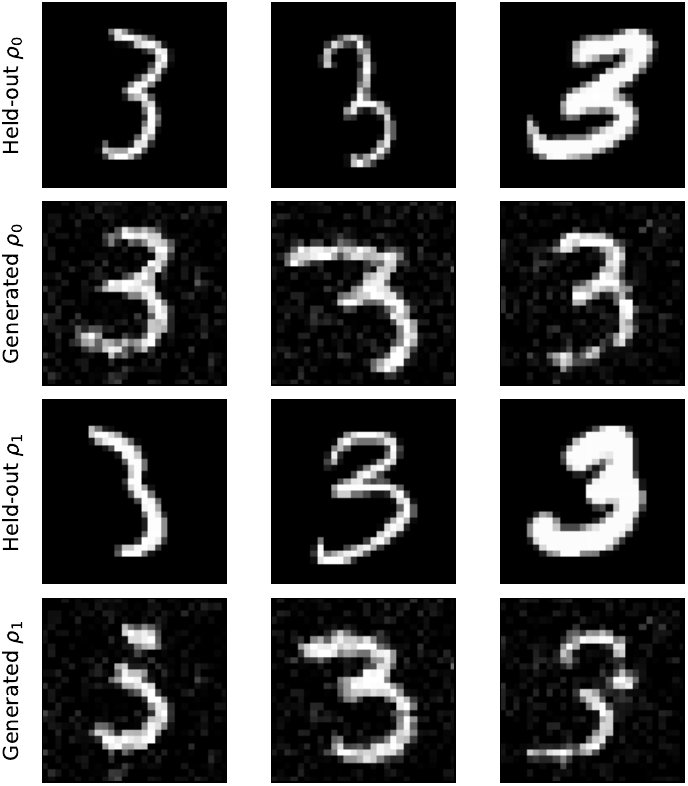}\\[-2pt]
    \textbf{(b) Velocity}
  \end{minipage}
  \caption{Samples from the fitted MNIST models at $n_{\mathrm{tr}} = n_{\mathrm{te}} = 50$
per population, for (a)~the denoiser and (b)~the velocity representation. Rows show, from
top to bottom, held-out observations from $\rho_0$ (digit 3), samples from the $\rho_0$
model, held-out observations from $\rho_1$ (90\% digit 3, 10\% digit 8) and samples from
the $\rho_1$ model; columns are unpaired examples, so a row need not contain a
target-class image. Samples are generated with the
probability flow ODE with $150$ steps. Although the samples are visibly noisy, models
trained at this sample size already achieve the power in
\cref{tab:mnist-power-mean-variance}.}
  \label{fig:mnist-generation-n50}
\end{figure}

\begin{figure}[!t]
  \centering
  \begin{minipage}[t]{0.48\textwidth}
    \centering
    \includegraphics[width=0.88\linewidth]{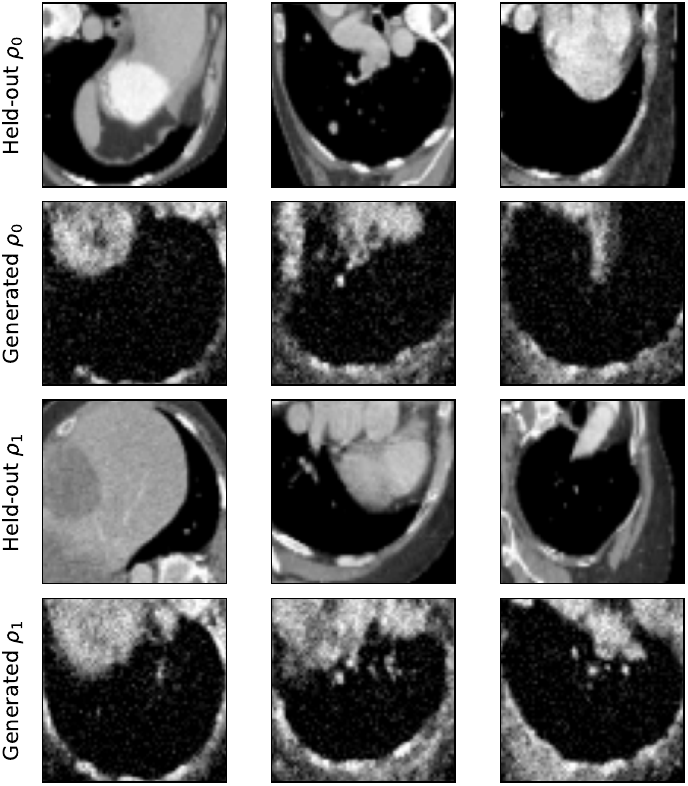}\\[-2pt]
    \textbf{(a) Denoiser}
  \end{minipage}\hfill
  \begin{minipage}[t]{0.48\textwidth}
    \centering
    \includegraphics[width=0.88\linewidth]{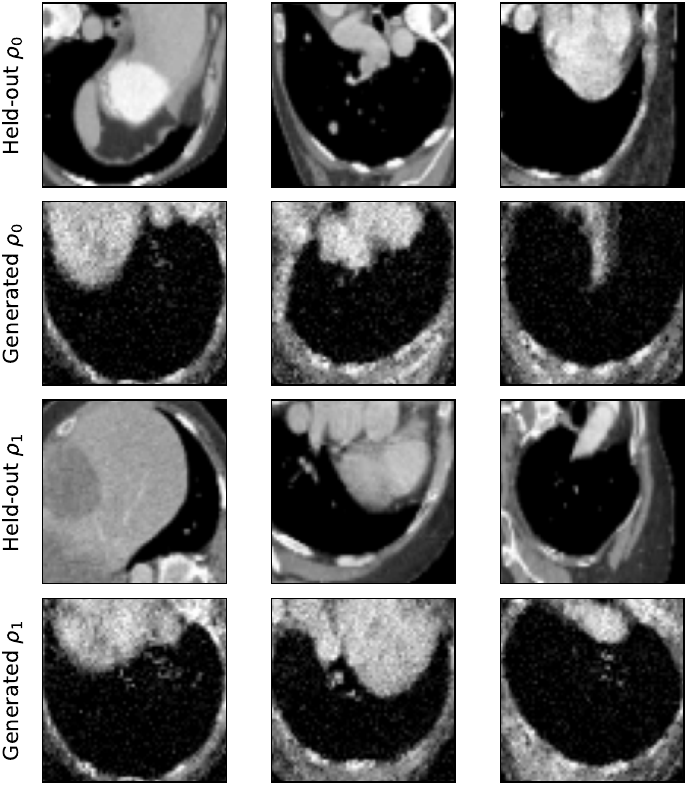}\\[-2pt]
    \textbf{(b) Velocity}
  \end{minipage}
  \caption{Samples from the fitted OrganAMNIST models at $n_{\mathrm{tr}} = n_{\mathrm{te}} = 100$ per population, for (a)~the denoiser and (b)~the velocity representation. Rows show, from top to bottom, held-out observations from $\rho_0$ (Left Lung), samples from the $\rho_0$ model, held-out observations from $\rho_1$ (90\% Left Lung, 10\% Right Lung) and samples from the $\rho_1$ model; columns are unpaired examples, so a row need not contain a target-class image. Samples are generated with the probability flow ODE with $150$ steps. Although the samples are visibly noisy, models trained at this sample size already achieve the power in \cref{tab:organamnist-power-mean-variance}.}
  \label{fig:organ-amnist-generation-n100}
\end{figure}

\begin{figure}[!t]
  \centering
  \begin{minipage}[t]{0.48\textwidth}
    \centering
    \includegraphics[width=0.88\linewidth]{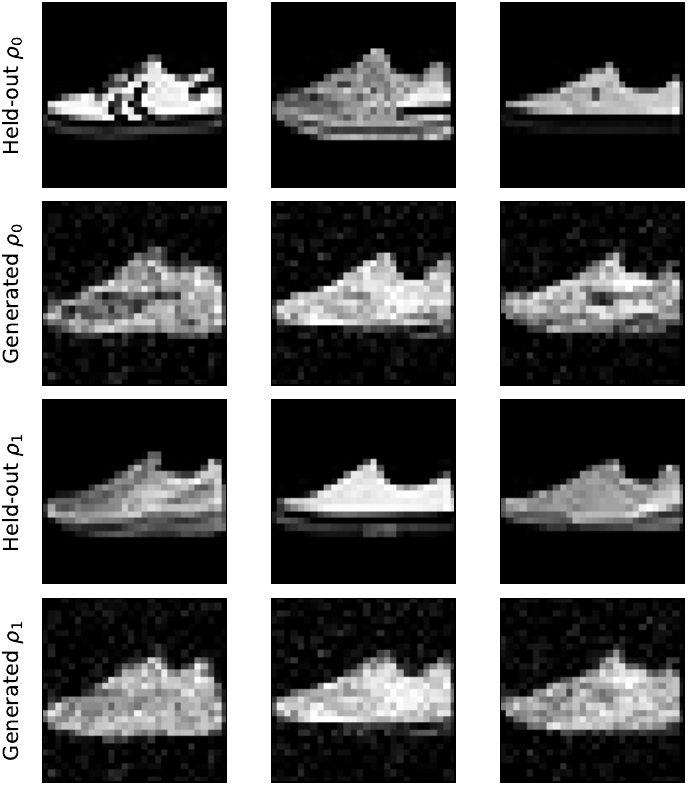}\\[-2pt]
    \textbf{(a) Denoiser}
  \end{minipage}\hfill
  \begin{minipage}[t]{0.48\textwidth}
    \centering
    \includegraphics[width=0.88\linewidth]{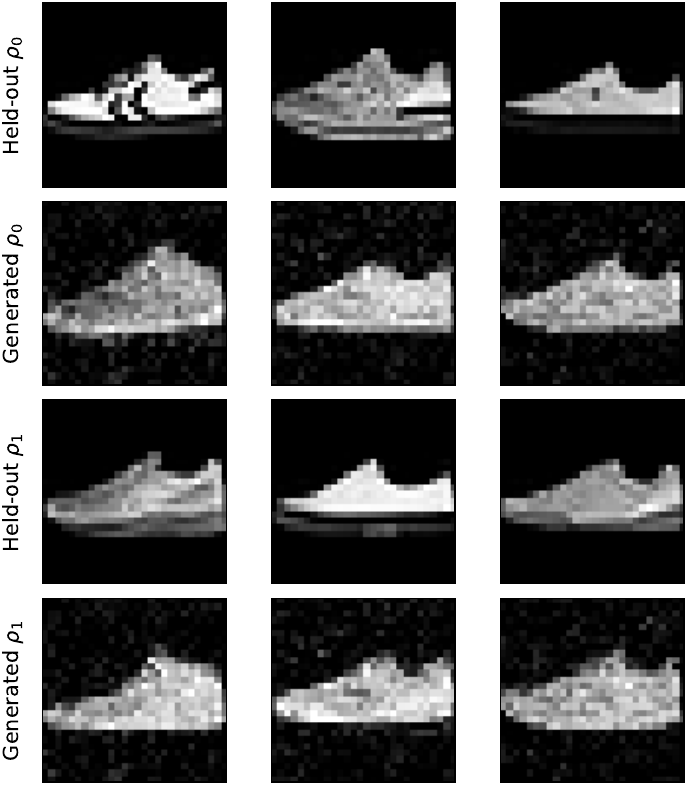}\\[-2pt]
    \textbf{(b) Velocity}
  \end{minipage}
  \caption{Samples from the fitted Fashion-MNIST models at $n_{\mathrm{tr}} = n_{\mathrm{te}} = 50$
per population, for (a)~the denoiser and (b)~the velocity representation. Rows show, from
top to bottom, held-out observations from $\rho_0$ (Sneaker), samples from the $\rho_0$
model, held-out observations from $\rho_1$ (90\% Sneaker, 10\% Ankle Boot) and samples
from the $\rho_1$ model; columns are unpaired examples, so a row need not contain a
target-class image. Samples are generated with the
probability flow ODE with $150$ steps. Although the samples are visibly noisy, models
trained at this sample size already achieve the power in
\cref{tab:fashion-mnist-power-mean-variance}.}
  \label{fig:fashion-mnist-generation-n50}
\end{figure}

\subsection{Ablation on Model Architecture}
\label{sec:ablation}
We compare four model architectures on MNIST and Fashion-MNIST: \texttt{Conditioned}, \texttt{Separate}, \texttt{Common Contrast}, and \texttt{Restricted Common Contrast}. All four architectures estimate the two fields $\widehat f^{\,0}$ and $\widehat f^{\,1}$,
where $\widehat f^{\,\ell}(s,x)$ estimates the denoiser $\eta^\ell_s(x)$ or the velocity
$u^\ell_s(x)$, depending on the representation. They differ only in how parameters are
shared between the two populations; the interpolant, the regression targets of
\cref{alg:fit} and the test are unchanged.
\begin{itemize}[leftmargin=*, itemsep=2pt]
  \item \texttt{Separate}: two independent networks,
        $\widehat f^{\,\ell}(s,x) = f_{\theta_\ell}(s,x)$, each fitted on
        $\mathcal{D}^{\mathrm{tr}}_\ell$ alone. This is the architecture used in all other
        experiments.
  \item \texttt{Conditioned}: a single network with a binary population input,
        $\widehat f^{\,\ell}(s,x) = f_\theta(s,x,\ell)$, which shares features between the
        populations while allowing the two fields to differ.
  \item \texttt{Common Contrast}: a common field $h_\theta$ and a contrast field $d_\phi$,
        both of full capacity, with
        $\widehat f^{\,0} = h_\theta - \tfrac12 d_\phi$ and
        $\widehat f^{\,1} = h_\theta + \tfrac12 d_\phi$. This reparameterises rather than
        restricts the pair of fields, since any pair is recovered by
        $h = (f^0+f^1)/2$ and $d = f^1 - f^0$; by \cref{prop:invariance}, the population
        contrast vanishes at any single radius if and only if $\mathcal{H}_0$ holds.
  \item \texttt{Restricted Common Contrast}: the same decomposition with a lower-capacity
        contrast network. The zero
        contrast of $\mathcal{H}_0$ remains representable, but under $\mathcal{H}_1$, the
        restricted $d_\phi$ may capture only part of the population contrast, which can
        reduce variance at the cost of attenuating the signal.
\end{itemize}
The shared architectures are fitted by minimising the sum of the two regression losses of
\cref{alg:fit}, each on its own training split. All four use the training splits only,
so \cref{prop:validity} applies unchanged: the architecture affects power but not validity.
For the two common--contrast architectures, the local contrast reduces to
$\psi_s = -2\, d_\phi(s,\omega_s)\cdot\bigl(y - h_\theta(s,\omega_s)\bigr)$, where $y$ is
the regression target, so the two-sample signal passes through $d_\phi$ alone.

All experiments use $(n_{\mathrm{train}} = n_{\mathrm{test}}=50)$ observations per population and five training seeds (50 test sets each). We report mean empirical power and unbiased sample variance across seeds. We use $\epsilon = 0.3$ for the contamination, whereas the main experiment uses $\epsilon = 0.1$.

\begin{table}[t]
\centering
\caption{Architecture ablation on MNIST (source digit 3, target digit 8) at contamination
fraction $\epsilon = 0.3$, higher than the $0.1$ used in the main experiments, with
$n_{\mathrm{tr}} = n_{\mathrm{te}} = 50$ per population. Columns are the four
architectures described in \cref{sec:ablation}; \texttt{Separate} is the architecture
used in all other experiments. Entries report the mean over five training seeds (50 test sets each; \cref{app:implementation-details}) $\pm$ the
unbiased sample variance across seeds; bold marks the best variant for each
architecture.}
\label{tab:mnist-power-architecture-ablation}
\small
\begin{adjustbox}{max width=\textwidth}
\begin{tabular}{lcccc}
\toprule
Method & Conditioned & Separate & Common Contrast & Restricted Common Contrast \\
\midrule
PBI-Denoiser\_U & {\bfseries\boldmath\res{0.796}{0.0289}} & {\bfseries\boldmath\res{0.820}{0.0114}} & {\bfseries\boldmath\res{0.792}{0.0297}} & {\bfseries\boldmath\res{0.784}{0.0247}} \\
PBI-Denoiser\_J & \res{0.736}{0.0075} & \res{0.712}{0.0021} & \res{0.680}{0.0422} & \res{0.680}{0.0150} \\
PBI-Velocity\_U & \res{0.784}{0.0549} & \res{0.768}{0.0225} & \res{0.744}{0.0043} & \res{0.668}{0.0825} \\
PBI-Velocity\_J & \res{0.668}{0.0801} & \res{0.712}{0.0267} & \res{0.504}{0.0087} & \res{0.520}{0.0988} \\
\bottomrule
\end{tabular}
\end{adjustbox}
\end{table}

On MNIST, the \texttt{Conditioned} and \texttt{Separate} architectures achieve the highest mean power for all four PBI variants, as shown in \cref{tab:mnist-power-architecture-ablation}. 
The strongest configuration is \texttt{PBI-Denoiser\_U} with Separate models, which reaches power $0.820$, compared with $0.796$, $0.792$, and $0.784$ for Conditioned, Common Contrast, and Restricted Common Contrast, respectively. 
Architectural differences are more pronounced for \texttt{PBI-Velocity\_J}: 
Separate models achieve $0.712$, exceeding Common Contrast and Restricted Common Contrast by $20.8$ and $19.2$ percentage points. Separate models also exhibit lower seed-to-seed variance than Conditioned models across all PBI variants. 

\begin{table}[t]
\centering
\caption{Architecture ablation on Fashion-MNIST (source digit 3, target digit 8) at contamination
fraction $\epsilon = 0.3$, higher than the $0.1$ used in the main experiments, with
$n_{\mathrm{tr}} = n_{\mathrm{te}} = 50$ per population. Columns are the four
architectures described in \cref{sec:ablation}; \texttt{Separate} is the architecture
used in all other experiments. Entries report the mean over five training seeds (50 test sets each; \cref{app:implementation-details}) $\pm$ the
unbiased sample variance across seeds; bold marks the best variant for each
architecture.}
\label{tab:fashion-mnist-power-architecture-ablation}
\small
\begin{adjustbox}{max width=\textwidth}
\begin{tabular}{lcccc}
\toprule
Method & Conditioned & Separate & Common Contrast & Restricted Common Contrast \\
\midrule
PBI-Denoiser\_U & {\bfseries\boldmath\res{1.000}{0.0000}} & {\bfseries\boldmath\res{1.000}{0.0000}} & {\bfseries\boldmath\res{0.996}{0.0001}} & {\bfseries\boldmath\res{1.000}{0.0000}} \\
PBI-Denoiser\_J & \res{0.972}{0.0011} & \res{0.968}{0.0037} & \res{0.892}{0.0107} & \res{0.956}{0.0031} \\
PBI-Velocity\_U & \res{0.912}{0.0077} & \res{0.868}{0.0177} & \res{0.896}{0.0059} & \res{0.952}{0.0023} \\
PBI-Velocity\_J & \res{0.792}{0.0377} & \res{0.740}{0.0590} & \res{0.728}{0.0181} & \res{0.916}{0.0043} \\
\bottomrule
\end{tabular}
\end{adjustbox}
\end{table}
\Cref{tab:fashion-mnist-power-architecture-ablation} shows ablation results on Fashion-MNIST. \texttt{PBI-Denoiser\_U} achieves nearly saturated power across all architectures, ranging from $0.996$ to $1.000$, leaving little scope to distinguish their performance. 
Architectural choices have a larger effect on the velocity variants. 
Restricted Common Contrast achieves the highest mean power for both \texttt{PBI-Velocity\_U} ($0.952$) and \texttt{PBI-Velocity\_J} ($0.916$), improving over Separate models by $8.4$ and $17.6$ percentage points, respectively. 
It also yields the lowest sample variance among the four architectures for both velocity variants.

Across both datasets, uniform weighting yields higher mean power than Jeffreys weighting for every matched architecture and parameterisation. 
\texttt{PBI-Denoiser\_U} is the strongest variant within each architecture, whereas the preferred architecture depends on the dataset and parameterisation. 
These results indicate that architectural constraints can materially affect empirical power, particularly for the velocity variants. 
The comparisons are descriptive and do not establish statistical significance.

\section{Observation-Level Asymmetry under Corruption}
\label{sec:interpretability}

The statistics of \cref{sec:method} are differences of group means of observation-level scores $\widehat\Phi^{\mathrm{rep}}_w(\omega)$, so each held-out observation carries its own share of the evidence against $\mathcal{H}_0$. This appendix checks, on a controlled example, that these shares localise the discrepancy: when only part of $\rho_1$ departs from $\rho_0$, the observations responsible for the departure receive the largest scores.

\paragraph{Setup:}
We reuse the two-dimensional construction of \cref{fig:path_invariance}. The reference distribution $\rho_0$ is an equal-weight mixture of three Gaussians with means $(-1.6,-1)$, $(0,1.75)$ and $(1.6,-1)$ and covariance $0.1\,\mathbb{I}_2$. The second distribution is a contaminated copy,
\begin{equation}
\rho_1 = 0.9\,\rho_0 + 0.1\,C,
\end{equation}
where the corruption $C$ is one of three structural perturbations of increasing difficulty:
(i) a \emph{new cluster} $C=\mathcal{N}\big((0,-0.1),\,0.1\,\mathbb{I}_2\big)$ between the modes of $\rho_0$, where $\rho_0$ has almost no mass;
(ii) a \emph{shifted mode} $C=\mathcal{N}\big((2.2,-0.4),\,0.1\,\mathbb{I}_2\big)$, a copy of the right-hand mode translated by $(0.6,0.6)$ that partially overlaps it;
(iii) a \emph{deformed mode} $C=\mathcal{N}\big((0,1.75),\,\mathrm{diag}(0.9,\,0.02)\big)$, which shares its centre with the top mode of $\rho_0$ but is stretched horizontally.
Each held-out sample contains $500$ observations per group, so that about $50$ observations of $\mathcal{D}_1^{\mathrm{te}}$ are corrupted. The corruption labels are used only for evaluation and are never available during testing.

\paragraph{Fields:}
We consider two sets of fields. The \emph{exact} fields are the posterior-mean denoisers of $\rho_0$ and $\rho_1$, available in closed form for Gaussian mixtures. The \emph{fitted} fields mimic the pipeline of \cref{alg:pbi}: for each group, we draw an independent training sample of $500$ observations, fit a Gaussian mixture by expectation--maximisation with the number of components selected by BIC over $\{1,\dots,6\}$, and use the posterior-mean denoiser of the fitted mixture. The fitted fields are therefore misspecified whenever the selected mixture does not resolve the corruption.

\paragraph{Scores:}
For each held-out observation we compute the Monte Carlo score $\widehat\Phi^{\mathrm{den}}_w(\omega)$ of \cref{eq:mc_witness} with $K=64$ draws, truncation $\tau=0.05$ and $\mu$ uniform on $\mathcal{I}_\tau$, for the uniform ($w_{\mathrm{U}}$) and Jeffreys ($w^{\mathrm{den}}_{\mathrm{J}}$) weights, and report the \emph{asymmetry}
\begin{equation}
A(\omega) = -\widehat\Phi^{\mathrm{den}}_w(\omega).
\end{equation}
By \cref{lem:risk_contrasts}, $A$ is large when the channel-$1$ denoiser reconstructs $\omega$ much better than the channel-$0$ denoiser, and it has mean zero on the part of $\rho_1$ that coincides with $\rho_0$ up to reweighting. As a geometric counterpart of \cref{fig:path_invariance}, we also compute the \emph{mirror mismatch} $|X_0-\omega|$ of an observation $\omega\sim\rho_1$: we integrate the probability flow ODE from $t=1-\epsilon$ to the bottleneck with the velocity of the $\rho_1$ half, continue to $t=\epsilon$ with the velocity of the $\rho_0$ half, and record the endpoint $X_0$ ($\epsilon=2\times10^{-3}$, adaptive Runge--Kutta with relative tolerance $10^{-6}$). Under $\mathcal{H}_0$ the path is reflection symmetric and $X_0=\omega$ (\cref{prop:invariance}). Because it requires two ODE solves per observation, the mirror mismatch is computed on a subset of $30$ corrupted and $90$ uncorrupted observations per seed.

\paragraph{Metrics:}
We rank the held-out observations from $\rho_1$ by $A$ and report the area under the ROC curve (AUROC) for separating corrupted from uncorrupted observations, and the precision among the $k$ highest-ranked observations (P@$k$), where $k$ is the number of corrupted observations; a random ranking gives P@$k=0.10$. We also report the rejection rate of the two-sided permutation test of \cref{sec:permutation} with the uniform denoiser statistic, $R=999$ permutations and $\kappa=0.05$. All numbers are averaged over $10$ independent seeds.

\begin{table}[t]
\caption{Observation-level localisation under structural corruption,
$\rho_1 = 0.9\,\rho_0 + 0.1\,C$, with 500 held-out observations per group, averaged over
10 seeds. $\bar A$ is the mean asymmetry (denoiser, uniform weight) of observations from
$\rho_0$ and of uncorrupted and corrupted observations from $\rho_1$. AUROC (uniform and
Jeffreys weights) and P@$k$ (uniform weight) measure how well $A$ ranks the corrupted
observations within the $\rho_1$ sample; random ranking gives AUROC $0.5$ and
P@$k = 0.10$. Mirror is the AUROC of the geometric mismatch $|X_0 - \omega|$, computed on
30 corrupted and 90 uncorrupted observations per seed. Rej.\ is the rejection rate of the
test at $\kappa = 0.05$ with $R = 999$ permutations.}
\label{tab:interpretability}
\centering\small\setlength{\tabcolsep}{3.5pt}
\begin{tabular}{llcccccccc}
\toprule
& & \multicolumn{3}{c}{$\bar A$} & \multicolumn{2}{c}{AUROC} & & & \\
\cmidrule(lr){3-5}\cmidrule(lr){6-7}
Fields & Corruption $C$ & $\rho_0$ & $\rho_1$ matched & $\rho_1$ corrupted & U & J & P@$k$ & Mirror & Rej. \\
\midrule
Exact  & new cluster   & $-0.005$ & $-0.005$ & $0.134$ & 0.999 & 0.998 & 0.99 & 1.000 & 1.0 \\
       & shifted mode  & $-0.012$ & $-0.008$ & $0.207$ & 0.979 & 0.981 & 0.78 & 0.942 & 1.0 \\
       & deformed mode & $-0.005$ & $-0.006$ & $0.138$ & 0.904 & 0.922 & 0.44 & 0.796 & 0.7 \\
\midrule
Fitted & new cluster   & $-0.006$ & $-0.005$ & $0.139$ & 0.982 & 1.000 & 0.89 & 0.990 & 0.9 \\
       & shifted mode  & $-0.010$ & $-0.010$ & $0.204$ & 0.982 & 0.981 & 0.79 & 0.925 & 0.9 \\
       & deformed mode & $-0.006$ & $-0.006$ & $0.134$ & 0.828 & 0.867 & 0.38 & 0.750 & 0.4 \\
\bottomrule
\end{tabular}
\end{table}

\paragraph{Results:}
\Cref{tab:interpretability} shows that the asymmetry is concentrated on the corrupted observations. Observations from $\rho_0$ and uncorrupted observations from $\rho_1$ have a mean asymmetry close to zero, whereas corrupted observations score between $20$ and $40$ times higher, the observation-level counterpart of the broken reflection symmetry in the right panel of \cref{fig:path_invariance}. The new cluster and the shifted mode are ranked almost perfectly (AUROC between $0.98$ and $1.00$), and the corresponding tests reject in every or nearly every seed. The deformed mode is harder: its observations near the shared centre are individually indistinguishable from the uncorrupted top mode, so only its tails can be localised (P@$k\approx0.4$), and the global test is also less powerful. Fitted fields reproduce these conclusions, with a loss concentrated on the deformed mode, where the selected mixture does not always resolve the corruption; this is the observation-level counterpart of the dependence of power on field accuracy in \cref{prop:consistency}. The geometric mirror mismatch orders the corruptions in the same way. Still, it is less discriminative than $A$ and requires ODE solves, thereby supporting the use of the regression-risk score for localisation.

\paragraph{Scope:}
The scores rank observations but do not provide per-observation error control: the permutation $p$-value is a statement about the two distributions, and flagging individual observations with guarantees would require an additional multiple-testing or conformal procedure. We omit the velocity representation from \cref{tab:interpretability} because, for fields of the affine form of \cref{lem:relations}, the velocity witness is a reweighting of the denoiser witness. Indeed, $U_s(\omega,z)-u^\ell_s(\omega_s) = a'_s\omega + \sigma'_s z - \lambda_s\eta^\ell_s(\omega_s) - (\sigma'_s/\sigma_s)(a_s\omega+\sigma_s z) = \lambda_s\big(\omega-\eta^\ell_s(\omega_s)\big)$, so $\psi^{\mathrm{vel}}_s = \lambda_s^2\,\psi^{\mathrm{den}}_s$ pointwise; in particular the Jeffreys-weighted velocity and denoiser scores coincide. The two representations differ in practice only because the velocity and denoiser networks are trained separately.

\section{Related Work}
\label{sec:related}

\paragraph{Two-sample tests with learned statistics:}
Kernel tests based on the maximum mean discrepancy (MMD) compare mean embeddings of the two distributions in a reproducing kernel Hilbert space \citep{gretton2012kernel}, and their power depends strongly on the kernel and on the representation of the data. Most adaptive tests, therefore, learn part of the statistics from data, and they differ in how this learning is separated from calibration. Sample-splitting methods fit the statistic on one part of the data and evaluate it on the other, so that the learned statistic can be treated as fixed during calibration: kernels chosen to maximise a power criterion \citep{gretton_optimal_2012, sutherland_generative_2021}, deep kernels \citep{liu2020learning}, classifiers whose held-out accuracy or logits serve as the statistic \citep{lopezpaz2017revisiting, cheng2022c2st, kim_classification_2020}, and real-valued witness functions whose mean difference is tested on held-out data \citep{kubler_witness_2022}, fitted with automated machine learning in AutoTST \citep{kubler2022autotst}. Other methods avoid the split by learning representations on auxiliary data \citep{kirchler_two-sample_2020}, by accounting for kernel selection through selective inference \citep{kubler_learning_2020}, or by aggregating or fusing tests over a collection of kernels with jointly calibrated thresholds \citep{jmlr_mmdagg, biggs_mmd-fuse_2023}. Our tests follow the sample-splitting template, but the witness is not trained to discriminate between the samples: it is obtained from two arcs of a generative model, each fitted to a single population, and its population mean is characterised exactly in terms of the fields of a stochastic interpolant. Since the statistic is a difference of means of observation-level scores, it also localises the discrepancy in data space, which complements tests that learn interpretable test locations \citep{chwialkowski_fast_2015, jitkrittum_interpretable_2016}.

\paragraph{Scores, denoisers and divergences:}
Score functions have long been used for model criticism, most prominently through the kernel Stein discrepancy for goodness-of-fit testing against a model with a known score \citep{liu_kernelized_2016, chwialkowski_kernel_2016}, and more recently for simple-versus-simple hypothesis testing and change-point detection when the scores of both hypotheses are known \citep{moushegian_diffusion-based_2025}. In our setting, neither distribution is known, and both fields are learned from finite samples. The link between Gaussian smoothing, score differences and the Kullback--Leibler divergence goes back to de Bruijn's identity and its extension to relative entropy \citep{lyu_interpretation_2012}, and to the I-MMSE relation and its mismatched-estimation counterpart, which expresses the relative entropy between two Gaussian-smoothed distributions as an integral of excess denoising error \citep{guo_mutual_2004, verdu_mismatched_2010}. The same identities underlie likelihood bounds for weighted denoising score matching \citep{song_maximum_2021, kingma_variational_2021} and information-theoretic views of diffusion models \citep{kong_information-theoretic_2023}.\cref{prop:jeffreys} is the symmetrised form of the mismatched-estimation identity, written along the Gaussian-bottleneck path in terms of the two fields that the test estimates; its role here is to identify the population target of the Jeffreys-weighted statistics.

\paragraph{Generative models as tools for inference:}
Beyond synthesis, diffusion and flow models have been used for likelihood evaluation through the probability flow ODE \citep{song_score-based_2020}, for out-of-distribution detection \citep{graham_denoising_2023, liu_unsupervised_2023}, and for zero-shot classification by comparing denoising errors across class-conditional models \citep{li_your_2023, clark_text--image_2023}. The last line of work is closest in spirit to our denoising-risk contrast, which compares the held-out regression risks of two models. Still, it uses these comparisons for \emph{prediction} rather than for calibrated inference. Paths of intermediate distributions have also been used to estimate density ratios, by telescoping over a sequence of bridges \citep{rhodes_telescoping_2020} or by integrating infinitesimal classifiers along a continuum \citep{choi_density_2022}; our statistics likewise accumulate local evidence along a path, but we target a statistical test with finite-sample validity rather than a density-ratio estimate in this line of work. Finally, independently trained diffusion models and flow-based models learn remarkably similar fields \citep{kadkhodaie_generalization_2024, zhang_emergence_2024, english_path_2026, briq_amazing_2025}, which supports treating these fields as stable statistical objects.

\paragraph{Midpoint inference from flow models:}
Closest to our work is a recent line of research built on the zero-flow criterion \citep{wang_zero-flow_2026}: for rectified flow with independently coupled endpoints, the velocity field vanishes at $t=\tfrac12$ if and only if the two endpoint distributions coincide. The criterion was introduced to certify conditional sufficiency in representation learning \citep{wang_zero-flow_2026}, has been turned into a midpoint divergence for training one-step generators, including a generalisation to symmetric stochastic interpolants \citep{shlenskii_midpoint_2026}, and, concurrent to our work, into a two-sample test that learns the midpoint velocity on a training split and evaluates its alignment with the displacements of held-out pairs, calibrated by a studentised asymptotic approximation \citep{wang_zero-flow_2026}. These approaches and ours rest on the same symmetry: when the endpoints of a symmetric interpolant are exchangeable, its law is invariant under the time reflection $t\mapsto 1-t$. They exploit it differently. The zero-flow tests read the symmetry off a single field at a single time, which must be learned from pairs of observations drawn from both populations. For the Gaussian-bottleneck interpolant, the midpoint velocity vanishes identically under both hypotheses, and the evidence is instead spread across noise levels. Each half of the path is a Gaussian channel acting on one population, so the symmetry becomes an equality between two separately trained fields at every noise level (\cref{prop:invariance}). Each field is, after rescaling its input, the denoiser of a standard diffusion model trained on a single population, which lets the test reuse diffusion training, yields observation-level scores, and admits permutation calibration with finite-sample validity.
\section{Discussion and Limitations}
\label{sec:conclusion}

\begin{figure}[!t]
  \centering
  \includegraphics[width=0.9\textwidth]{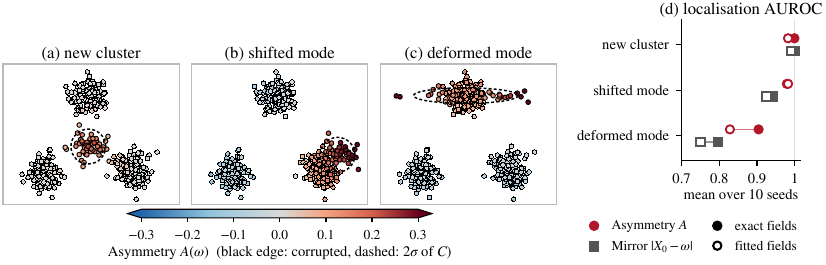}
  \caption{Observation-level localisation ($\rho_1 = 0.9\,\rho_0 + 0.1\,C$, 500 held-out
  observations per group.)
  (a)--(c)~Held-out $\rho_1$ observations coloured by the asymmetry
  $A(\omega) = -\widehat{\Phi}^{\mathrm{den}}_{U}(\omega)$, computed with the exact
  denoisers for one seed; positive values mean the $\rho_1$ denoiser reconstructs
  $\omega$ better than the $\rho_0$ denoiser. Black edges mark corrupted observations,
  and dashed ellipses show the $2\sigma$ region of $C$. For the deformed mode,
  uncorrupted points near the shared centre also score highly, so only the tails are
  localised. (d)~AUROC for ranking the corrupted observations within the $\rho_1$
  sample, averaged over 10 seeds, for $A$ and for the geometric mirror mismatch
  $|X_0-\omega|$, with exact (filled) and fitted (open) fields. Setup and full results: \cref{sec:interpretability,tab:interpretability}.}
  \label{fig:localisation}
\end{figure}

We showed that the generative dynamics learned by transport-based models are useful
objects of statistical inference, not only tools for generation. Routing both
distributions through a shared Gaussian bottleneck turns the null hypothesis into a
reflection symmetry of the path, which held-out regression risks convert into
observation-level scores. Permutation calibration makes the resulting tests valid in
finite samples for any trained network. Across four benchmarks, they improve on
the strongest baseline by up to 33 percentage points at an equal total sample budget.

\textbf{Sample splitting:} The main cost of our approach is that the fields must be learned, which requires splitting each dataset into training and testing sets. This cost is shared by the learned baselines we compare against and by classical kernel tests when the bandwidth is selected from the data. A tempting way to avoid the split is to sample from the Gaussian midpoint and check the symmetry of the learned paths on generated data. Such a test is not calibrated: its type-I error is controlled only if the modelling error is negligible. Moreover, \citet{english_is_2026} shows that inferences drawn from a learned path agree across training and held-out data, but not with data newly sampled from the same model. Their observation that path lengths on real and generated data align when generation is accurate suggests a criterion for early stopping and model selection. Because it can be computed on the training split alone, it would leave the validity of our tests intact.

\textbf{Low-sample regimes and Model quality:} Deep generative models might be expected to overfit when few observations are available. Recent work identifies several sources of implicit regularisation in diffusion and flow models, arising from the training dynamics, the architecture and the regression objective itself \citep{bonnaire_why_2025,farghly_implicit_2025,bertrand_closed-form_2025}. Consistent with this, our tests match or exceed the strongest baseline, even at the smallest sample size of every image benchmark.
Overfitting cannot inflate the type-I error rate in any case, since calibration uses only held-out data; empirically, it stays close to $\kappa$
(\cref{fig:type1}).
For modelling, we used off-the-shelf architectures and training settings without any hyperparameter search. Our goal was two-sample testing rather than generation, and we wanted to show that the tests remain powerful under modest computational budgets. Samples from our trained models (\cref{fig:mnist-generation-n50,fig:organ-amnist-generation-n100,fig:fashion-mnist-generation-n50}) show that a perfect generative model is not needed for a powerful test. Conversely, since power is governed by how accurately the fields are learned (\cref{sec:consistency}), better-trained models, for instance, fine-tuned pretrained ones, should yield more powerful tests. Architecture also matters, mainly for the velocity variants (\cref{sec:ablation}). Power, however, is limited to differences that the fitted fields can represent. If both models converge to the same misspecified approximation, the test has no power even though $\mathcal{H}_1$ holds.

\textbf{Choice of variant:}
No single combination of representation and weighting dominates across benchmarks, so the variant must be fixed in advance. Alternatively, several variants can be combined into a single valid test, for example, with a Bonferroni correction or by permuting the maximum of the standardised statistics under shared permutations. We have not yet evaluated such aggregated tests and leave them, together with the use of pretrained models, to future work.


\acks{We extend our gratitude to Prof. Masato Taki for helpful discussions.  We gratefully acknowledge funding by the Deutsche Forschungsgemeinschaft (DFG, German Research Foundation) – project number
459422098, JST SPRING, Japan Grant Number JPMJSP2104, JST CREST (PMJCR2015), JSPS KAKENHI (24K02905, 25H01107), JST ERATO (JPMJER2601), the National Research Foundation, Singapore and the Ministry of Digital Development and Information under the AI Visiting Professorship Programme (award number AIVP-2024-004). Any opinions, findings and conclusions or recommendations expressed in this material are those of the author(s) and do not reflect the views of the National Research Foundation, Singapore and the Ministry of Digital Development and Information.}


\newpage

\appendix
\section{Experimental Details and Notations}
\label{app:implementation-details}

\cref{tab:notation} summarises the notation used throughout the paper.

\begin{table}[ht]
\centering
\caption{Summary of notation.}
\label{tab:notation}
\begin{tabular}{ll}
\toprule
Symbol & Meaning \\
\midrule
$\rho_0, \rho_1$ & Unknown data densities; $\mathcal{H}_0 \colon \rho_0 = \rho_1$ \\
$\mathcal{D}_\ell = \{x_\ell^{(i)}\}_{i=1}^{n_\ell}$ & Dataset drawn i.i.d.\ from $\rho_\ell$, $\ell \in \{0, 1\}$ \\
$\kappa$ & Significance level (often denoted $\alpha$ in the testing literature) \\
$x_t, \rho_t$ & Stochastic interpolant and its density at time $t \in [0,1]$ \\
$z \sim \mathcal{N}(0, \mathbb{I}_d)$ & Gaussian latent variable \\
$s = |t - \tfrac{1}{2}|$ & Radial coordinate, the distance from the Gaussian bottleneck \\
$t_s^0, t_s^1$ & Mirrored times $\tfrac{1}{2} - s$ and $\tfrac{1}{2} + s$ \\
$a_s, \sigma_s$ & Radial signal and noise schedules \\
$\zeta_s^\ell$ & Density of channel $\ell$ at radius $s$, $\zeta_s^\ell = \rho_{t_s^\ell}$; $\zeta_0^\ell = \mathcal{N}(0, \tfrac{1}{2}\mathbb{I}_d)$ and $\zeta_{1/2}^\ell = \rho_\ell$ \\
$\eta_s^\ell, u_s^\ell$ & Population denoiser and outward velocity of channel $\ell$ \\
$\widehat{\eta}_s^\ell, \widehat{u}_s^\ell$ & Learned counterparts, fitted on $\mathcal{D}_\ell^{\mathrm{tr}}$ \\
$\lambda_s, \chi_s$ & Known positive scalars relating denoiser, velocity and score contrasts \\
$\omega, \omega_s$ & Held-out observation and its corrupted version at radius $s$ \\
$\psi_s^{\mathrm{den}}, \psi_s^{\mathrm{vel}}$ & Local risk contrasts at radius $s$ \\
$D_s^{\mathrm{rep}}$ & Population signal at radius $s$, $\mathrm{rep} \in \{\mathrm{den}, \mathrm{vel}\}$ \\
$\mathcal{I}_\tau$ & Truncated path interval $[\tau, \tfrac{1}{2} - \tau]$ \\
$w_{\mathrm{U}}, w_{\mathrm{J}}^{\mathrm{rep}}$ & Uniform and Jeffreys path weights \\
$\widehat{T}_w^{\mathrm{rep}}$ & Test statistic with representation $\mathrm{rep}$ and weight $w$ \\
$\varepsilon_{\ell|k}(s)$ & Error of learned field $\ell$ measured under channel $k$ \\
\bottomrule
\end{tabular}
\end{table}

\paragraph{FC-MLP backbone:}
For vector-valued observations $x\in\mathbb{R}^{d}$, the network input is the concatenation $[t,x]$ of the scalar time variable and the observation.  
The implementation supports appending a binary population indicator for a conditioned model, but the experiments reported here use separate networks for $\rho_0$ and $\rho_1$ and therefore use only $[t,x]$.  
The HDGM backbone has four hidden layers of width 128 with SiLU activations and a linear $10$ dimensional output.

\paragraph{U-Net backbone:}
For image-valued observations, we use a three-level encoder--decoder U-Net.
The encoder applies double-convolution blocks at channel widths $c,2c,4c,8c$, with factor-two max pooling between levels.
Conditional time embeddings are introduced at widths $c$, $2c$, and $4c$.  
The decoder uses transposed convolutions for upsampling and concatenates the corresponding encoder features, giving double-convolution inputs of widths $12c$, $6c$, and $3c$.  
A final $1\times1$ convolution maps the $c$-channel representation to the output.  
We set $c=16$ for MNIST and Fashion-MNIST and $c=32$ for OrganAMNIST.

For HDGM, the fixed-budget implementation produces 1300, 1300, 2600 recorded optimisation steps for $n=500$, 1000, and 2000, respectively.  
MNIST and OrganAMNIST use 1,600 at each sample size, while Fashion-MNIST uses 2,500 recorded steps.
All timing records correspond to the \texttt{final\_fixed\_budget} checkpoint-selection rule.

\paragraph{Baseline implementation details:}
We use the official NumPy implementation of MMDAgg \citep{jmlr_mmdagg}, considering ten bandwidths and retaining its default Monte Carlo calibration parameters. 
AutoTST is implemented using the official codebase of \citet{kubler2022autotst}; we use the AutoGluon tabular backend with the \texttt{best\_quality} preset and a time budget of 150 seconds. Image observations are flattened before being passed to AutoTST.

We adapt MMD-D, MMD-O, C2ST-L, and C2ST-S from the implementation of \citet{liu2020learning}. 
C2ST-S computes its statistics from the classifier's hard class predictions, whereas C2ST-L uses its predicted class probabilities. 
MMD-O optimises the bandwidth of a Gaussian kernel applied directly to the observations, while MMD-D jointly learns a neural feature representation and the parameters of a deep kernel by maximising a variance-normalised empirical MMD objective. 
We use convolutional feature networks for image data and fully connected networks for vector-valued data.


\section{Per-Benchmark Power and Type-I Error Curves}
\label{app:power-curves}

For better readability, this appendix shows the power curves of \cref{fig:power-main} and the type-I error curves of \cref{fig:type1} for each benchmark separately, with all ten methods shown individually.
\Cref{fig:hdgm-power-appendix,fig:mnist-power-appendix,fig:organamnist-power-appendix,fig:fashion-mnist-power-appendix} report power, and \cref{fig:hdgm-type1-appendix,fig:mnist-type1-appendix,fig:organamnist-type1-appendix,fig:fashion-mnist-type1-appendix} report the empirical type-I error at level $\kappa = 0.05$; the numerical power values are in the tables of \cref{sec:experiments}.

\providecommand{\hdgmfigdir}{experiments/hdgm}
\providecommand{\mnistfigdir}{experiments/mnist}
\providecommand{\organfigdir}{experiments/organamnist}
\providecommand{\fashionfigdir}{experiments/fashionmnist}

\begin{figure}
  \centering
  \includegraphics[width=0.82\textwidth]{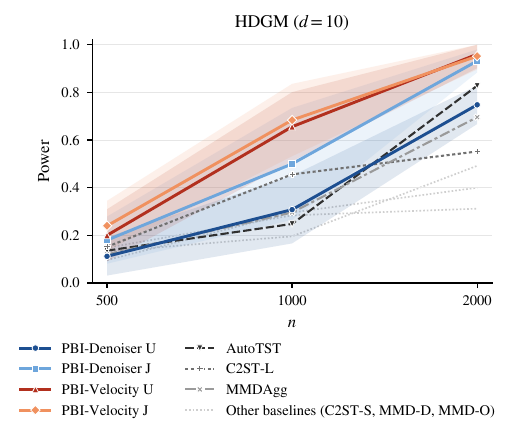}
  \caption{Test power on HDGM at level $\kappa = 0.05$ against the per-population sample size
$n$ ($n_{\mathrm{tr}} = n_{\mathrm{te}} = n$), with all ten methods shown individually; a
condensed version appears in \cref{fig:power-main}. Lines show the mean over five training seeds (50 test sets each); shaded bands show
$\pm 1$ standard deviation across seeds; numerical values are in
\cref{tab:hdgm-power-mean-variance}.}
  \label{fig:hdgm-power-appendix}
\end{figure}
\begin{figure}
  \centering
  \includegraphics[width=0.82\textwidth]{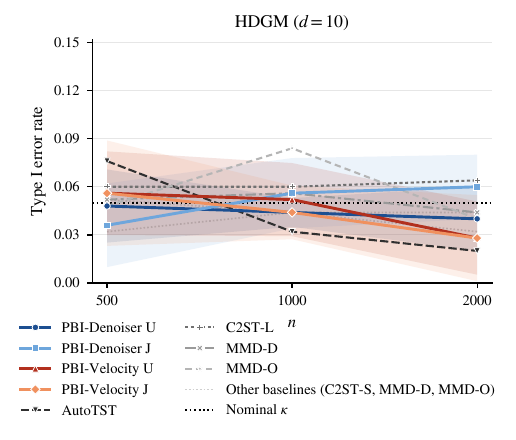}
  \caption{Empirical Type-I error on HDGM at level $\kappa = 0.05$, with both samples drawn
    from $\rho_0$, against the per-population
    sample size $n$ ($n_{\mathrm{tr}} = n_{\mathrm{te}} = n$). Lines show the mean over five training seeds (50 test sets each); shaded bands show
$\pm 1$ standard deviation across seeds; the dotted line marks $\kappa$. The PBI variants stay close to the nominal level, as guaranteed by \cref{prop:validity}.}
  \label{fig:hdgm-type1-appendix}
\end{figure}

\begin{figure}
  \centering
  \includegraphics[width=0.82\textwidth]{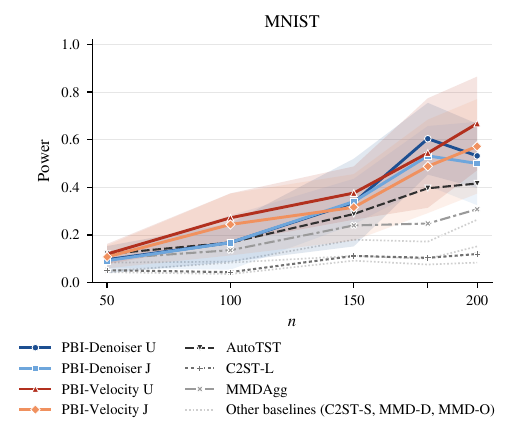}
  \caption{Test power on MNIST at level $\kappa = 0.05$ against the per-population sample size
$n$ ($n_{\mathrm{tr}} = n_{\mathrm{te}} = n$), with all ten methods shown individually; a
condensed version appears in \cref{fig:power-main}. Lines show the mean over five training seeds (50 test sets each); shaded bands show
$\pm 1$ standard deviation across seeds; numerical values are in
\cref{tab:mnist-power-mean-variance}.}
  \label{fig:mnist-power-appendix}
\end{figure}
\begin{figure}
  \centering
  \includegraphics[width=0.82\textwidth]{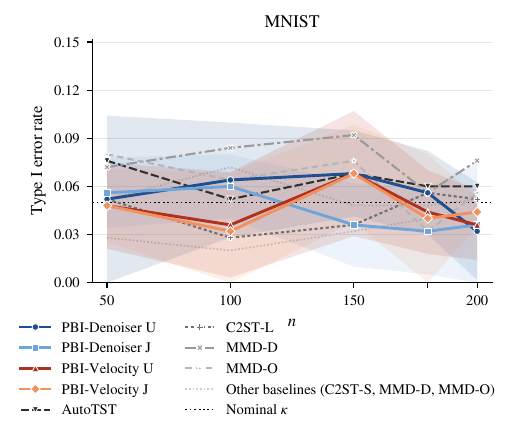}
  \caption{Empirical Type-I error on MNIST at level $\kappa = 0.05$, with both samples drawn
from $\rho_0$, against the per-population
sample size $n$ ($n_{\mathrm{tr}} = n_{\mathrm{te}} = n$). Lines show the mean over five training seeds (50 test sets each); shaded bands show
$\pm 1$ standard deviation across seeds; the dotted
line marks $\kappa$. The PBI variants stay close to the nominal level, as guaranteed by
\cref{prop:validity}.}
  \label{fig:mnist-type1-appendix}
\end{figure}

\begin{figure}
  \centering
  \includegraphics[width=0.82\textwidth]{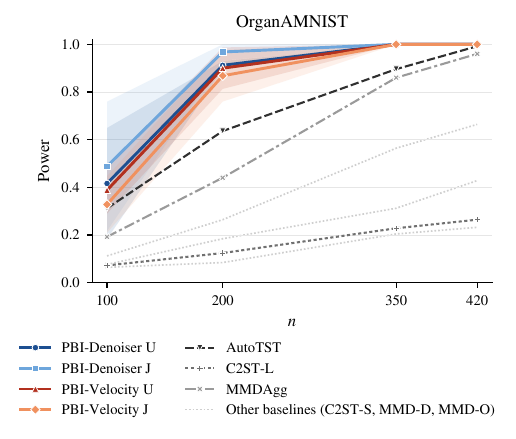}
  \caption{Test power on OrganAMNIST at level $\kappa = 0.05$ against the per-population sample size
$n$ ($n_{\mathrm{tr}} = n_{\mathrm{te}} = n$), with all ten methods shown individually; a
condensed version appears in \cref{fig:power-main}. Lines show the mean over five training seeds (50 test sets each); shaded bands show
$\pm 1$ standard deviation across seeds; numerical values are in
\cref{tab:organamnist-power-mean-variance}.}
  \label{fig:organamnist-power-appendix}
\end{figure}
\begin{figure}
  \centering
  \includegraphics[width=0.82\textwidth]{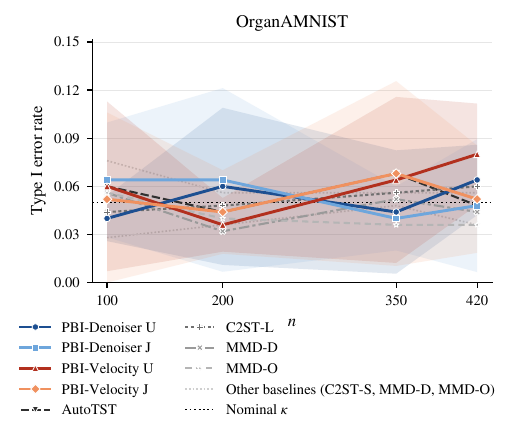}
  \caption{Empirical Type-I error on OrganAMNIST at level $\kappa = 0.05$, with both samples drawn
from $\rho_0$, against the per-population
sample size $n$ ($n_{\mathrm{tr}} = n_{\mathrm{te}} = n$). Lines show the mean over five training seeds (50 test sets each); shaded bands show
$\pm 1$ standard deviation across seeds; the dotted
line marks $\kappa$. The PBI variants stay close to the nominal level, as guaranteed by
\cref{prop:validity}.}
  \label{fig:organamnist-type1-appendix}
\end{figure}

\begin{figure}
  \centering
  \includegraphics[width=0.82\textwidth]{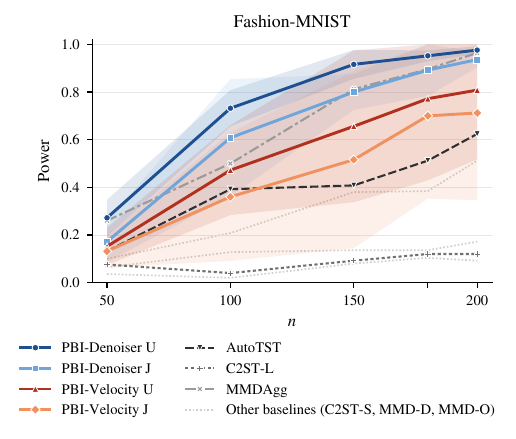}
  \caption{Test power on Fashion-MNIST at level $\kappa = 0.05$ against the per-population sample size
$n$ ($n_{\mathrm{tr}} = n_{\mathrm{te}} = n$), with all ten methods shown individually; a
condensed version appears in \cref{fig:power-main}. Lines show the mean over five training seeds (50 test sets each) and
shaded bands $\pm 1$ standard deviation across seeds; numerical values are in
\cref{tab:fashion-mnist-power-mean-variance}.}
  \label{fig:fashion-mnist-power-appendix}
\end{figure}
\begin{figure}[ht]
  \centering
  \includegraphics[width=0.82\textwidth]{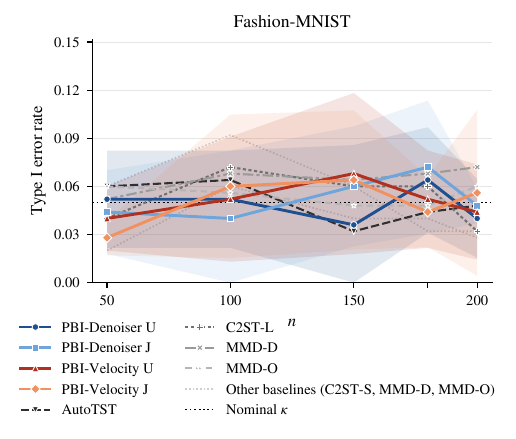}
  \caption{Empirical Type-I error on Fashion-MNIST at level $\kappa = 0.05$, with both samples drawn
from $\rho_0$, against the per-population
sample size $n$ ($n_{\mathrm{tr}} = n_{\mathrm{te}} = n$). Lines show the mean over five training seeds (50 test sets each) and shaded bands $\pm 1$ standard deviation across seeds; the dotted
line marks $\kappa$. The PBI variants stay close to the nominal level, as guaranteed by
\cref{prop:validity}.}
  \label{fig:fashion-mnist-type1-appendix}
\end{figure}

\newpage

\vskip 0.2in
\bibliography{iclr2027_conference}

\begin{thebibliography}{62}
\providecommand{\natexlab}[1]{#1}
\providecommand{\url}[1]{\texttt{#1}}
\expandafter\ifx\csname urlstyle\endcsname\relax
  \providecommand{\doi}[1]{doi: #1}\else
  \providecommand{\doi}{doi: \begingroup \urlstyle{rm}\Url}\fi

\bibitem[Albergo et~al.(2025)Albergo, Boffi, and Vanden-Eijnden]{albergo_stochastic_2025}
M.~Albergo, N.~M. Boffi, and E.~Vanden-Eijnden.
\newblock Stochastic {Interpolants}: {A} {Unifying} {Framework} for {Flows} and {Diffusions}.
\newblock \emph{Journal of Machine Learning Research}, 26\penalty0 (209):\penalty0 1--80, 2025.
\newblock ISSN 1533-7928.
\newblock URL \url{http://jmlr.org/papers/v26/23-1605.html}.

\bibitem[Albergo and Vanden-Eijnden(2022)]{albergo_building_2022}
M.~S. Albergo and E.~Vanden-Eijnden.
\newblock Building {Normalizing} {Flows} with {Stochastic} {Interpolants}.
\newblock Sept. 2022.
\newblock URL \url{https://openreview.net/forum?id=li7qeBbCR1t}.

\bibitem[Anderson(1991)]{anderson1991hotelling}
T.~Anderson.
\newblock Hotelling (1931) the generalization of.
\newblock \emph{Breakthroughs in Statistics}, 1:\penalty0 45, 1991.

\bibitem[Barocas et~al.(2023)Barocas, Hardt, and Narayanan]{barocas_fairness_2023}
S.~Barocas, M.~Hardt, and A.~Narayanan.
\newblock \emph{Fairness and {Machine} {Learning}: {Limitations} and {Opportunities}}.
\newblock MIT Press, Dec. 2023.
\newblock ISBN 9780262376525.
\newblock Google-Books-ID: HuGwEAAAQBAJ.

\bibitem[Bertrand et~al.(2025)Bertrand, Gagneux, Massias, and Emonet]{bertrand_closed-form_2025}
Q.~Bertrand, A.~Gagneux, M.~Massias, and R.~Emonet.
\newblock On the {Closed}-{Form} of {Flow} {Matching}: {Generalization} {Does} {Not} {Arise} from {Target} {Stochasticity}, Dec. 2025.
\newblock URL \url{http://arxiv.org/abs/2506.03719}.
\newblock arXiv:2506.03719 [cs].

\bibitem[Biggs et~al.(2023)Biggs, Schrab, and Gretton]{biggs_mmd-fuse_2023}
F.~Biggs, A.~Schrab, and A.~Gretton.
\newblock {MMD}-{Fuse}: {Learning} and {Combining} {Kernels} for {Two}-{Sample} {Testing} {Without} {Data} {Splitting}.
\newblock Nov. 2023.
\newblock URL \url{https://openreview.net/forum?id=JOkgEY9os2}.

\bibitem[Bonnaire et~al.(2025)Bonnaire, Urfin, Biroli, and Mezard]{bonnaire_why_2025}
T.~Bonnaire, R.~Urfin, G.~Biroli, and M.~Mezard.
\newblock Why {Diffusion} {Models} {Don}’t {Memorize}: {The} {Role} of {Implicit} {Dynamical} {Regularization} in {Training}.
\newblock Oct. 2025.
\newblock URL \url{https://openreview.net/forum?id=BSZqpqgqM0}.

\bibitem[Bortoli(2022)]{bortoli_convergence_2022}
V.~D. Bortoli.
\newblock Convergence of denoising diffusion models under the manifold hypothesis.
\newblock \emph{Transactions on Machine Learning Research}, Aug. 2022.
\newblock ISSN 2835-8856.
\newblock URL \url{https://openreview.net/forum?id=MhK5aXo3gB}.

\bibitem[Briq et~al.(2025)Briq, Kamp, Fried, Cohen, and Kesselheim]{briq_amazing_2025}
R.~Briq, M.~Kamp, O.~Fried, S.~Cohen, and S.~Kesselheim.
\newblock The {Amazing} {Stability} of {Flow} {Matching}.
\newblock Nov. 2025.
\newblock URL \url{https://openreview.net/forum?id=K6ptzIhoKy}.

\bibitem[Bunne et~al.(2023)Bunne, Hsieh, Cuturi, and Krause]{bunne_schrodinger_2023}
C.~Bunne, Y.-P. Hsieh, M.~Cuturi, and A.~Krause.
\newblock The {Schrödinger} {Bridge} between {Gaussian} {Measures} has a {Closed} {Form}.
\newblock In \emph{Proceedings of {The} 26th {International} {Conference} on {Artificial} {Intelligence} and {Statistics}}, pages 5802--5833. PMLR, Apr. 2023.
\newblock URL \url{https://proceedings.mlr.press/v206/bunne23a.html}.

\bibitem[Cheng and Cloninger(2022)]{cheng2022c2st}
X.~Cheng and A.~Cloninger.
\newblock Classification logit two-sample testing by neural networks for differentiating near manifold densities.
\newblock \emph{IEEE Transactions on Information Theory}, 68\penalty0 (10):\penalty0 6631--6662, 2022.
\newblock \doi{10.1109/TIT.2022.3175691}.

\bibitem[Choi et~al.(2022)Choi, Meng, Song, and Ermon]{choi_density_2022}
K.~Choi, C.~Meng, Y.~Song, and S.~Ermon.
\newblock Density {Ratio} {Estimation} via {Infinitesimal} {Classification}, Mar. 2022.
\newblock URL \url{http://arxiv.org/abs/2111.11010}.
\newblock arXiv:2111.11010 [cs.LG].

\bibitem[Chwialkowski et~al.(2016)Chwialkowski, Strathmann, and Gretton]{chwialkowski_kernel_2016}
K.~Chwialkowski, H.~Strathmann, and A.~Gretton.
\newblock A {Kernel} {Test} of {Goodness} of {Fit}.
\newblock In \emph{Proceedings of {The} 33rd {International} {Conference} on {Machine} {Learning}}, pages 2606--2615. PMLR, June 2016.
\newblock URL \url{https://proceedings.mlr.press/v48/chwialkowski16.html}.

\bibitem[Chwialkowski et~al.(2015)Chwialkowski, Ramdas, Sejdinovic, and Gretton]{chwialkowski_fast_2015}
K.~P. Chwialkowski, A.~Ramdas, D.~Sejdinovic, and A.~Gretton.
\newblock Fast {Two}-{Sample} {Testing} with {Analytic} {Representations} of {Probability} {Measures}.
\newblock In \emph{Advances in {Neural} {Information} {Processing} {Systems}}, volume~28. Curran Associates, Inc., 2015.
\newblock URL \url{https://papers.nips.cc/paper_files/paper/2015/hash/b571ecea16a9824023ee1af16897a582-Abstract.html}.

\bibitem[Clark and Jaini(2023)]{clark_text--image_2023}
K.~Clark and P.~Jaini.
\newblock Text-to-{Image} {Diffusion} {Models} are {Zero}-{Shot} {Classifiers}, Sept. 2023.
\newblock URL \url{http://arxiv.org/abs/2303.15233}.
\newblock arXiv:2303.15233 [cs.CV].

\bibitem[English and Suzuki(2026)]{english_path_2026}
E.~English and T.~Suzuki.
\newblock Path {Invariance} and the {Robustness} of {Flow} {Matching}: {Beyond} {Architectural} and {Data} {Perturbations}.
\newblock Mar. 2026.
\newblock URL \url{https://openreview.net/forum?id=7XH9TUEzCI}.

\bibitem[English et~al.(2026)English, Huang, and Suzuki]{english_is_2026}
E.~English, W.~Huang, and T.~Suzuki.
\newblock Is your {Flow} {Matching} {Model} {Really} {Generalising}? {A} {Path}-{Length} {Diagnostic}.
\newblock May 2026.
\newblock URL \url{https://openreview.net/forum?id=mG9tp027lG}.

\bibitem[Farghly et~al.(2025)Farghly, Rebeschini, Deligiannidis, and Doucet]{farghly_implicit_2025}
T.~Farghly, P.~Rebeschini, G.~Deligiannidis, and A.~Doucet.
\newblock Implicit {Regularisation} in {Diffusion} {Models}: {An} {Algorithm}-{Dependent} {Generalisation} {Analysis}.
\newblock Oct. 2025.
\newblock URL \url{https://openreview.net/forum?id=3lSqgESdPu}.

\bibitem[Graham et~al.(2023)Graham, Pinaya, Tudosiu, Nachev, Ourselin, and Cardoso]{graham_denoising_2023}
M.~S. Graham, W.~H.~L. Pinaya, P.-D. Tudosiu, P.~Nachev, S.~Ourselin, and M.~J. Cardoso.
\newblock Denoising diffusion models for out-of-distribution detection, Apr. 2023.
\newblock URL \url{http://arxiv.org/abs/2211.07740}.
\newblock arXiv:2211.07740 [cs.LG].

\bibitem[Grathwohl et~al.(2018)Grathwohl, Chen, Bettencourt, Sutskever, and Duvenaud]{grathwohl_ffjord_2018}
W.~Grathwohl, R.~T.~Q. Chen, J.~Bettencourt, I.~Sutskever, and D.~Duvenaud.
\newblock {FFJORD}: {Free}-{Form} {Continuous} {Dynamics} for {Scalable} {Reversible} {Generative} {Models}.
\newblock Sept. 2018.
\newblock URL \url{https://openreview.net/forum?id=rJxgknCcK7}.

\bibitem[Gretton et~al.(2012{\natexlab{a}})Gretton, Borgwardt, Rasch, Schölkopf, and Smola]{gretton2012kernel}
A.~Gretton, K.~M. Borgwardt, M.~J. Rasch, B.~Schölkopf, and A.~Smola.
\newblock A {Kernel} {Two}-{Sample} {Test}.
\newblock \emph{Journal of Machine Learning Research}, 13\penalty0 (25):\penalty0 723--773, 2012{\natexlab{a}}.
\newblock ISSN 1533-7928.
\newblock URL \url{http://jmlr.org/papers/v13/gretton12a.html}.

\bibitem[Gretton et~al.(2012{\natexlab{b}})Gretton, Sejdinovic, Strathmann, Balakrishnan, Pontil, Fukumizu, and Sriperumbudur]{gretton_optimal_2012}
A.~Gretton, D.~Sejdinovic, H.~Strathmann, S.~Balakrishnan, M.~Pontil, K.~Fukumizu, and B.~K. Sriperumbudur.
\newblock Optimal kernel choice for large-scale two-sample tests.
\newblock In \emph{Advances in {Neural} {Information} {Processing} {Systems}}, volume~25. Curran Associates, Inc., 2012{\natexlab{b}}.
\newblock URL \url{https://proceedings.neurips.cc/paper/2012/hash/dbe272bab69f8e13f14b405e038deb64-Abstract.html}.

\bibitem[Guo et~al.(2004)Guo, Shamai, and Verdu]{guo_mutual_2004}
D.~Guo, S.~Shamai, and S.~Verdu.
\newblock Mutual {Information} and {Minimum} {Mean}-square {Error} in {Gaussian} {Channels}, Dec. 2004.
\newblock URL \url{http://arxiv.org/abs/cs/0412108}.
\newblock arXiv:cs/0412108.

\bibitem[Jitkrittum et~al.(2016)Jitkrittum, Szabó, Chwialkowski, and Gretton]{jitkrittum_interpretable_2016}
W.~Jitkrittum, Z.~Szabó, K.~P. Chwialkowski, and A.~Gretton.
\newblock Interpretable {Distribution} {Features} with {Maximum} {Testing} {Power}.
\newblock In \emph{Advances in {Neural} {Information} {Processing} {Systems}}, volume~29. Curran Associates, Inc., 2016.
\newblock URL \url{https://papers.nips.cc/paper_files/paper/2016/hash/0a09c8844ba8f0936c20bd791130d6b6-Abstract.html}.

\bibitem[Kadkhodaie et~al.(2024)Kadkhodaie, Guth, Simoncelli, and Mallat]{kadkhodaie_generalization_2024}
Z.~Kadkhodaie, F.~Guth, E.~Simoncelli, and S.~Mallat.
\newblock Generalization in diffusion models arises from geometry-adaptive harmonic representations.
\newblock \emph{International Conference on Learning Representations}, 2024:\penalty0 46543--46567, May 2024.
\newblock URL \url{https://proceedings.iclr.cc/paper_files/paper/2024/hash/cbaf319a4712385b5ba8a414808b5713-Abstract-Conference.html}.

\bibitem[Kim et~al.(2020)Kim, Ramdas, Singh, and Wasserman]{kim_classification_2020}
I.~Kim, A.~Ramdas, A.~Singh, and L.~Wasserman.
\newblock Classification accuracy as a proxy for two sample testing, Feb. 2020.
\newblock URL \url{http://arxiv.org/abs/1602.02210}.
\newblock arXiv:1602.02210 [cs.LG].

\bibitem[Kingma et~al.(2021)Kingma, Salimans, Poole, and Ho]{kingma_variational_2021}
D.~P. Kingma, T.~Salimans, B.~Poole, and J.~Ho.
\newblock Variational {Diffusion} {Models}.
\newblock Nov. 2021.
\newblock URL \url{https://openreview.net/forum?id=2LdBqxc1Yv}.

\bibitem[Kirchler et~al.(2020)Kirchler, Khorasani, Kloft, and Lippert]{kirchler_two-sample_2020}
M.~Kirchler, S.~Khorasani, M.~Kloft, and C.~Lippert.
\newblock Two-sample {Testing} {Using} {Deep} {Learning}.
\newblock In \emph{Proceedings of the {Twenty} {Third} {International} {Conference} on {Artificial} {Intelligence} and {Statistics}}, pages 1387--1398. PMLR, June 2020.
\newblock URL \url{https://proceedings.mlr.press/v108/kirchler20a.html}.

\bibitem[Kong et~al.(2023)Kong, Brekelmans, and Steeg]{kong_information-theoretic_2023}
X.~Kong, R.~Brekelmans, and G.~V. Steeg.
\newblock Information-{Theoretic} {Diffusion}, Feb. 2023.
\newblock URL \url{http://arxiv.org/abs/2302.03792}.
\newblock arXiv:2302.03792 [cs.LG].

\bibitem[Kong et~al.(2020)Kong, Ping, Huang, Zhao, and Catanzaro]{kong_diffwave_2020}
Z.~Kong, W.~Ping, J.~Huang, K.~Zhao, and B.~Catanzaro.
\newblock {DiffWave}: {A} {Versatile} {Diffusion} {Model} for {Audio} {Synthesis}.
\newblock Oct. 2020.
\newblock URL \url{https://openreview.net/forum?id=a-xFK8Ymz5J}.

\bibitem[Kübler et~al.(2020)Kübler, Jitkrittum, Schölkopf, and Muandet]{kubler_learning_2020}
J.~Kübler, W.~Jitkrittum, B.~Schölkopf, and K.~Muandet.
\newblock Learning {Kernel} {Tests} {Without} {Data} {Splitting}.
\newblock In \emph{Advances in {Neural} {Information} {Processing} {Systems}}, volume~33, pages 6245--6255. Curran Associates, Inc., 2020.
\newblock URL \url{https://proceedings.neurips.cc/paper/2020/hash/44f683a84163b3523afe57c2e008bc8c-Abstract.html}.

\bibitem[Kübler et~al.(2022{\natexlab{a}})Kübler, Jitkrittum, Schölkopf, and Muandet]{kubler_witness_2022}
J.~M. Kübler, W.~Jitkrittum, B.~Schölkopf, and K.~Muandet.
\newblock A {Witness} {Two}-{Sample} {Test}, Feb. 2022{\natexlab{a}}.
\newblock URL \url{http://arxiv.org/abs/2102.05573}.
\newblock arXiv:2102.05573 [cs.LG].

\bibitem[Kübler et~al.(2022{\natexlab{b}})Kübler, Stimper, Buchholz, Muandet, and Schölkopf]{kubler2022autotst}
J.~M. Kübler, V.~Stimper, S.~Buchholz, K.~Muandet, and B.~Schölkopf.
\newblock {AutoML} {Two}-{Sample} {Test}.
\newblock Oct. 2022{\natexlab{b}}.
\newblock URL \url{https://openreview.net/forum?id=JLWOTZpWZzY}.

\bibitem[LeCun et~al.(1998)LeCun, Cortes, and Burges]{lecun1998mnist}
Y.~LeCun, C.~Cortes, and C.~J. Burges.
\newblock The mnist database of handwritten digits.
\newblock \url{http://yann.lecun.com/exdb/mnist/}, 1998.

\bibitem[Lehmann and Romano(2005)]{lehmann_invariance_2005}
E.~L. Lehmann and J.~P. Romano.
\newblock Invariance.
\newblock In \emph{Testing {Statistical} {Hypotheses}}, pages 212--276. Springer, New York, NY, 2005.
\newblock ISBN 9780387276052.
\newblock \doi{10.1007/0-387-27605-X_6}.
\newblock URL \url{https://doi.org/10.1007/0-387-27605-X_6}.

\bibitem[Li et~al.(2023)Li, Prabhudesai, Duggal, Brown, and Pathak]{li_your_2023}
A.~C. Li, M.~Prabhudesai, S.~Duggal, E.~Brown, and D.~Pathak.
\newblock Your {Diffusion} {Model} is {Secretly} a {Zero}-{Shot} {Classifier}, Sept. 2023.
\newblock URL \url{http://arxiv.org/abs/2303.16203}.
\newblock arXiv:2303.16203 [cs.LG].

\bibitem[Lipman et~al.(2022)Lipman, Chen, Ben-Hamu, Nickel, and Le]{lipman_flow_2022}
Y.~Lipman, R.~T.~Q. Chen, H.~Ben-Hamu, M.~Nickel, and M.~Le.
\newblock Flow {Matching} for {Generative} {Modeling}.
\newblock Sept. 2022.
\newblock URL \url{https://openreview.net/forum?id=PqvMRDCJT9t}.

\bibitem[Liu et~al.(2020)Liu, Xu, Lu, Zhang, Gretton, and Sutherland]{liu2020learning}
F.~Liu, W.~Xu, J.~Lu, G.~Zhang, A.~Gretton, and D.~J. Sutherland.
\newblock Learning deep kernels for non-parametric two-sample tests.
\newblock In H.~D. III and A.~Singh, editors, \emph{Proceedings of the 37th International Conference on Machine Learning}, volume 119 of \emph{Proceedings of Machine Learning Research}, pages 6316--6326. PMLR, 13--18 Jul 2020.
\newblock URL \url{https://proceedings.mlr.press/v119/liu20m.html}.

\bibitem[Liu et~al.(2016)Liu, Lee, and Jordan]{liu_kernelized_2016}
Q.~Liu, J.~Lee, and M.~Jordan.
\newblock A {Kernelized} {Stein} {Discrepancy} for {Goodness}-of-fit {Tests}.
\newblock In \emph{Proceedings of {The} 33rd {International} {Conference} on {Machine} {Learning}}, pages 276--284. PMLR, June 2016.
\newblock URL \url{https://proceedings.mlr.press/v48/liub16.html}.

\bibitem[Liu et~al.(2022)Liu, Gong, and Liu]{liu_flow_2022}
X.~Liu, C.~Gong, and Q.~Liu.
\newblock Flow {Straight} and {Fast}: {Learning} to {Generate} and {Transfer} {Data} with {Rectified} {Flow}.
\newblock Sept. 2022.
\newblock URL \url{https://openreview.net/forum?id=XVjTT1nw5z}.

\bibitem[Liu et~al.(2023)Liu, Zhou, Wang, and Weinberger]{liu_unsupervised_2023}
Z.~Liu, J.~P. Zhou, Y.~Wang, and K.~Q. Weinberger.
\newblock Unsupervised {Out}-of-{Distribution} {Detection} with {Diffusion} {Inpainting}, Aug. 2023.
\newblock URL \url{http://arxiv.org/abs/2302.10326}.
\newblock arXiv:2302.10326 [cs.CV].

\bibitem[Lopez-Paz and Oquab(2017)]{lopezpaz2017revisiting}
D.~Lopez-Paz and M.~Oquab.
\newblock Revisiting {Classifier} {Two}-{Sample} {Tests}.
\newblock Feb. 2017.
\newblock URL \url{https://openreview.net/forum?id=SJkXfE5xx}.

\bibitem[Lyu(2012)]{lyu_interpretation_2012}
S.~Lyu.
\newblock Interpretation and {Generalization} of {Score} {Matching}, May 2012.
\newblock URL \url{http://arxiv.org/abs/1205.2629}.
\newblock arXiv:1205.2629 [cs.LG].

\bibitem[Mlodozeniec et~al.(2025)Mlodozeniec, Eschenhagen, Bae, Immer, Krueger, and Turner]{mlodozeniec_influence_2025}
B.~Mlodozeniec, R.~Eschenhagen, J.~Bae, A.~Immer, D.~Krueger, and R.~Turner.
\newblock Influence {Functions} for {Scalable} {Data} {Attribution} in {Diffusion} {Models}, May 2025.
\newblock URL \url{http://arxiv.org/abs/2410.13850}.
\newblock arXiv:2410.13850 [cs.LG].

\bibitem[Moushegian et~al.(2025)Moushegian, Banerjee, and Tarokh]{moushegian_diffusion-based_2025}
S.~Moushegian, T.~Banerjee, and V.~Tarokh.
\newblock Diffusion-{Based} {Hypothesis} {Testing} and {Change}-{Point} {Detection}, June 2025.
\newblock URL \url{http://arxiv.org/abs/2506.16089}.
\newblock arXiv:2506.16089 [stat.ML].

\bibitem[Pidstrigach(2022)]{pidstrigach_score-based_2022}
J.~Pidstrigach.
\newblock Score-{Based} {Generative} {Models} {Detect} {Manifolds}.
\newblock Oct. 2022.
\newblock URL \url{https://openreview.net/forum?id=AiNrnIrDfD9}.

\bibitem[Rabanser et~al.(2019)Rabanser, Günnemann, and Lipton]{rabanser_failing_2019}
S.~Rabanser, S.~Günnemann, and Z.~Lipton.
\newblock Failing {Loudly}: {An} {Empirical} {Study} of {Methods} for {Detecting} {Dataset} {Shift}.
\newblock In \emph{Advances in {Neural} {Information} {Processing} {Systems}}, volume~32. Curran Associates, Inc., 2019.
\newblock URL \url{https://proceedings.neurips.cc/paper_files/paper/2019/hash/846c260d715e5b854ffad5f70a516c88-Abstract.html}.

\bibitem[Ramdas et~al.(2015)Ramdas, Reddi, Poczos, Singh, and Wasserman]{ramdas_decreasing_2015}
A.~Ramdas, S.~J. Reddi, B.~Poczos, A.~Singh, and L.~Wasserman.
\newblock On the {Decreasing} {Power} of {Kernel} and {Distance} {Based} {Nonparametric} {Hypothesis} {Tests} in {High} {Dimensions}.
\newblock \emph{Proceedings of the AAAI Conference on Artificial Intelligence}, 29\penalty0 (1), Mar. 2015.
\newblock ISSN 2374-3468.
\newblock \doi{10.1609/aaai.v29i1.9692}.
\newblock URL \url{https://ojs.aaai.org/index.php/AAAI/article/view/9692}.

\bibitem[Rhodes et~al.(2020)Rhodes, Xu, and Gutmann]{rhodes_telescoping_2020}
B.~Rhodes, K.~Xu, and M.~Gutmann.
\newblock Telescoping {Density}-{Ratio} {Estimation}.
\newblock In \emph{Advances in {Neural} {Information} {Processing} {Systems}}, volume~33, pages 4905--4916. Curran Associates, Inc., 2020.
\newblock URL \url{https://proceedings.neurips.cc/paper/2020/hash/33d3b157ddc0896addfb22fa2a519097-Abstract.html}.

\bibitem[Rombach et~al.(2022)Rombach, Blattmann, Lorenz, Esser, and Ommer]{rombach_high-resolution_2022}
R.~Rombach, A.~Blattmann, D.~Lorenz, P.~Esser, and B.~Ommer.
\newblock High-{Resolution} {Image} {Synthesis} {With} {Latent} {Diffusion} {Models}.
\newblock pages 10684--10695, 2022.
\newblock URL \url{https://openaccess.thecvf.com/content/CVPR2022/html/Rombach_High-Resolution_Image_Synthesis_With_Latent_Diffusion_Models_CVPR_2022_paper.html}.

\bibitem[Salimans et~al.(2016)Salimans, Goodfellow, Zaremba, Cheung, Radford, Chen, and Chen]{salimans_improved_2016}
T.~Salimans, I.~Goodfellow, W.~Zaremba, V.~Cheung, A.~Radford, X.~Chen, and X.~Chen.
\newblock Improved {Techniques} for {Training} {GANs}.
\newblock In \emph{Advances in {Neural} {Information} {Processing} {Systems}}, volume~29. Curran Associates, Inc., 2016.
\newblock URL \url{https://proceedings.neurips.cc/paper_files/paper/2016/hash/8a3363abe792db2d8761d6403605aeb7-Abstract.html}.

\bibitem[Schrab et~al.(2023)Schrab, Kim, Albert, Laurent, Guedj, and Gretton]{jmlr_mmdagg}
A.~Schrab, I.~Kim, M.~Albert, B.~Laurent, B.~Guedj, and A.~Gretton.
\newblock Mmd aggregated two-sample test.
\newblock \emph{Journal of Machine Learning Research}, 24\penalty0 (194):\penalty0 1--81, 2023.
\newblock URL \url{http://jmlr.org/papers/v24/21-1289.html}.

\bibitem[Shlenskii et~al.(2026)Shlenskii, Gushchin, Novitskiy, Dylov, and Korotin]{shlenskii_midpoint_2026}
D.~Shlenskii, N.~Gushchin, L.~Novitskiy, D.~V. Dylov, and A.~Korotin.
\newblock Midpoint {Generative} {Models}, June 2026.
\newblock URL \url{http://arxiv.org/abs/2605.29920}.
\newblock arXiv:2605.29920 [cs.LG].

\bibitem[Song et~al.(2020)Song, Sohl-Dickstein, Kingma, Kumar, Ermon, and Poole]{song_score-based_2020}
Y.~Song, J.~Sohl-Dickstein, D.~P. Kingma, A.~Kumar, S.~Ermon, and B.~Poole.
\newblock Score-{Based} {Generative} {Modeling} through {Stochastic} {Differential} {Equations}.
\newblock Oct. 2020.
\newblock URL \url{https://openreview.net/forum?id=PxTIG12RRHS&utm_campaign=NLP%20News&utm_medium=email&utm_source=Revue%20newsletter}.

\bibitem[Song et~al.(2021)Song, Durkan, Murray, and Ermon]{song_maximum_2021}
Y.~Song, C.~Durkan, I.~Murray, and S.~Ermon.
\newblock Maximum {Likelihood} {Training} of {Score}-{Based} {Diffusion} {Models}.
\newblock In \emph{Advances in {Neural} {Information} {Processing} {Systems}}, volume~34, pages 1415--1428. Curran Associates, Inc., 2021.
\newblock URL \url{https://proceedings.neurips.cc/paper/2021/hash/0a9fdbb17feb6ccb7ec405cfb85222c4-Abstract.html}.

\bibitem[Sutherland et~al.(2021)Sutherland, Tung, Strathmann, De, Ramdas, Smola, and Gretton]{sutherland_generative_2021}
D.~J. Sutherland, H.-Y. Tung, H.~Strathmann, S.~De, A.~Ramdas, A.~Smola, and A.~Gretton.
\newblock Generative {Models} and {Model} {Criticism} via {Optimized} {Maximum} {Mean} {Discrepancy}.
\newblock Mar. 2021.
\newblock URL \url{https://openreview.net/forum?id=HJWHIKqgl}.

\bibitem[Verdú(2010)]{verdu_mismatched_2010}
S.~Verdú.
\newblock Mismatched {Estimation} and {Relative} {Entropy}.
\newblock \emph{IEEE Transactions on Information Theory}, 56\penalty0 (8):\penalty0 3712--3720, Aug. 2010.
\newblock ISSN 1557-9654.
\newblock \doi{10.1109/TIT.2010.2050800}.
\newblock URL \url{https://ieeexplore.ieee.org/document/5508632}.

\bibitem[Wang et~al.(2026)Wang, Wang, Liu, and Suzuki]{wang_zero-flow_2026}
Y.~Wang, L.~Wang, S.~Liu, and T.~Suzuki.
\newblock Zero-{Flow} {Two}-{Sample} {Tests}, July 2026.
\newblock URL \url{http://arxiv.org/abs/2607.21542}.
\newblock arXiv:2607.21542 [cs.LG] version: 1.

\bibitem[Watson et~al.(2023)Watson, Juergens, Bennett, Trippe, Yim, Eisenach, Ahern, Borst, Ragotte, Milles, Wicky, Hanikel, Pellock, Courbet, Sheffler, Wang, Venkatesh, Sappington, Torres, Lauko, De~Bortoli, Mathieu, Ovchinnikov, Barzilay, Jaakkola, DiMaio, Baek, and Baker]{watson_novo_2023}
J.~L. Watson, D.~Juergens, N.~R. Bennett, B.~L. Trippe, J.~Yim, H.~E. Eisenach, W.~Ahern, A.~J. Borst, R.~J. Ragotte, L.~F. Milles, B.~I.~M. Wicky, N.~Hanikel, S.~J. Pellock, A.~Courbet, W.~Sheffler, J.~Wang, P.~Venkatesh, I.~Sappington, S.~V. Torres, A.~Lauko, V.~De~Bortoli, E.~Mathieu, S.~Ovchinnikov, R.~Barzilay, T.~S. Jaakkola, F.~DiMaio, M.~Baek, and D.~Baker.
\newblock De novo design of protein structure and function with {RFdiffusion}.
\newblock \emph{Nature}, 620\penalty0 (7976):\penalty0 1089--1100, Aug. 2023.
\newblock ISSN 1476-4687.
\newblock \doi{10.1038/s41586-023-06415-8}.

\bibitem[Xiao et~al.(2017)Xiao, Rasul, and Vollgraf]{xiao2017_fashionmnist}
H.~Xiao, K.~Rasul, and R.~Vollgraf.
\newblock Fashion-mnist: a novel image dataset for benchmarking machine learning algorithms, 2017.

\bibitem[Yang et~al.(2023)Yang, Shi, Wei, Liu, Zhao, Ke, Pfister, and Ni]{medmnistv2}
J.~Yang, R.~Shi, D.~Wei, Z.~Liu, L.~Zhao, B.~Ke, H.~Pfister, and B.~Ni.
\newblock Medmnist v2-a large-scale lightweight benchmark for 2d and 3d biomedical image classification.
\newblock \emph{Scientific Data}, 10\penalty0 (1):\penalty0 41, 2023.

\bibitem[Zhang et~al.(2024)Zhang, Zhou, Lu, Guo, Wang, Shen, and Qu]{zhang_emergence_2024}
H.~Zhang, J.~Zhou, Y.~Lu, M.~Guo, P.~Wang, L.~Shen, and Q.~Qu.
\newblock The {Emergence} of {Reproducibility} and {Consistency} in {Diffusion} {Models}.
\newblock In \emph{Proceedings of the 41st {International} {Conference} on {Machine} {Learning}}, pages 60558--60590. PMLR, July 2024.
\newblock URL \url{https://proceedings.mlr.press/v235/zhang24cn.html}.

\end{thebibliography}

\end{document}